\definecolor{Gray}{gray}{0.9}
\theoremstyle{plain}
\newtheorem{theorem}{Theorem}[section]
\newtheorem{proposition}[theorem]{Proposition}
\newtheorem{lemma}[theorem]{Lemma}
\theoremstyle{definition}
\newtheorem{definition}[theorem]{Definition}
\newtheorem{assumption}[theorem]{Assumption}
\theoremstyle{remark}
\def\1{\bm{1}}
\def\real{{\mathbb{R}}}
\newcommand{\DSf}{\mathsf{D}}
\newcommand{\BCal}{\mathscr{B}}
\newcommand{\FCal}{\mathscr{F}}
\newcommand{\HCal}{\mathscr{H}}
\newcommand{\MCal}{\mathscr{M}}
\newcommand{\GCal}{\mathscr{G}}
\newcommand{\RCal}{\mathscr{R}}
\newcommand{\SCal}{\mathscr{S}}
\newcommand{\TCal}{\mathscr{T}}
\newcommand{\UCal}{\mathscr{U}}
\newcommand{\XCal}{\mathscr{X}}
\newcommand{\YCal}{\mathscr{Y}}
\newcommand{\ZCal}{\mathscr{Z}}
\newcommand{\RF}{\mathfrak{R}}
\newcommand{\AC}{\mathcal{A}}
\newcommand{\FC}{\mathcal{F}}
\newcommand{\NC}{\mathcal{N}}
\DeclareMathAlphabet{\mathsfit}{\encodingdefault}{\sfdefault}{m}{sl}
\SetMathAlphabet{\mathsfit}{bold}{\encodingdefault}{\sfdefault}{bx}{n}
\def\sN{{\mathbb{N}}}
\newcommand{\R}{\mathbb{R}}
\DeclareMathOperator*{\argmax}{arg\,max}
\DeclareMathOperator*{\argmin}{arg\,min}
\def\drm{{\mathrm{d}}}
\title{Generalization Properties of Retrieval-based Models}
\author{Soumya Basu*}
\author{Ankit Singh Rawat*}
\author{Manzil Zaheer*}
\affil{Google LLC, USA \\
{\small \texttt{\{basusoumya,ankitsrawat,manzilzaheer\}@google.com}}
}
\date{}
\newcommand\blfootnote[1]{%
  \begingroup
  \renewcommand\thefootnote{}\footnote{#1}%
  \addtocounter{footnote}{-1}%
  \endgroup
}
\begin{document}

\maketitle

\blfootnote{* Equal contribution in alphabetical order}
\vspace{-1cm}

\vspace{-3mm}
\begin{abstract}
Many modern high-performing machine learning models such as GPT-3 primarily rely on scaling up models, e.g., transformer networks. Simultaneously, a parallel line of work aims to improve the model performance by augmenting an input instance with other (labeled) instances during inference. Examples of such augmentations include task-specific prompts and similar examples retrieved from the training data by a nonparametric component. Remarkably, retrieval-based methods have enjoyed success on a wide range of problems, ranging from standard natural language processing and vision tasks to protein folding, as demonstrated by many recent efforts, including WebGPT and AlphaFold. Despite growing literature showcasing the promise of these models, the theoretical underpinning for such models remains underexplored. In this paper, we present a formal treatment of retrieval-based models to characterize their generalization ability. In particular, we focus on two classes of retrieval-based classification approaches: First, we analyze a local learning framework that employs an explicit {\em local empirical risk minimization} based on retrieved examples for each input instance. Interestingly, we show that breaking down the underlying learning task into local sub-tasks enables the model to employ a \emph{low complexity} parametric component to ensure good overall accuracy. The second class of retrieval-based approaches we explore learns a global model using kernel methods to directly map an input instance and retrieved examples to a prediction, without explicitly solving a local learning task.
\end{abstract}

\section{Introduction}
\label{sec:intro}

As our world is complex, we need expressive machine learning models to make high accuracy predictions on real world problems.
There are multiple ways to increase expressiveness of a machine learning model. 
A popular way is to homogeneously scale the size of a parametric model, such as neural networks, which has been behind many recent high-performance models such as GPT-3~\citep{brown2020language} and ViT~\citep{dosovitskiy2021an}. 
Their performance (accuracy) %
exhibits a monotonic behavior with increasing model size, as demonstrated by %
``scaling laws''~\citep{kaplan2020scaling}.
Such large models, however, %
have their own 
limitations, including high computation cost, catastrophic forgeting (hard to adapt to changing data), lack of provenance, and explanability.
Classical instance-based models \cite{fix1989discriminatory}, %
on the other hand, offer many %
desirable properties by design --- efficient data structures, incremental learning (easy addition and deletion of knowledge), and some provenance for its prediction based on the nearest neighbors w.r.t. the input. However, these models often suffer from weaker empirical performance as compared to deep parametric models. 

Increasingly, a middle ground combining the two paradigms and retaining the best of both worlds 
is becoming popular across various domains, ranging from natural language
\citep{das2021cbr, wang_more_valuable, liu-etal-2022-makes, izacard2022few},
to vision~\citep{liu2015matching,liu2019large,tombird2021, long2022retrieval},
to reinforcement learning~\citep{blundell2016model,pritzel2017neural,ritter2020rapid}
, to even protein structure predictions \citep{cramer2021alphafold2}
. 
In such approaches, given a test input, one first retrieves relevant entries from a data index and then processes the retrieved entries along with the test input to make the final predictions using a machine learning model.
This process is visualized in Figure~\ref{fig:fig3}.
For example, in semantic parsing, models that augment a parametric seq2seq model with similar examples have not only outperformed much larger models but also are more robust to changes in data \citep{das2021cbr}.

\begin{figure*}
\vspace{-5mm}
\centering
\qquad
  \begin{subfigure}[b]{0.3\linewidth}
  \centering
  \includegraphics[page=1,width=\textwidth,trim={0 14cm 19cm 0},clip]{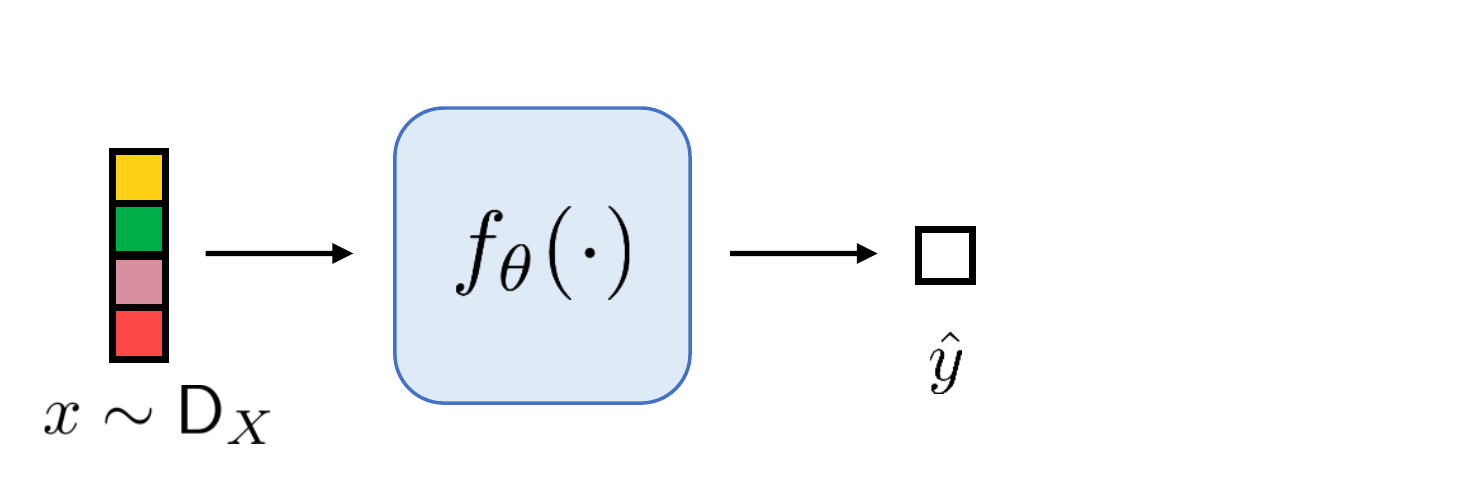}
  \scriptsize Parametric
  \includegraphics[page=2,width=\textwidth,trim={0 125mm 19cm 0},clip]{fig/local_solve_all.pdf}
  \scriptsize Nonparametric
  \caption{Classical learning setups}
  \label{fig:fig2}
  \end{subfigure}
  \qquad
  \begin{subfigure}[b]{0.45\linewidth}
    \centering
    \includegraphics[page=3,width=\textwidth,trim={0 6cm 89mm 0},clip]{fig/local_solve_all.pdf}
    \caption{Modern retrieval-based setup}
    \label{fig:fig3}
  \end{subfigure}
  \caption{An illustration of a retrieval-based classification model. Given an input instance $x$, similar to an instance-based model, it retrieves similar (labeled) examples $\RCal^{x} = \{ (x'_j, y'_j)\}_j$ from training data. Subsequently, it processes (potentially via a nonparametric method) input instance along with the retrieved examples to make the final prediction $\hat{y} = f(x, \RCal^x)$.}
  \label{fig:my_label}
\end{figure*}

While classical learning setups (cf.~Figure~\ref{fig:fig2}) have been studied extensively over decades, even basic properties and trade-offs pertaining to retrieval-based models (cf.~Figure~\ref{fig:fig3}), despite their aforementioned remarkable successes, remain highly under-explored. Most of the existing efforts on retrieval-based machine learning models solely focus on developing end-to-end %
\emph{domain-specific} models, without identifying the key dataset properties or structures that are critical in realizing performance gains by such models. Furthermore, at first glance, due to the highly dependent nature of an input and the associated retrieved set, direct application of existing statistical learning techniques does not appear as straightforward. This prompts the natural question: \textit{What should be the right theoretical framework that can help rigorously showcase the value of the retrieved set in ensuring superior performance of modern retrieval-based models?}

In this paper, we take the first step %
towards answering this question, while focusing on the classification setting (Sec.~\ref{sec:setup:task}).
We begin with the hypothesis that the model might be using the retrieved set to do local learning implicitly and then adapt its predictions to the neighborhood of the test point.
This idea is inspired from~\citet{bottou_local}.
Such local learning is potentially beneficial in cases where the underlying task has a local structure, where a much simpler function class suffices to explain the data in a given local neighborhood but overall the data can be complex (formally defined in Sec.~\ref{sec:setup:data}).
For instance looking at a few answers at Stackoverflow even if not for same problem may help us solve our issue much faster than understanding the whole system. 
We try to formally show this effect. 

We begin by analyzing an explicit local learning algorithm: For each test input, (1) we retrieve a few training examples located in the vicinity of the test input, (2) train a local model by %
performing empirical risk minimization (ERM) %
with only these retrieved examples -- \textit{local ERM}; and (3) apply the resulting local model to make prediction on the test input. For the aforementioned retrieval-based local ERM, we derive finite sample generalization bounds that highlight a trade-off between the %
complexity of the underlying function class and size of neighborhood where local structure of the data distribution holds %
in Sec.~\ref{sec:local}.
Under this assumption of local regularity, we show that by using a much simpler function class for the local model, we can achieve a similar loss/error to that of a complex global model (Thm.~\ref{thm:local}).
Thus, we show that breaking down the underlying learning task into local sub-tasks enables the model to employ a \emph{low complexity} parametric component to ensure good overall accuracy.
Note that the local ERM setup is reminiscent of semiparametric polynomial regression~\citep{fan2018local} in statistics, which is a special case of our setup. However, %
the semiparametric polynomial regression have been only analyzed asymptotically under mean squared error loss~\citep{ruppert1994local} and  its treatment %
under a more general loss is unexplored. %

We acknowledge that such local learning cannot be the complete picture behind the effectiveness of retrieval-based models. 
As noted in \citet{zakai2008local}, there always exists a model with global component that is more ``preferable'' to a local-only model.
In Sec.~\ref{sec:glocal}, we extend %
local ERM %
to a two-stage %
setup: First learn a global representation using entire dateset, and then utilize the representation at the test time while solving the local ERM %
as previously defined. This enables the local learning to benefit from good quality global representations, especially in sparse data regions.

Finally, we move beyond explicit local learning to a setting that resembles more closely the empirically successful systems such as REINA, WebGPT, and AlphaFold:
A model that directly learns to predict from the input instance and associated retrieved similar examples end-to-end. Towards this, we take a preliminary step in Sec.~\ref{sec:kernel} by studying a novel formulation of classification over an extended feature space (to account for the retrieved examples) by using kernel methods~\citep{deshmukh19}.%

To summarize, our main contributions include:
1) Setting up a formal framework for classification under local regularity; 2) Finite sample analysis of explicit local learning framework; 3) Extending the analysis to incorporate a globally learnt model; and 4) Providing the first rigorous treatment of an end-to-end retrieval-based models to understand its generalization by using kernel-based learning. %

\section{Problem setup}
\label{sec:setup}

We first provide a brief background on (multiclass) classification along with necessary notations. Subsequently, we discuss the problem setup considered in this paper, which deals with designing retrieval-based classification models for the data distributions with local regularity. %

\subsection{Multiclass classification}
\label{sec:setup:task}
In this work, we restrict ourselves to (multi-class) classification setting, with access to $n$ training examples $\SCal = \{(x_i, y_i)\}_{i \in [n]} \subset \XCal \times \YCal,$ sampled i.i.d. from the data distribution $\DSf := \DSf_{X, Y}$. Given $\SCal$, one is interested in learning a classifier $h : \XCal \to \YCal$ that minimizes miss-classification error. %
It is common to define a classifier via a scorer $f: x \mapsto \big(f_1(x),\ldots, f_{|\YCal|}(x)\big) \in \R^{|\YCal|}$ that assigns a score to each class in $\YCal$ for an instance $x$. %
For a scorer $f$, the corresponding classifier takes the %
form:
$h_f(x) = \argmax_{y \in \YCal} f_y(x).$
Furthermore, %
we define the {\em margin} of %
$f$ %
at a given label $y \in \YCal$ as
\begin{align}
\label{eq:margin}
\gamma_f(x,y) = f_y(x) - \max\nolimits_{y'\neq y} f_{y'}(x).
\end{align}
Let $\mathbb{P}_{\DSf}(A)  := \mathbb{E}_{(X, Y) \sim \DSf}\left[\bm{1}_{\{A\}}\right]$ for any random variable $A$. Given $\SCal$ and a set of scorers  $\FCal \subseteq \{f: \XCal \to \R^{|\YCal|}\}$,
learning a model implies finding a scorer in $\FCal$ that minimizes miss-classification error:
\begin{align}
\label{eq:mce-minimization}
f^{\ast} = \argmin\nolimits_{f \in \FCal} 
\mathbb{P}_{\DSf}(h_f(X) \neq Y).
\end{align}
One typically employs a surrogate loss~\citep{bartlett2006convexity} $\ell$ for the miss-classification loss $%
\mathbbm{1}_{\{h_f(X) \neq Y\}}$ and aims minimize the associated risk: %
\begin{align}
    R_{\ell}(f) = \mathbb{E}_{(X, Y) \sim \DSf}\big[\ell\big(f(X), Y\big) \big]. 
\end{align}
Since the underlying data distribution $\DSf$ is only accessible via examples in $\SCal$, one learns a good scorer by minimizing the (global) {\em empirical} risk over a large function class $\FCal^{\rm global}$ as follows:
\begin{align}
\hat{f} = \argmin\nolimits_{f \in \FCal^{\rm global}} \widehat{R}_{\ell}(f) := \frac{1}{n}\sum\nolimits_{i \in [n]} \ell\big(f(x_i), y_i\big).
\end{align}

\subsection{Data distributions with local regularity}
\label{sec:setup:data}
In this work, we assume that the underlying data distribution $\DSf$ follows a local-regularity structure, where a much simpler (parametric) function class suffices to explain the data %
in each local neighborhood. Formally, for $x \in \XCal$ and $r > 0$, we define %
 $
\BCal^{x,r} := \{x' \in \XCal : \drm(x, x') \leq r\},
 $
an $r$-radius ball around $x$, w.r.t. a metric 
$\drm: \XCal \times \XCal \to \real$. 
Let $\DSf^{x, r}$ be the data distribution %
restricted to $\BCal^{x,r}$, i.e.,
\begin{align}
\DSf^{x, r}(A) = {\DSf(A)}/{\DSf\left(\BCal^{x,r} \times \YCal\right)}\quad A \subseteq \BCal^{x,r} \times \YCal. 
\end{align}
Now, the {\em local regularity condition} of the data distribution ensures that, for each $x \in \XCal$, there exists a low-complexity function class $\FCal^{x}$, with $|\FCal^{x}| \ll |\FCal^{\rm global}|$,
that approximates the Bayes optimal (w.r.t. $\FCal^{\rm global}$) for the {\em local classification problem} defined by $\DSf^{x, r}$. That is, for a given $\varepsilon_{\XCal} > 0$, we have\footnote{As stated, we require the local-regularity condition to hold for each $x$. This can be relaxed to hold with high probability with increased complexity of exposition.} 
\begin{align}
\label{eq:bayes-local-parametric}
&\min\nolimits_{f \in \FCal^{x}} \mathbb{E}_{\DSf^{x, r}}[\ell(f(X), Y)] \leq \min\nolimits_{f \in \FCal^{\rm global}} \mathbb{E}_{\DSf^{x, r}}[\ell(f(X), Y)]  + \varepsilon_{\XCal}, \quad \forall~x \in \XCal.
\end{align}
{As an example, if $\FCal^{\rm global}$ is linear in $\mathbb{R}^{d}$ (possibly dense) with bounded norm $\tau$, then $\FCal^{x}$ can be a simpler function class such as linear in $\mathbb{R}^{d}$ with sparsity $k \ll d$ and with bounded norm $\tau_x \leq \tau$.}

\subsection{Retrieval-based classification model}
\label{sec:r-b-c}

This work focuses on retrieval-based methods that can leverage the aforementioned local regularity structure of the data distribution. %
In particular, we focus on two such approaches: 

{\bf Local empirical risk minimization.}~Given a (test) instance $x$, the {\em local} empirical risk minimization (ERM) approach first retrieves a neighboring set $\RCal^{x} = \{(x'_j, y'_j)\} \subseteq \SCal$. Subsequently, it identifies a (local) scorer $\hat{f}^x$ from a `simple' function class $\FCal^{\rm loc} \subset \{f : \XCal \to \R^{|\YCal|}\}$ as follows:
\begin{align}
\label{eq:local-erm}
 \hat{f}^{x} = \argmin\nolimits_{f \in \FCal^{\rm loc}} \hat{R}^{x}_{\ell}(f);\quad  \hat{R}^{x}_{\ell}(f) := \frac{1}{|\RCal^x|}\sum\nolimits_{(x', y') \in \RCal^x } \ell\big( f(x'), y'\big).
\end{align}
Here, $\RCal^{x}$ corresponds to the samples in $\SCal$ that belong to $\BCal^{x,r}$; hence, it follows the distribution $\DSf^{x,r}$. We assume there exists $N(r, \delta)$  such that for any $r \geq 0$, and $\delta > 0$, 
\begin{align}
\label{eq:bound_rx}
\mathbb{P}_{(X,Y)\sim \DSf}\big[|\RCal^X| < N(r, \delta)\big] \leq  \delta, \text{ and } \mathbb{P}_{(X,Y)\sim \DSf}\big[|\RCal^X| = 0\big] = 0.
\end{align}

Note that the local ERM approach requires solving a local learning task for each test instance. Such a local learning algorithms was introduced in \cite{bottou_local}. Another point worth mentioning here is that \eqref{eq:local-erm} employs the same function class $\FCal^{\rm loc}$ for each $x$, whereas the local regularity assumption (cf. \eqref{eq:bayes-local-parametric}) allows for an instance dependent function class $\FCal^{x}$. {We consider $\FCal^{\rm loc}$ that approximates $\cup_{x\in \XCal} \FCal^x$ closely. In particular, we assume that, for some $\varepsilon_{\rm loc} > 0$, we have
\begin{align}\label{eq:local-parametric}
&\min\nolimits_{f \in \FCal^{\rm loc}} \mathbb{E}_{\DSf^{x, r}}[\ell(f(X), Y)] \leq \min\nolimits_{f \in \FCal^{x}} \mathbb{E}_{\DSf^{x, r}}[\ell(f(X), Y)]  + \varepsilon_{\rm loc}, \quad \forall~x \in \XCal.
\end{align}
Continuing with the example following \eqref{eq:bayes-local-parametric}, where $\FCal^x$ is linear with sparsity $k \ll d$ and bounded norm $\tau_x$, one can take $\FCal^{\rm loc}$ to be linear with the same sparsity $k$ %
and bounded norm $\tau' < \sup_{x\in \XCal}\tau_x$.} 

{\bf Classification with extended feature space.} Another approach to leverage the retrieved neighboring labeled instances during classification is to directly learn a scorer that maps $x \times \RCal^{x} \in \XCal \times (\XCal \times \YCal)^{\star}$ to per-class scores. One can learn such a scorer over {\em extended} feature space $\XCal \times (\XCal \times \YCal)^{\star}$ as follows:
\begin{align}
\label{eq:ex-erm}
\hat{f}^{\rm ex} = \argmin\nolimits_{f \in \FCal^{\rm ex}} \hat{R}^{\rm ex}_{\ell}(f); \quad  \hat{R}^{\rm ex}_{\ell}(f) := \frac{1}{n} \sum\nolimits_{i \in [n]}\ell\big( f\big(x_i, \RCal^{x_i}\big), y_i),
\end{align}
where 
$\FCal^{\rm ex} \subset \big\{f: \XCal \times (\XCal \times \YCal)^{\star} \to \R^{|\YCal|} \big\}$ denotes a function class %
over the extended space. %
Unlike local ERM approach, \eqref{eq:ex-erm} learns a common function over extended space and does not require solving an optimization problem for each test instance.
That said, since %
$\FCal^{\rm ex}$ operates on the extended feature space, it can be significantly complex and computationally expensive to employ as compared to $\FCal^{\rm loc}$. 

Our goal is to develop a theoretical understanding of the generalization behavior of these two retrieval-based methods for classification with locally regular data distributions. We present our theoretical treatment of local ERM and classification with extended feature space in Sec.~\ref{sec:local} and \ref{sec:kernel}, respectively. %

\section{Local empirical risk minimization}
\label{sec:local}
Before presenting an excess risk bound for the local ERM method, %
we introduce various necessary definitions and assumptions that play a critical role in our analysis.
We say that a scorer $f$ is $L$-coordinate Lipschitz iff for all $y \in \YCal$ and $x_1, x_2 \in \XCal$, we have 
\begin{align*}
|f_y(x) - f_y(x')| \leq L \|x - x'\|_2.
\end{align*}
In this section, we restrict ourselves to the loss functions that act on the margin of a scorer (cf.~\eqref{eq:margin}), i.e., for any given example $(x,y)$ and any scorer $f$, we have $\ell(f(x), y))= \ell(\gamma_f(x,y)).$ 
In addition, we assume that, naturally, $\ell$ is a decreasing function of the margin.
Furthermore, we assume that $\ell$ is $L_{\ell}$-Lipschitz function, i.e., 
$|\ell(\gamma) - \ell(\gamma')| \leq L_{\ell}|\gamma - \gamma'|, \forall \gamma \geq \gamma'$.

Note that the local ERM selects a scorer from %
$\FCal^{\rm loc}$. At $x \in \XCal$, let $f^{x, \ast}$ denote the minimizer of the population version of the local loss, and $f^{\ast}$  the population risk minimizer for the global loss, i.e.,
\begin{align}
\label{eq:f-ast}
f^{x,\ast} = \argmin_{f \in \FCal^{\rm loc}} \mathbb{E}_{(X', Y') \sim \DSf^{x, r}}\Big[\ell\big( f(X'), Y'\big)\Big]~ \text{and} ~ f^{\ast} = \argmin_{f \in \FCal^{\rm global}} \mathbb{E}_{(X, Y) \sim \DSf}\Big[\ell\big( f(X), Y\big)\Big].
\end{align}

Given a distribution $\DSf$, we define the {\em weak margin condition}~\citep{Doring18} for a scorer $f$ as: %
\begin{definition}
\label{def:weak-margin}
A scorer $f$ satisfies $(\alpha, c)$-weak margin condition iff, for all $t\geq 0$,
$$\mathbb{P}_{(X,Y)\sim\DSf}(|\gamma_f(X, Y)| \leq t) \leq c\, t^{\alpha}.$$
\end{definition}
One of the key assumptions that we rely on is the existence of an underlying scorer $f^{\rm true}$ that explains the true labels, while ensuring the weak margin condition (cf.~Definition~\ref{def:weak-margin}). Here, we note that the true function $f^{\rm true}$ may neither lie in the function class $\FCal^{\rm global}$, nor in $\FCal^{\rm loc}$.

\begin{assumption}[True scorer function]\label{assum:truef}
There exists a scorer $f^{\rm true}$ such that for all, $(x, y) \in \XCal \times \YCal$, $f^{\rm true}$ generates the true label, i.e., $\gamma_{f^{\rm true}}(x, y) > 0$ and $|\RCal^X|\perp_{\DSf} \gamma_{f^{\rm true}}(X, Y)$. Furthermore, we assume $f^{\rm true}$ is $L_{\rm true}$-coordinate Lipschitz, and satisfies the $(\alpha_{\rm true}, c_{\rm true})$-weak margin condition.  
\end{assumption}

\subsection{Excess risk bound for local ERM}
\label{sec:loc-erm-analysis}

Now that we have introduced the required background and assumptions, we move to presenting our results on characterizing the generalization behavior of local ERM. In particular, we aim to bound 
\begin{align}
\label{eq:local-excess-risk}
\mathbb{E}_{(X, Y) \sim \DSf} \big[ \ell(\hat{f}^X(X), Y) - \ell({f}^*(X), Y) \big].    
\end{align}  
Note that in the above equation $\hat{f}^X$ (cf.~\eqref{eq:local-erm}) is a function of $\RCal^X$, and expectation over $\RCal^X$ is taken implicitly. Towards this, we first  obtain the following upper bound on \eqref{eq:local-excess-risk}.

\begin{lemma}
\label{lem:local}
The expected excess risk of the local ERM optimization $\hat{f}^X$ is bounded as 
\begin{equation*}
\resizebox{.99\textwidth}{!}{%
$\begin{aligned}
\mathbb{E}_{(X, Y) \sim \DSf} &\left[ \ell(\hat{f}^X(X), Y) - \ell({f}^*(X), Y) \right] \leq \underbrace{\mathbb{E}_{(X, Y) \sim \DSf} \Big[\mathbb{E}_{(X', Y') \sim \DSf^{X, r}}\big[\ell\big( f^{X,\ast}(X'), Y'\big) - \ell\big( f^{*}(X'), Y'\big)\big]\Big]}_{\text{\tiny Local vs Global Optimal Risk}}  \nonumber \\
& + \underbrace{\sum_{\FCal \in \{\FCal^{\rm global}, \FCal^{\rm loc}\}}\mathbb{E}_{(X, Y) \sim \DSf} \Big[\sup_{f\in \FCal}\big\vert\mathbb{E}_{(X', Y') \sim \DSf^{X, r}}\big[\ell\big( f(X'), Y'\big)\big] - \ell(f(X), Y)\big\vert \Big]}_{\text{\tiny Global and Local: Sample vs Retrieved Set Risk}} \nonumber \\
& + \underbrace{\mathbb{E}_{(X, Y) \sim \DSf} \Big[ \sup_{f\in \FCal^{\rm loc}}\Big\vert \mathbb{E}_{(X', Y') \sim \DSf^{X,r}}[\ell(f(X'), Y')] - \tfrac{1}{|{\RCal}^X|}\sum_{(x', y') \in {\RCal}^X } \ell\big( f(x'), y'\big)\Big\vert\Big]}_{\text{\tiny Generalization of Local ERM}} \nonumber \\
& + \underbrace{\mathbb{E}_{(X, Y) \sim \DSf} \Big[\Big\vert \mathbb{E}_{(X', Y') \sim \DSf^{X,r}}[\ell(f^{X, \ast}(X'), Y')] - \tfrac{1}{|{\RCal}^X|}\sum_{(x', y') \in {\RCal}^X } \ell\big( f^{X,\ast}(x'), y'\big)\Big\vert\Big]}_{\text{\tiny Central Absolute Moment of $f^{X,\ast}$}}.
\end{aligned}$%
}
\end{equation*}
\end{lemma}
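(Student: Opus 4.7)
The plan is to prove the lemma by a telescoping triangle-inequality argument that introduces a handful of carefully chosen intermediate quantities between $\ell(\hat{f}^X(X), Y)$ and $\ell(f^*(X), Y)$. Apart from the ERM optimality of $\hat{f}^X$ on the retrieved set, which yields $\hat{R}^X_\ell(\hat{f}^X) \leq \hat{R}^X_\ell(f^{X,\ast})$ because $f^{X,\ast} \in \FCal^{\rm loc}$, every step is a plain add-and-subtract followed by bounding by an absolute value or by a supremum over the relevant function class.

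Concretely, I would insert, in order, the quantities $\mathbb{E}_{\DSf^{X,r}}[\ell(\hat{f}^X(X'), Y')]$, $\hat{R}^X_\ell(\hat{f}^X)$, $\hat{R}^X_\ell(f^{X,\ast})$, $\mathbb{E}_{\DSf^{X,r}}[\ell(f^{X,\ast}(X'), Y')]$, and $\mathbb{E}_{\DSf^{X,r}}[\ell(f^*(X'), Y')]$, writing $\ell(\hat{f}^X(X), Y) - \ell(f^*(X), Y)$ as the sum of the six resulting differences. The third of these differences, $\hat{R}^X_\ell(\hat{f}^X) - \hat{R}^X_\ell(f^{X,\ast})$, is non-positive by ERM optimality and is simply dropped. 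Of the remaining five, the first and last (those involving $\ell(\hat{f}^X(X), Y)$ and $\ell(f^*(X), Y)$) each compare a single-point evaluation to an expectation over $\DSf^{X,r}$; bounding each by $\sup_{f \in \FCal}|\cdot|$ with $\FCal = \FCal^{\rm loc}$ (since $\hat{f}^X \in \FCal^{\rm loc}$) and $\FCal = \FCal^{\rm global}$ (since $f^* \in \FCal^{\rm global}$), respectively, and then taking expectation over $(X, Y) \sim \DSf$, produces exactly the two summands of the ``Sample vs Retrieved Set Risk'' term.

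The remaining three differences map one-to-one onto the remaining three summands. The second difference, $\mathbb{E}_{\DSf^{X,r}}[\ell(\hat{f}^X(X'), Y')] - \hat{R}^X_\ell(\hat{f}^X)$, is bounded by $\sup_{f \in \FCal^{\rm loc}}\bigl|\mathbb{E}_{\DSf^{X,r}}[\ell(f(X'), Y')] - \hat{R}^X_\ell(f)\bigr|$ and yields the ``Generalization of Local ERM'' term. The fourth difference, $\hat{R}^X_\ell(f^{X,\ast}) - \mathbb{E}_{\DSf^{X,r}}[\ell(f^{X,\ast}(X'), Y')]$, is bounded by the corresponding absolute value and gives the ``Central Absolute Moment of $f^{X,\ast}$'' term; crucially, no supremum is needed here because $f^{X,\ast}$ is a deterministic function of $X$ (not of $\RCal^X$), so only one function participates. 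The fifth difference, $\mathbb{E}_{\DSf^{X,r}}[\ell(f^{X,\ast}(X'), Y') - \ell(f^*(X'), Y')]$, is the signed ``Local vs Global Optimal Risk'' term and is kept as is.

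There is no serious obstacle beyond bookkeeping: one has to track what is random (the sample $\RCal^X$ and, through it, $\hat{f}^X$, as well as the draw of $(X, Y)$) versus what is a fixed function of $X$ ($f^{X,\ast}$ and $f^*$), and then apply linearity of expectation after the telescoping is complete. The only subtle point worth stating explicitly when writing the proof is precisely this distinction between $\hat{f}^X$ and $f^{X,\ast}$: the former depends on the retrieved set and therefore requires the uniform bound over $\FCal^{\rm loc}$ in term three, while the latter is the population minimizer at $X$ and hence contributes only a single-function central-absolute-moment term in term four.
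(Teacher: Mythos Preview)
Your proposal is correct and follows essentially the same approach as the paper's proof: both telescope through the same intermediate quantities, drop $\hat{R}^X_\ell(\hat{f}^X) - \hat{R}^X_\ell(f^{X,\ast})$ via ERM optimality, and bound the remaining differences by suprema over $\FCal^{\rm loc}$ or $\FCal^{\rm global}$ (or by an absolute value for the $f^{X,\ast}$ term). The only difference is cosmetic ordering---the paper inserts the intermediate terms incrementally over several displayed steps, whereas you lay out the full six-term decomposition at once---but the logical content is identical.
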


We delegate the proof of Lem.~\ref{lem:local} to Appendix~\ref{appen:local}. Now, as a strategy to obtain desired excess risk bounds, we separately bound the four terms appearing in Lem.~\ref{lem:local}. Note that the first term captures the expected difference between the loss incurred by global population optima $f^{\ast} \in \FCal^{\rm global}$ and the local population optima  $f^{x,\ast} \in \FCal^{\rm loc}$ in a local region around test instance $x$.%
The second term aims to capture the loss for a scorer evaluated at %
$x$ vs. the expected value of the loss for the scorer at a random instance sampled in the local region of $x$ based on $\DSf^{x,r}$. The third term corresponds to the standard `generalization error' for the local ERM  with respect to the local data distribution $\DSf^{X,r}$, whereas the fourth term is the empirical variation of the true local function $f^{X,*}$  around its true mean under $\DSf^{X,r}$.

Let the coordinate-Lipschitz constants for scorers in $\FCal^{\rm loc}$ and $\FCal^{\rm global}$ be $L_{\rm loc}$ and $L_{\rm global}$, respectively.  
We define a function class $\GCal(X,Y) = \{(x', y') \mapsto \ell(\gamma_f (\cdot, \cdot))  - \ell(\gamma_f(X, Y)): f \in \FCal^{\rm loc}\}$. %
{Here, by subtracting $\ell\big(f(X), Y\big)$ from the loss, we center the losses on $\RCal^X$ for any function $f \in \FCal^{\rm loc}$, and obtain a tighter bound by utilizing the local nature of the distribution $\DSf^{X,r}$.}  For any $L>0$, for notational convenience let us define  
\begin{align}\label{eq:mcal}
\MCal_{r}(L; \ell, f_{\rm true}, \FCal) 
= 2L_{\ell} \Big( L r  + \big(\max\{L r,
2\|\FCal\|_{\infty}\} - Lr \big)  c_{\rm true} 
\big(2 L_{\rm true} r \big)^{\alpha_{\rm true}}\Big).
\end{align}
 
Now, by controlling different terms appearing in the bound in Lem.~\ref{lem:local}, we obtain the following.%
\begin{theorem}
\label{thm:local}
For any $\delta >0$, the expected excess risk of the local ERM solution $\hat{f}^X$   is bounded as
\begin{equation*}
\resizebox{.99\textwidth}{!}{%
$\begin{aligned}
&\mathbb{E}_{(X, Y) \sim \DSf} \left[ \ell(\hat{f}^X(X), Y) - \ell({f}^*(X), Y) \right]\\
&\leq  \underbrace{(\varepsilon_{\XCal} + \varepsilon_{\rm loc})}_{\text{Local vs Global Optimal loss}~\mathrm{(I)}} +  \underbrace{\MCal_r(L_{\rm loc}; \ell, f_{\rm true}, \FCal^{\rm loc}) + \MCal_r(L_{\rm global}; \ell, f_{\rm true}, \FCal^{\rm global})}_{\text{Global and Local: Sample vs Retrieved Set Risk}~\mathrm{(II)}} \\
& + 2 \underbrace{\mathbb{E}_{(X, Y) \sim \DSf}\Big[{\mathfrak{R}}_{\RCal^X}\big(\GCal(X,Y)\big)\Big]
+ 5 \MCal_r(L_{\rm loc}; \ell, f_{\rm true}, \FCal^{\rm loc}) \sqrt{\frac{2\ln(4/\delta)}{N(r, \delta)}} 
+ 4 \delta L_{\ell} \|\FCal^{\rm loc}\|_{\infty}(2+\sqrt{2\ln(4/\delta)})}_{\text{Generalization of Local ERM}~~\mathrm{(III)}},
\end{aligned}$%
}
\end{equation*}
where ${\mathfrak{R}}_{\RCal^X}\big(\GCal(X,Y)\big)$ is the empirical Rademacher complexity of $\GCal(X,Y)$. 
\end{theorem}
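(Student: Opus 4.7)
The plan is to start from the four-term decomposition in Lemma~\ref{lem:local} and bound each summand separately, packaging the resulting constants into the $\MCal_r$ notation from \eqref{eq:mcal}. Since Lemma~\ref{lem:local} has already done the hard conceptual decomposition, what remains is essentially four technical estimates, each corresponding to one labeled term in the final bound.

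First I would dispatch the ``Local vs Global Optimal Risk'' term by invoking the local-regularity assumption \eqref{eq:bayes-local-parametric}, which gives a gap of at most $\varepsilon_{\XCal}$ between the $\FCal^{\rm global}$-optimum and the $\FCal^{x}$-optimum on the local distribution $\DSf^{x,r}$, and then chaining with \eqref{eq:local-parametric} to pass from $\FCal^{x}$ to $\FCal^{\rm loc}$ at an additional cost of $\varepsilon_{\rm loc}$. Because $f^{x,\ast}$ is defined as the minimizer in $\FCal^{\rm loc}$ (cf.~\eqref{eq:f-ast}), taking expectations over $(X,Y)\sim\DSf$ yields a clean bound of $\varepsilon_{\XCal}+\varepsilon_{\rm loc}$.

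For the ``Sample vs Retrieved Set Risk'' term, I would fix any $L$-coordinate-Lipschitz scorer $f\in\FCal$ (with $\FCal$ taken in turn to be $\FCal^{\rm loc}$ and $\FCal^{\rm global}$) and bound $|\mathbb{E}_{(X',Y')\sim\DSf^{X,r}}[\ell(f(X'),Y')]-\ell(f(X),Y)|$ by splitting on the event $\{Y'=Y\}$. When labels agree, the Lipschitz property for $f$ gives $|\gamma_f(X',Y)-\gamma_f(X,Y)|\leq 2L\|X-X'\|_2\leq 2Lr$, which combined with $L_\ell$-Lipschitzness of $\ell$ yields a $2L_\ell Lr$ contribution. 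When labels disagree, the loss gap is at most $2L_\ell\|\FCal\|_\infty$, but this regime is rare: the contrapositive of $L_{\rm true}$-Lipschitzness of $f^{\rm true}$ shows that whenever $|\gamma_{f^{\rm true}}(X,Y)|>2L_{\rm true}r$ every point in $\BCal^{X,r}$ shares label $Y$, so Assumption~\ref{assum:truef} bounds $\mathbb{P}(Y'\neq Y)$ by $c_{\rm true}(2L_{\rm true}r)^{\alpha_{\rm true}}$. Combining the two regimes and taking the supremum over $f$ (which is benign because the bound was $f$-uniform) reproduces exactly the $\MCal_r(L;\ell,f_{\rm true},\FCal)$ formula, instantiated once with $L=L_{\rm loc}$ and once with $L=L_{\rm global}$.

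The remaining two terms (``Generalization of Local ERM'' and ``Central Absolute Moment of $f^{X,\ast}$'') I would handle together by a conditional uniform-convergence argument. On the event $\{|\RCal^X|\geq N(r,\delta)\}$, which holds with probability at least $1-\delta$ by \eqref{eq:bound_rx}, I would apply symmetrization to the centered loss class $\GCal(X,Y)$ to replace the supremum by $2\mathbb{E}[\mathfrak{R}_{\RCal^X}(\GCal(X,Y))]$, then use McDiarmid's inequality for the concentration piece. The crucial observation is that the functions in $\GCal(X,Y)$ have range bounded by $\MCal_r(L_{\rm loc};\ell,f_{\rm true},\FCal^{\rm loc})$ by exactly the analysis used for term (II) (since the losses are centered at $\ell(\gamma_f(X,Y))$ and the retrieved points lie in $\BCal^{X,r}$); this range controls the bounded-differences constant and yields the $\MCal_r\sqrt{2\ln(4/\delta)/N(r,\delta)}$ factor. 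The fourth Lemma term is the same estimate restricted to the single function $f^{X,\ast}$, so ordinary Hoeffding suffices and combining with the uniform bound produces the numerical constant $5$. On the complementary event $\{|\RCal^X|<N(r,\delta)\}$ I would use the trivial bound $2L_\ell\|\FCal^{\rm loc}\|_\infty$, which, after multiplying by the tail probability $\delta$ and combining with the corresponding tail of the McDiarmid deviation, gives the $4\delta L_\ell\|\FCal^{\rm loc}\|_\infty(2+\sqrt{2\ln(4/\delta)})$ penalty.

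The hard part will be the second term, specifically the transfer of the weak-margin property of $f^{\rm true}$ from the global distribution $\DSf$ to the conditional distribution $\DSf^{X,r}$: the randomness in choosing $X$ and in sampling $X'\in\BCal^{X,r}$ must be decoupled from the randomness of $f^{\rm true}$'s margin, which is precisely where the independence clause $|\RCal^X|\perp_{\DSf}\gamma_{f^{\rm true}}(X,Y)$ from Assumption~\ref{assum:truef} is invoked. Once this independence is used to push the weak-margin condition through the conditioning, the rest of the argument reduces to careful bookkeeping of constants.
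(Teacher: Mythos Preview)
Your proposal is correct and follows essentially the same route as the paper: start from the four-term decomposition of Lemma~\ref{lem:local}, handle term~(I) via \eqref{eq:bayes-local-parametric}--\eqref{eq:local-parametric}, handle term~(II) by a label-agreement case split driven by the Lipschitzness of $f^{\rm true}$ and the weak-margin condition, and handle terms~(III)--(IV) by Rademacher symmetrization plus bounded-differences concentration on the centered class $\GCal(X,Y)$, with the trivial bound on the low-probability event $\{|\RCal^X|<N(r,\delta)\}$. One small correction to your closing paragraph: the independence clause $|\RCal^X|\perp_{\DSf}\gamma_{f^{\rm true}}(X,Y)$ is \emph{not} used in term~(II) to ``push the weak-margin condition through the conditioning'' (there the margin is only evaluated at the center point $(X,Y)$, so no transfer is needed); it is used in term~(III) to factor $\mathbb{E}\big[\GCal_{\max}((X,Y);\RCal^X)\cdot\sqrt{1/|\RCal^X|}\,\big]$ into a product of expectations, since the pointwise bound on $\GCal_{\max}$ depends only on the indicator $\mathbbm{1}\{\gamma_{f^{\rm true}}(X,Y)\le 2L_{\rm true}r\}$.
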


{Before discussing the implications of the aforementioned excess risk bound, we instantiate $\FCal^{\rm loc}$ with a few common function classes from the literature (see Appendix~\ref{appen:local} for the detailed proof of Thm.~\ref{thm:local}, and about the descriptions of these specific instances).

\textbf{Kernel-based classifiers.}~When $f_y(\cdot)$ belongs to a bounded RKHS with $\ell_{\infty}$ norm bound $B$ %
~\citep{zhang2004statistical}, for some universal constant $C > 0$ and any $\delta > 0$,
\begin{align*}
&\mathbb{E}_{(X, Y) \sim \DSf}
{\mathfrak{R}}_{\RCal^X}\big(\GCal(X,Y)\big)
\leq C\big( \sqrt{|\YCal|}  L_{\ell}B %
{\ln(n+1)^{3/2}}/{\sqrt{|N(r, \delta)|}} + 2\delta B \big).
\end{align*}

Similarly, when $f_y(\cdot)$ belongs to a bounded RKHS with $\ell_{2}$ norm bound $B$ 
~\citep{lei2019data}, for some universal constant $C' > 0$ and any $\delta > 0$,
\begin{align*}
&\mathbb{E}_{(X, Y) \sim \DSf}
{\mathfrak{R}}_{\RCal^X}\big(\GCal(X,Y)\big)
\leq C'\big( L_{\ell} B %
{\ln(n |\YCal|)^{3/2}}/{\sqrt{|N(r, \delta)|}} + 2\delta B \big).    
\end{align*}

\noindent\textbf{Feed-forward classifiers.}~Assume that $f_y(\cdot)$ is an $L$ layer feed-forward network with $1$-Lipschitz non-linearities %
~\citep{bartlett2017spectrally}. Let, for layers $l=1$ to $L$, the dimension of the weight matrix be $(d_{l} \times d_{l-1})$ with $d_L = |\YCal|$.
Also, let $b_{l}$ and $s_l$ be the $\ell_{2,1}$ norm and spectral norm upper bounds for layer $l$ weight matrix, respectively, with $b_{l}/ s_l \leq \kappa$.
We define  $d_{\max} =\max_{l\in [L]} d_l$ and let $\tilde{B} =  \max_{x\in \XCal}\|x\|_2 \prod_{l=1}^{L} s_l$. Then, for some universal constant $C'' > 0$ and any $\delta > 0$,
\begin{align*}
&\mathbb{E}_{(X, Y) \sim \DSf}
{\mathfrak{R}}_{\RCal^X}\big(\GCal(X,Y)\big)
\leq C''\big(L_{\ell} \tilde{B} \sqrt{\kappa} \ln(d_{\max}) L^{3/4} %
{\ln(L_{\ell} \tilde{B} \sqrt{n})^{3/2}}/{\sqrt{N(r,\delta)}}  + 2\delta \tilde{B} \big).    
\end{align*}
\vspace{-1mm}
}

\noindent\textbf{Implications of the excess risk bound.}~Our main result for local-ERM highlights the trade-offs in \emph{approximation} vs. \emph{generalization} as the retrieval radius $r$ varies. 
To further elaborate, note that the {approximation error} comprises two components, defined by $(\rm I)$ and $(\rm II)$ in Thm.~\ref{thm:local}. %
$\varepsilon_{\XCal}$ shows the gap in approximating the $r$-radius neighborhood around $X$ with a simple local function class $\FCal^{X}$ which vary with $X\in \XCal$. $\varepsilon_{\rm loc}$ shows the gap in approximating the union of the local function class  $\cup_{x\in \XCal}\FCal^{X}$ with a single function class $\FCal^{\rm loc}$ (possibly with smaller complexity) but while allowing for choosing a different optimizer $f^{X} \in \FCal^{\rm loc}$ for each $X\in \XCal$. As $r$ increases, both the terms $\varepsilon_{\XCal}$ and  $\varepsilon_{\rm loc}$ typically increase. For example, in approximating a polynomial function locally with linear function $\varepsilon_{\XCal}$ increases as the radius increases. Thus, $(\rm I)$ increases with $r$. Note that the second component of the approximation error $({\rm II})$ %
corresponds to the difference of risk for the sample $X$ and the retrieved set $\RCal^X$ for $\FCal^{\rm global}$ and $\FCal^{\rm loc}$, i.e., $\MCal_r(L_{\rm global}; \ell, f_{\rm true}, \FCal^{\rm global})$ and $\MCal_r(L_{\rm loc}; \ell, f_{\rm true}, \FCal^{\rm loc})$. As we increase $r$, Eq.~\eqref{eq:mcal} suggests that the terms increase as $O({\rm poly}(r))$. 
    
On the other hand, the generalization error ({\rm III}) depends on the size of the retrieved set $\RCal^X$ and the Rademacher complexity of $\GCal(X,Y)$ which is induced by $\FCal^{\rm loc}$. With increasing radius $r$, the term $N(r, \delta)$ increases. The Rademacher complexity decays with increasing radius, $r$, typically at the rate of $O(1/\sqrt{N(r, \delta)})$. Thus, under the local ERM setting the total approximation error increases with increasing radius $r$, given $\FCal^{\rm loc}$ is fixed. On the contrary, the generalization error decreases with increasing radius $r$ for a fixed $\FCal^{\rm loc}$. This suggests a trade-off between the approximation and generalization error as we make a design choice about $r$. (We empirically validate this in ~Figure~\ref{fig:all-expt}.) %

Also, it's worth comparing local-ERM with conventional (non-local) ERM. Under the local-regularity condition assumption (Sec.~\ref{sec:setup:data}), one would utilize a simple $\FCal^{\rm loc}$ for local-ERM, which would correspond to the Rademacher complexity term in Theorem~\ref{thm:local} being small. In contrast, the generalization bound for the traditional (non-local) ERM approach would depend on the Rademacher complexity of a function class $\FCal^{\rm global}$ that can achieve a low approximation error on the \emph{entire domain}. Such a function class (even under the regularity assumption) would be much more complex than $\FCal^{\rm loc}$, resulting in a large Rademacher complexity.  For the right design choice of $r$, and $\FCal^{\rm loc}$, the approximation error increase of local-ERM can be offset by large generalization error of $\FCal^{\rm global}$. As a consequence, local ERM with simple function class $\FCal^{\rm loc}$ can outperform (non-local) ERM with a complex class $\FCal^{\rm global}$.

\subsection{Endowing local ERM with global representations}
\label{sec:glocal}

Note that the local ERM method takes a somewhat myopic view 
and does not aim to learn a global hypothesis that (partially or entirely) explains the entire data distribution. %
Such an approach may potentially result in poor performance in those regions of input domains that are not well represented in the training set. Here, we explore a two-stage learning approach as to leverage the global pattern present in the training data in order to address this apparent shortcoming of local ERM.

Given the training data $\SCal$ and a simple function class ${\GCal}^{\rm loc} : \R^d \to \R^{|\YCal|}$, %
the first stage involves learning a $d$-dimensional feature map $\Phi_{\SCal} : \XCal \to \R^d$ that simultaneously ensures good representation for the entire data distribution~\citep{radford2021learning,grill2020bootstrap,cer2018universal,reimers2019sentence}. Subsequently, given a test instance $x$ and its retrieved neighboring points $\RCal^x = \{(x'_j, y'_j)\} \subseteq \SCal$, one employs local ERM with the function class:
\begin{align}
\label{eq:FC_phi}
    \FCal_{\Phi_{\SCal}} = \{ x \mapsto g \circ \Phi_{\SCal}(x)  : g \in \GCal^{\rm loc}\}.
\end{align}
At this point, it is tempting to invoke the proof strategy outlined following Lem.~\ref{lem:local}, with $\FCal^{\rm loc}$ replaced with $\FCal_{\Phi_{\SCal}}$ to characterize the performance of the aforementioned two-stage method. Note that one can indeed bound the first two terms appearing in Lem.~\ref{lem:local} for the two-stage method as well. However, bounding the third term that corresponds to generalization gap for local ERM becomes challenging as $\FCal_{\Phi_{\SCal}}$ depends on $\SCal$ via the global representation $\Phi_{\SCal}$ learned in the first stage. Interestingly, \citet{foster_neurips2019} explored a general framework to address such dependence for standard (non retrieval-based) learning. In fact, as an instantiation of their general framework, \citet[Sec. 5.4]{foster_neurips2019} considers the ERM in feature space defined by a representation. We employ their techniques to obtain the following result %
on the generalization gap for local ERM with $\FCal_{\Phi_{\SCal}}$.

\begin{proposition}
\label{prop:global}
Assume that the representation learned during the first stage is $\Delta$-sensitive, i.e., for $\SCal$ and $\SCal'$ that differ in a single example, we have 
$\|\Phi_{S}(x) - \Phi_{S'}(x) \| \leq \Delta~\forall x \in \XCal$.
Furthermore, we assume that each $g \in \GCal^{\rm loc}$ (cf.~\ref{eq:FC_phi}) is $L$-Lipschitz, the loss $\ell: \R^{|\YCal|} \times |\YCal| \to \R$ is $L_{\ell, 1}$-Lipschitz w.r.t. $\|\cdot\|_{\infty}$-norm in the first argument, and $\ell$ is bounded by $M_{\ell}$. Then, the following holds with probability at least $1 - \delta$.
\begin{align}
\label{eq:glocal-bound1}
&\sup_{f \in \FC_{\Phi_{\SCal}}} \Big\vert \mathbb{E}_{(X', Y') \sim \DSf^{x,r}}[\ell({f}(X'), Y')] -  \hat{R}^x_{\ell}(f) \Big\vert  \leq \big(M_{\ell} + 2\Delta L L_{\ell, 1} |\RCal^{x}|\big) \sqrt{\frac{\log(1/\delta)}{2|\RCal^x|}} \; + \nonumber \\
&\qquad \qquad \qquad \qquad \qquad \mathbb{E}_{\RCal^x \sim \DSf^{x,r}} \Big[ \sup_{f \in \FC_{\Phi_{\SCal}}} \Big\vert \mathbb{E}_{(X', Y') \sim \DSf^{x,r}}[\ell({f}(X'), Y')] -  \hat{R}^x_{\ell}(f) \Big\vert \Big].
\end{align}
\vspace{-3mm}
\begin{flalign}
\label{eq:glocal-bound2}
\text{Furthermore}&& \mathbb{E}_{\RCal^x \sim \DSf^{x,r}} \Big[ \sup_{f \in \FC_{\Phi_{\SCal}}} \Big\vert \mathbb{E}_{(X', Y') \sim \DSf^{x,r}}[\ell({f}(X'), Y')] -  \hat{R}^x_{\ell}(f) \Big\vert \Big]
\leq 2\RF^{\diamond}(\ell\circ\FCal_{\Phi_{\SCal}}), &
\end{flalign}
where $\ell\circ\FCal_{\Phi_{\SCal}} = \{ (x, y) \mapsto \ell(f(x), y) : f \in \FCal_{\Phi_{\SCal}}\}$ and $\RF^{\diamond}$ denotes the Rademacher complexity of data dependent hypothesis sets~\cite{foster_neurips2019}.
\end{proposition}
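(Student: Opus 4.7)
The plan is to prove the two bounds of Proposition~\ref{prop:global} separately: bound~\eqref{eq:glocal-bound1} via McDiarmid's inequality applied to the supremum of the generalization gap viewed as a function of the samples in $\RCal^x$, and bound~\eqref{eq:glocal-bound2} by invoking the symmetrization argument for data-dependent hypothesis classes from \citet{foster_neurips2019}.

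For \eqref{eq:glocal-bound1}, define $F := \sup_{f \in \FCal_{\Phi_\SCal}} \bigl| \E_{(X',Y') \sim \DSf^{x,r}}[\ell(f(X'), Y')] - \hat{R}^x_{\ell}(f) \bigr|$, and view it as a function of the $|\RCal^x|$ samples drawn i.i.d.\ from $\DSf^{x,r}$ that constitute $\RCal^x \subseteq \SCal$. I will compute the bounded-differences constant by tracking the two distinct effects of swapping one sample $(x_i,y_i)\in\RCal^x$ for $(x_i',y_i')$. First, the representation shifts from $\Phi_\SCal$ to $\Phi_{\SCal'}$ with $\|\Phi_\SCal(\cdot)-\Phi_{\SCal'}(\cdot)\|\leq \Delta$ by hypothesis; using that every $g\in\GCal^{\rm loc}$ is $L$-Lipschitz and $\ell$ is $L_{\ell,1}$-Lipschitz in $\|\cdot\|_\infty$, each composed evaluation $\ell((g\circ\Phi_\SCal)(x'),y')$ shifts by at most $L L_{\ell,1}\Delta$, so both $\E_{\DSf^{x,r}}[\ell(f(X'),Y')]$ and $\hat{R}^x_\ell(f)$ shift by at most $L L_{\ell,1}\Delta$, contributing $2L L_{\ell,1}\Delta$ to the sensitivity of $F$. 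Second, exactly one summand in $\hat{R}^x_\ell(f)$ is replaced, contributing at most $M_\ell/|\RCal^x|$ since $\ell$ is bounded by $M_\ell$. The per-sample bounded difference is therefore $c = 2L L_{\ell,1}\Delta + M_\ell/|\RCal^x|$. Applying McDiarmid's inequality over the $|\RCal^x|$ samples gives, with probability at least $1-\delta$,
\begin{equation*}
F \;\leq\; \E[F] + c\sqrt{|\RCal^x|\log(1/\delta)/2} \;=\; \E[F] + (M_\ell + 2\Delta L L_{\ell,1}|\RCal^x|)\sqrt{\log(1/\delta)/(2|\RCal^x|)},
\end{equation*}
which is exactly \eqref{eq:glocal-bound1} once $\E[F]$ is identified with the expected supremum term on the right-hand side.

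For \eqref{eq:glocal-bound2}, I invoke the standard symmetrization template: introduce an independent ghost sample of size $|\RCal^x|$ from $\DSf^{x,r}$, replace the population expectation by the ghost empirical mean via Jensen, introduce i.i.d.\ Rademacher signs $\sigma_j$, and take the supremum outside. The non-standard feature is that $\FCal_{\Phi_\SCal}$ depends on $\SCal$, so one cannot directly pull $f$ out of the symmetrization. This is precisely the situation handled by the data-dependent Rademacher complexity $\RF^\diamond$ of \citet[Sec.~5.4]{foster_neurips2019}, which is defined exactly so that symmetrization goes through with the hypothesis class itself being coupled to the samples and ghost samples through $\Phi$. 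Applying their symmetrization lemma yields $\E[F]\leq 2\RF^\diamond(\ell\circ\FCal_{\Phi_\SCal})$, giving \eqref{eq:glocal-bound2}.

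The main obstacle will be the bounded-differences calculation in step one: the sensitivity has to account simultaneously for (i) the global perturbation of \emph{every} summand in the average (as well as the population expectation) caused by the $\Delta$-sensitivity of $\Phi_\SCal$, and (ii) the local perturbation from the single replaced summand in $\RCal^x$. Getting the combination $2L L_{\ell,1}\Delta + M_\ell/|\RCal^x|$ right -- rather than double-counting the global shift as $|\RCal^x|\cdot L L_{\ell,1}\Delta$ or missing it -- is the crux. A secondary subtlety is ensuring the invocation of $\RF^\diamond$ matches the form in \citet{foster_neurips2019}: the hypothesis class $\ell\circ\FCal_{\Phi_\SCal}$ must be cast in their formalism as a family indexed by the ``algorithmic'' stability of the first-stage representation, which is supplied exactly by the $\Delta$-sensitivity assumption.
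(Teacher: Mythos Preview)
Your proposal is correct and follows essentially the same route as the paper's proof. The paper packages the argument slightly differently by introducing the two-argument quantity $\Xi(\RCal,\TCal)=\sup_{f\in\FCal_{\Phi_{\UCal\cup\RCal}}}\bigl|R_\ell(f;\DSf^{x,r})-\widehat R_\ell(f;\TCal)\bigr|$ so that the bounded-differences calculation splits cleanly into $\Xi(\RCal,\RCal)-\Xi(\RCal,\RCal')\leq M_\ell/m$ (only the empirical sample moves) and $\Xi(\RCal,\RCal')-\Xi(\RCal',\RCal')\leq 2L_{\ell,1}L\Delta$ (only the hypothesis class moves, handled via a near-maximizer $\tilde f=\tilde g\circ\Phi_{\UCal\cup\RCal}$ paired with $\tilde f'=\tilde g\circ\Phi_{\UCal\cup\RCal'}$); but this is exactly the decomposition you describe in words, and both arrive at the same constant $c=M_\ell/|\RCal^x|+2\Delta L L_{\ell,1}$ before applying McDiarmid and then citing the symmetrization of \citet{foster_neurips2019} for~\eqref{eq:glocal-bound2}.
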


We defer the proof of Prop.~\ref{prop:global} %
and necessary background on \cite{foster_neurips2019} to Appendix~\ref{appen:global}.

As a potential advantage of utilizing a global representation with local ERM, one can realize high-performance local learning with an even simpler function class. For example, it's a common approach to only train a linear classifier on learned representations. Furthermore, a high-quality global representation can ensure good performance for those local regions that are not well represented in the training set. We leave a formal treatment of these topics for a longer version of this manuscript.

\section{Classification in extended feature space}
\label{sec:kernel}

Next, we focus on a family of retrieval-based methods that directly learn a scorer to map an input instance and its neighboring labeled instance to a score vector (cf.~\eqref{eq:ex-erm}). In fact, as discussed in Sec.~\ref{sec:intro}, many successful modern instances of retrieval-based models such as %
REINA~\citep{wang_more_valuable} and
KATE~\citep{liu-etal-2022-makes}
belong to this family. In this section, we provide the first rigorous treatment (to the best of our knowledge) for such models.

Note that our objective is to learn a function $f : \XCal \times (\XCal \times \YCal)^{\star} \to \R^{|\YCal|}$ (cf.~Sec.~\ref{sec:r-b-c}).
In this work, we restrict ourselves to a sub-family of such retrieval-based methods that first map $\RCal^x \sim \DSf^{x,r}$ to $\hat{\DSf}^{x,r}$ --- an empirical estimate of the local distribution $\DSf^{x,r}$, which is subsequently utilized to make a prediction for $x$. In particular, the scorers of interest are of the form:
\begin{align}
(x, \RCal^{x}) \mapsto f(x, \hat{\DSf}^{x,r}) = \big( f_1(x, \hat{\DSf}^{x,r}),\ldots,f_{|\YCal|}(x, \hat{\DSf}^{x,r})\big) \in \R^{|\YCal|},
\end{align}
Note that the general framework for learning in the extended feature space $\widetilde{\XCal}:= \XCal \times \Delta_{\XCal\times \YCal}$ provides a very rich class of functions. Here, we focus on a specific form of learning methods in $\widetilde{\XCal}$ %
by using the kernel methods, adapting the work on kernel methods for domain generalization~\citep{deshmukh19}. %
In particular, we study generalization of a kernel-based classifier over $\widetilde{\XCal}$ learnt via regularized ERM. Due to space constraint, we present an informal version of our result below. See Appendix~\ref{app:kernel} for the precise statement (cf.~Thm.~\ref{thm:kernel-app}), necessary background, and detailed proof.

\begin{theorem}[Informal]
\label{thm:kernel}
Let $0 \leq \delta \leq 1$ and $N(r,\delta)$ be as defined in \eqref{eq:bound_rx}. Then, under appropriate assumptions, with probability at least $1- \delta$, we have 
\begin{align*}
    \sup_{f \in \FCal} \big\vert\widehat{R}^{\rm ex}_{\ell}(f) - R^{\rm ex}_{\ell}(f) \big\vert \lesssim 
    C_1 n^{-\frac{1}{2}} \left(1 + \log ^{\frac{3}{2}} \sqrt{2} n |\YCal| \right)
    + C_2 \sqrt{\frac{\log(\frac{n}{\delta})}{N(r,\frac{\delta}{n})}} + C_3 \sqrt{\frac{\log(\frac{1}{\delta})}{n}},
\end{align*}
where $\FCal$ is the extended feature kernel function class; and $\widehat{R}^{\rm ex}_{\ell}(f)$ and ${R}^{\rm ex}_{\ell}(f)$ are empirical and population risks, respectively.
\end{theorem}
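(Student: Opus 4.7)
The plan is to split the generalization gap into an ``idealized'' uniform convergence term that ignores the empirical nature of $\hat{\DSf}^{x,r}$, plus a term that quantifies the error from replacing the true local distribution $\DSf^{x,r}$ with the sample-based estimate $\hat{\DSf}^{x,r}$. Concretely, introduce the intermediate, ``oracle'' risks
\begin{align*}
\widetilde{R}^{\rm ex}_{\ell}(f) := \tfrac{1}{n}\sum\nolimits_{i\in[n]} \ell\bigl(f(x_i,\DSf^{x_i,r}),y_i\bigr),\qquad \widetilde{R}^{\rm pop}_\ell(f) := \mathbb{E}_{(X,Y)\sim \DSf}\bigl[\ell\bigl(f(X,\DSf^{X,r}),Y\bigr)\bigr],
\end{align*}
and use the triangle inequality
$\sup_f |\widehat{R}^{\rm ex}_\ell(f)-R^{\rm ex}_\ell(f)| \le \sup_f |\widehat{R}^{\rm ex}_\ell(f)-\widetilde{R}^{\rm ex}_\ell(f)| + \sup_f |\widetilde{R}^{\rm ex}_\ell(f)-\widetilde{R}^{\rm pop}_\ell(f)| + \sup_f|\widetilde{R}^{\rm pop}_\ell(f)-R^{\rm ex}_\ell(f)|$. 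Each piece is bounded separately; the first produces the $C_2$ retrieval-set term, the second produces the $C_1$ Rademacher term combined with the $C_3$ McDiarmid term, and the third should become $0$ in expectation (or contribute a similar sub-Gaussian deviation) because $R^{\rm ex}_\ell$ integrates the same conditional risk over $(X,Y)\sim\DSf$.

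The middle supremum is handled by the now-standard route for kernel classifiers on an extended feature space, following Deshmukh et al.\ (2019). Endowing $\widetilde{\XCal}=\XCal\times\Delta_{\XCal\times\YCal}$ with a two-level product kernel that acts on the distribution coordinate via the kernel mean embedding, I would apply McDiarmid to $\sup_f|\widetilde{R}^{\rm ex}_\ell(f)-\widetilde{R}^{\rm pop}_\ell(f)|$ (using boundedness of $\ell$ on the RKHS ball) to obtain the $C_3\sqrt{\log(1/\delta)/n}$ concentration piece, then symmetrize to get a multiclass Rademacher complexity of the loss-composed class. Contraction through the $L_\ell$-Lipschitz surrogate, combined with the Maurer/Lei-type multiclass chaining bound for bounded-norm RKHS classifiers, yields the $C_1 n^{-1/2}(1+\log^{3/2}(\sqrt{2}n|\YCal|))$ factor.

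The first supremum, $\sup_f|\widehat{R}^{\rm ex}_\ell(f)-\widetilde{R}^{\rm ex}_\ell(f)|$, is where the retrieval structure enters and this will be the main obstacle. For each training $x_i$, the only way $f$ sees the retrieved set is through the empirical mean embedding $\mu_{\hat{\DSf}^{x_i,r}}=\tfrac{1}{|\RCal^{x_i}|}\sum_{(x',y')\in\RCal^{x_i}} k(\cdot,(x',y'))$, conditional on $x_i$ the draws in $\RCal^{x_i}$ are i.i.d.\ from $\DSf^{x_i,r}$ (by the independence content of Assumption~\ref{assum:truef} and the definition of $\RCal^{x_i}$), so the standard Hoeffding-in-Hilbert-space bound gives
\[
\bigl\|\mu_{\hat{\DSf}^{x_i,r}}-\mu_{\DSf^{x_i,r}}\bigr\|_{\HCal}\lesssim \sqrt{\tfrac{\log(1/\delta')}{|\RCal^{x_i}|}}
\]
with probability $1-\delta'$. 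Using reproducing-kernel Lipschitzness of $f$ in its distribution coordinate plus $L_\ell$-Lipschitzness of $\ell$, this transfers to a pointwise bound on $|\ell(f(x_i,\hat{\DSf}^{x_i,r}),y_i)-\ell(f(x_i,\DSf^{x_i,r}),y_i)|$ uniform over $f$ in the RKHS ball. Then two union bounds are needed: one setting $\delta'=\delta/(2n)$ over the $n$ training instances to control the mean-embedding concentration simultaneously, and another applying the assumption $\Pr[|\RCal^X|<N(r,\delta/(2n))]\le \delta/(2n)$ from~\eqref{eq:bound_rx} to lower bound every $|\RCal^{x_i}|$ by $N(r,\delta/n)$ (up to constants). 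Averaging over $i$ then gives the $C_2\sqrt{\log(n/\delta)/N(r,\delta/n)}$ piece.

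The step that will require the most care is the \emph{Lipschitz transfer} from $\|\mu_{\hat{\DSf}^{x,r}}-\mu_{\DSf^{x,r}}\|_{\HCal}$ to the change in the loss evaluated on $f$. This depends on how the extended-space kernel treats the distribution coordinate; concretely one needs a bound of the form $|f(x,P)-f(x,Q)|\le \|f\|_{\FCal}\cdot \|\mu_P-\mu_Q\|_{\HCal}$, which holds when $f$ lives in the RKHS induced by a kernel of the form $K((x,P),(x',Q))=k_{\XCal}(x,x')\cdot \langle \mu_P,\mu_Q\rangle_{\HCal}$ (or a sum/Gaussian thereof). Setting up the kernel correctly, assuming bounded base kernels, and keeping the constants transparent so they collapse into $C_1,C_2,C_3$ is where the accounting is delicate; everything else reduces to invocations of existing multiclass Rademacher and kernel-mean-embedding concentration results.
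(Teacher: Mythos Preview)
Your proposal is correct and follows essentially the same route as the paper: the paper splits into exactly your first two terms (your third term is identically zero, since the paper defines $R^{\rm ex}_\ell(f)=\mathbb{E}_{(X,Y)\sim\DSf}[\ell(f(X,\DSf^{X,r}),Y)]$, i.e.\ $R^{\rm ex}_\ell=\widetilde{R}^{\rm pop}_\ell$), handles the second via McDiarmid plus symmetrization plus the Lei--Deshmukh multiclass Rademacher bound, and handles the first via Hoeffding-in-Hilbert-space on the mean embeddings followed by a union bound over $i\in[n]$ and the $N(r,\delta/n)$ lower bound on $|\RCal^{x_i}|$. The only difference is that for the ``Lipschitz transfer'' step the paper assumes $\alpha$-H\"older continuity of the feature map $\varphi_\kappa$ of the outer kernel $\kappa$ (Assumption~\ref{assum:kernel3}), a mild generalization of the linear-in-embedding kernel you sketch.
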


Interestingly, the bound in Thm.~\ref{thm:kernel} implies that the size of the retrieved set $\RCal^x$ (as captured by $N(r, \frac{\delta}{n})$) has to scale at least logarithmically in the size of the training set $n$ to ensure convergence.

\vspace{-1mm}
\section{Experiments}
\label{sec:expt}
\vspace{-2mm}

There have been numerous successful practical applications of retrieval-based models in the literature \citep[e.g.,][]{wang_more_valuable,das2021cbr}. Here, we present a brief empirical study for such models in order to corroborate the benefits predicted by our theoretical results. 

\textbf{Task and dataset.}~We perform experiments on both synthetic and real datasets, as summarized below. Further details are relegated to Appendix~\ref{sec:app-expt}.
\vspace{-3mm}
\setdescription{font=\normalfont}
\begin{description}[itemindent=2mm,leftmargin=0mm,itemsep=1mm, partopsep=0pt,parsep=0pt]
    \item[\textit{(i) Synthetic.}] We consider a task of binary classification on a Gaussian mixture. %
    Each mixture component is endowed with its local linear decision boundary.
    We randomly generate a train set of $n=10000$ in a $10$-dimensional space. We use Euclidean distance for retrieval and perform a 10-fold cross-validation.
    \item[\textit{(ii) CIFAR-10.}] Next, we consider a task of binary classification on a \emph{real data} for object detection. In particular, we consider a subset of CIFAR-10 dataset where we only restrict to images from "Cat" and "Dog" classes. We randomly partition the data into a train set of $n=10000$ points and remaining $2000$ points for test. We use Euclidean distance for retrieval and do a 10-fold cross-validation.
    \item[\textit{(iii) ImageNet.}] Finally, we consider 1000-way classification task on ImageNet dataset. We use the standard train-test split with $n=1281167$ training and $50000$ test examples.
    Following standard practice in literature, we use unsupervised but globally learned features from ALIGN~\citep{jia2021scaling} to do image retrieval. This also showcases benefits of endowing local ERM with global representation (Sec.~\ref{sec:glocal}).
    Given large computational cost, we could only run each experiment once in this setting.
\end{description}

\textbf{Methods }
On all datasets, as baseline, we consider simple linear classifier and multi-layer perceptron (MLP) of two layers. For retrieval-based models, we consider each of the above methods as the local model to fit on retrieved data points via local ERM framework (Sec.~\ref{sec:local}). For synthetic datasets, we also considered support vector machines with polynomial kernel (of degree 3) and with radial basis function (RBF) kernel, both for baseline and local ERM. For ImageNet, we additionally consider the state-of-the-art (SoTA) single model published for this task, which is from the most recent CVPR 2022~\citep{zhai2022scaling} as a baseline.
In addition, for ImageNet, we also consider the pretrain-finetune version of local ERM, where using the retrieved set we fine-tune a MobileNetV3~\citep{Howard_2019_ICCV} model that has been pretrained on entire ImageNet.

\textbf{Observations.}~
In Fig.~\ref{fig:all-expt}, we observe the tradeoff of varying the size of the retrieved set (as dictated by the neighborhood radius) on the performance of 
retrieval-based methods across all settings. 
We see that when the number of retrieved samples is small, %
local ERM has lower accuracy, this is due to large generalization error. 
When the size of the retrieved sample space is high, %
local ERM fails to minimize the loss effectively due to the lack of model capacity. 
We see that this effect being more pronounced for simpler function classes such as linear classifier as compared to MLP.
In Fig.~\ref{fig:imagenet-subf}, we see that, via local ERM with a small MobileNet-V3 model, we are able to achieve the top-1 accuracy of 82.78 whereas a regularly trained MobileNet-V3 model achieves the top-1 accuracy of only 65.80. 
Also the result is very competitive with SoTA of 90.45 with a \emph{much larger model}. 
Thus, our empirical evaluation demonstrates the utility of retrieval-based models via simple local ERM framework.
In particular, it allows %
small sized models to attain very high performance. %

\begin{figure}[t]
    \vspace{-6mm}
    \centering
    \begin{subfigure}[b]{0.33\linewidth}
        \includegraphics[width=\textwidth]{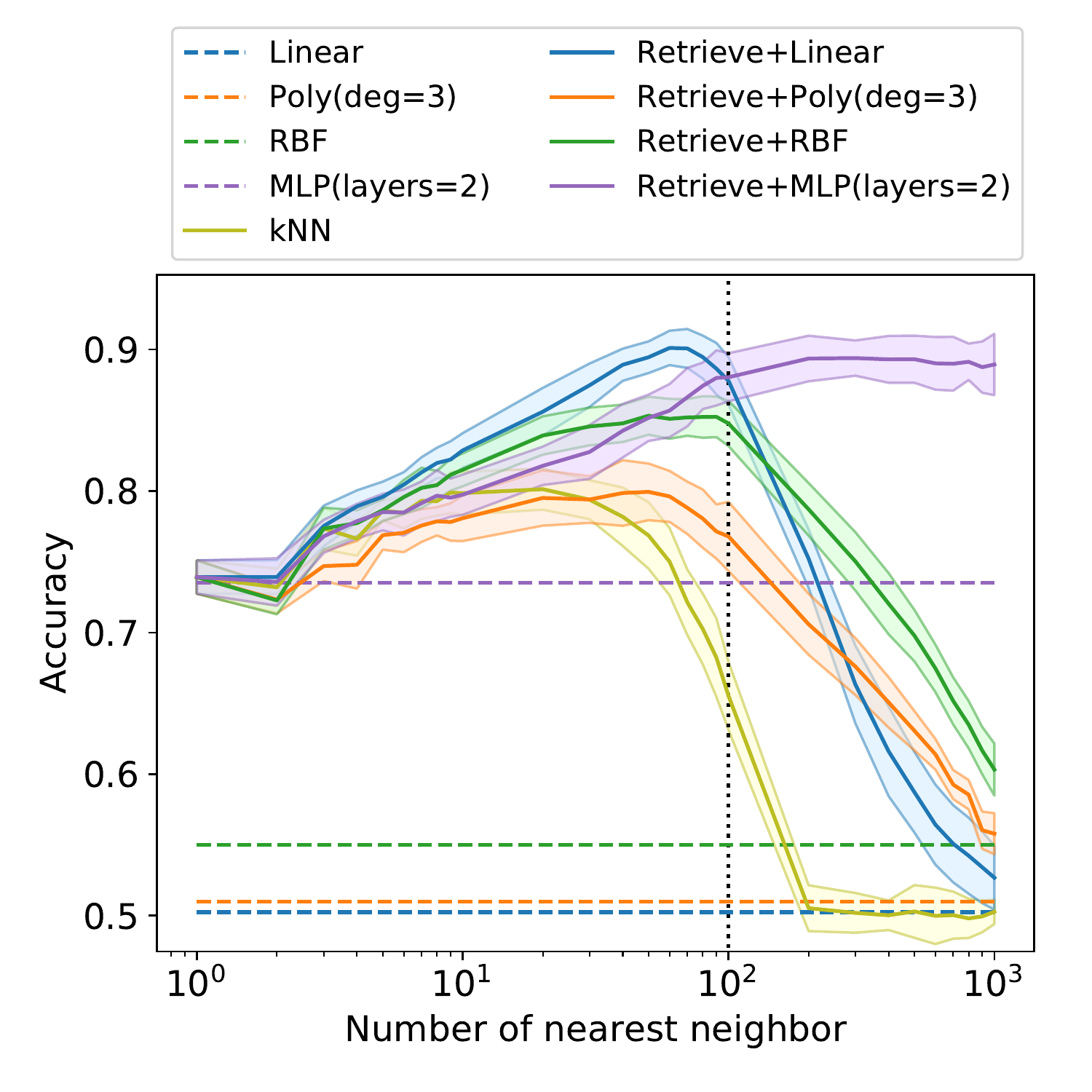}
        \caption{Synthetic}
        \label{fig:synthetic-subf}
    \end{subfigure}
    \hfill
    \begin{subfigure}[b]{0.3\linewidth}
        \includegraphics[width=\textwidth]{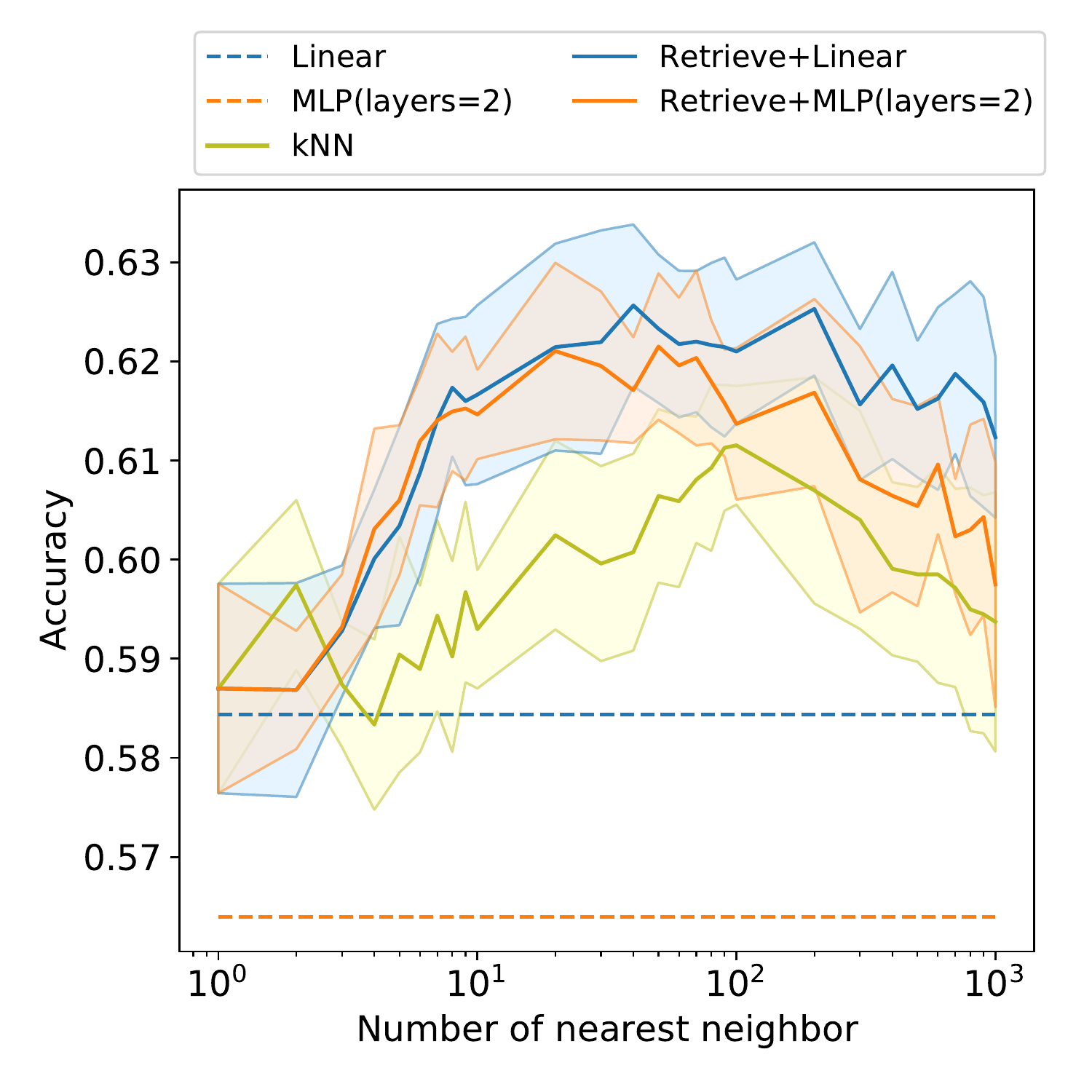}
        \caption{CIFAR-10}
        \label{fig:cifar-subf}
    \end{subfigure}
    \hfill
    \begin{subfigure}[b]{0.33\linewidth}
        \includegraphics[width=\textwidth]{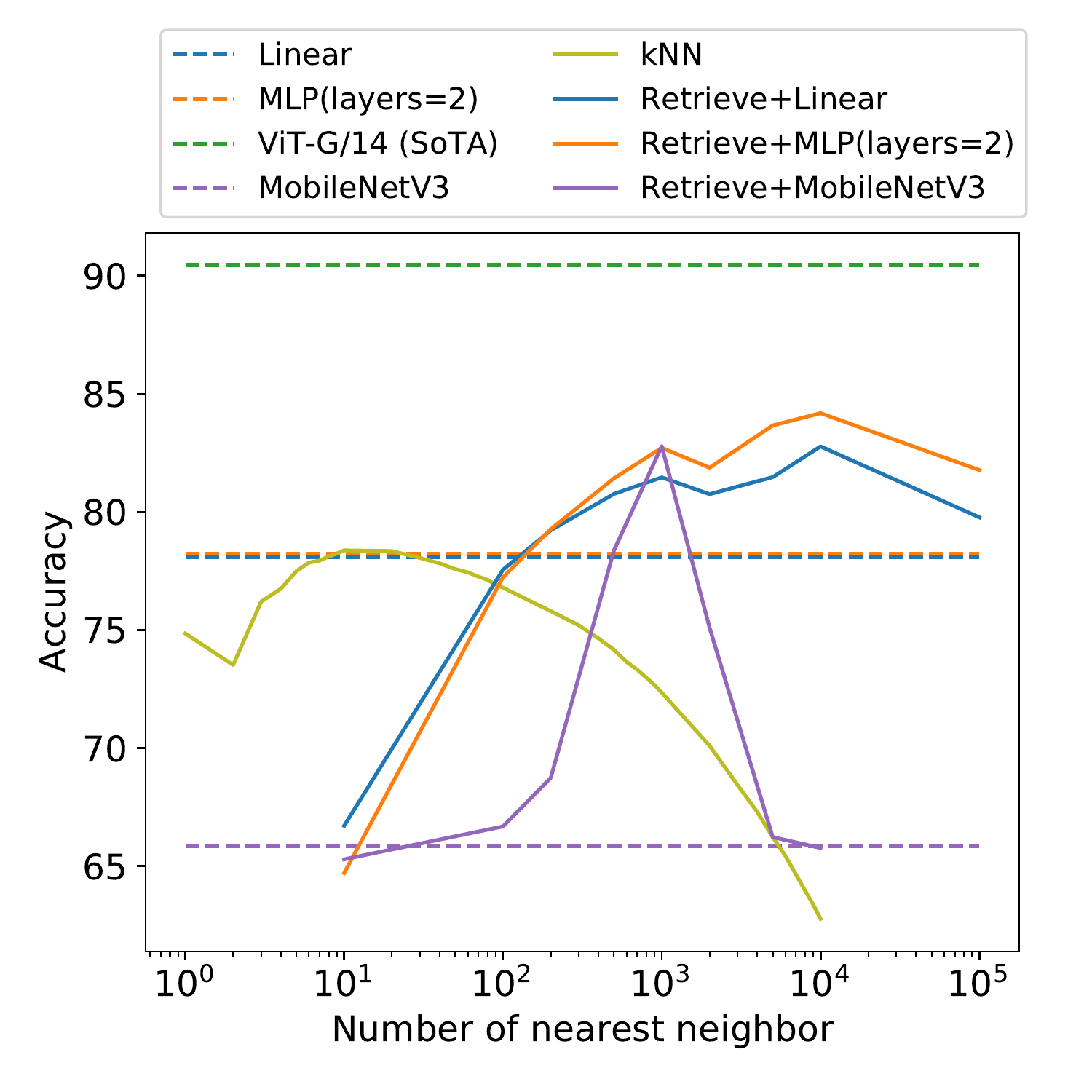}
        \caption{ImageNet}
        \label{fig:imagenet-subf}
    \end{subfigure}
    \vspace{-1mm}
    \caption{Performance of local ERM with size of retrieved set across models of different complexity.}
    \label{fig:all-expt}
    \vspace{-2mm}
\end{figure}

\section{Related work and discussion}
\label{sec:prior_work}

\textbf{Local polynomial regression.}~
Perhaps the most similar problem to our setup is the rich set of work on local polynomial regression, which has been around for a long time since the pioneering works of ~\citet{stone1977consistent,stone1980optimal}.
This line of work aims to fit a low-degree polynomial at each point in the data set based on a subset of data points.
Such approaches gained a lot of attention as parametric regression was not adequate in various practical applications of the time.
The performance of this approach critically depends on subset selected to locally fit the data. Towards this, various selection approaches have been considered: fixed bandwidth~\citep{katkovnik1979dynamic}, nearest neighbors~\citep{cleveland1979robust}, kernel weighted~\citep{ruppert1994local}, and adaptive methods~\citep{ruppert1995effective}.
So far, the analysis of local polynomial regression has been mainly restricted to classical techniques like minimax estimation, on which the literature is a vast for various settings.
First results on asymptotic minimax risks were established by \citet{pinsker1980optimal} over Sobolev spaces.
Minimax risks over more general classes were studied by \citet{ibragimov2013statistical}, \citet{donoho1988automatic}, among others, for estimating an entire function. 
But none of these works provide finite sample generalization bounds, which we obtain in this work.

\textbf{Multi-task and meta learning }
At a surface level, our setup might resemble multi-task and meta learning frameworks.
In multi-task learning, we are given the examples from $T$ tasks/distributions and the objective is to ensure good classification performance on all the tasks. 
In meta-learning, the setting is made harder by requiring good performance on a new target task.
As a common  approach in these settings, we learn a shared representation across the tasks and then learn a simple task-specific mapping on top of these learned shared features~\citep[interalia]{vilalta2002perspective}.
While there is a vast literature on multi-task and meta-learning methods, the number of theoretical investigations is quite limited.
There are a few works studying upper-bounds on generalization error in multi-task
environments~\citep{amit2017meta, ben2008notion, ben2010theory, pentina2014pac}, and even fewer in case of meta-learning~\citep{balcan2019provable,khodak2019adaptive,tripuraneni2021provable,du2020few}.
However, most of these works assume linear or other classes of very simple models, whereas we consider general function class using kernel methods. Moreover, recall that our assumption on the underlying data distribution (Sec.~\ref{sec:setup:data}) implies that it can be approximated by a mixture of tasks. 
However, by design most of these tasks have a very little overlap in the instance space. 
Additionally, the number of tasks can be very large in our case. Finally, it's not a priori clear which task a particular example belongs to. Thus, it is not straightforward to employ the aforementioned representation based approach for multi-task or meta-learning approaches for our setting.
Interestingly, in this work, we show that retrieval-based approach alleviate the needs to identify the task-membership. By relying on retrieved neighboring instance, it is possible to obtain performance guarantees on their data domain which are attuned to local structure of the problem (cf.~Sec.~\ref{sec:local}). %

\textbf{Conclusion and future direction.}~In this work, we initiate the development of a theoretical framework to study the generalization behavior of retrieval-based modern machine learning models. Our treatment of an explicit local learning paradigm, namely local-ERM, establishes an approximation vs. generalization error trade-off. This highlights the advantage realized by access to a retrieved set during classification as it enables good performance with much simpler (local) function classes. As for the retrieval-based models that leverage a retrieved set without explicitly performing local learning, we present a systematic study by considering a kernel-based classifier over extended feature space. Studying end-to-end retrieval-based models beyond kernel-based classification is a natural and fruitful direction for future work. It's also worth exploring if existing retrieval-based end-to-end models inherently perform \emph{implicit} local learning via architectures such as Transformers.

\bibliographystyle{plainnat}
\bibliography{ref}

\newpage
\appendix

\section{Preliminaries}

\begin{definition}[Rademacher complexity]
Given a sample $\SCal = \{z_i = (x_i, y_i)\}_{i \in [n]} \subset \ZCal$ and a real-valued function class $\FCal : \ZCal \to \R$, the {\em empirical} Rademacher complexity of $\FCal$ with respect to $\SCal$ is defined as
\begin{align}
\RF_{\SCal}( \FCal ) = \frac{1}{n}\mathbb{E}_{\bm{\sigma}}\left[ \sup_{f \in \FCal } \sum_{i=1}^{n}\sigma_i f(z_i)\right],
\end{align}
where $\bm{\sigma} = \{\sigma_i\}_{i \in [n]}$ is a collection of $n$ i.i.d. Bernoulli random variables. For $n \in \sN$, the Rademacher complexity $\bar{\RF}_{n}( \FCal )$ and {\em worst case} Rademacher complexity $\RF_{n}( \FCal )$ are defined as follows.
\begin{align}
 \bar{\RF}_{n}( \FCal ) = \mathbb{E}_{\SCal \sim \DSf^n} \left[\RF_{\SCal}( \FCal ) \right], \quad \text{and} \quad \RF_{n}( \FCal ) = \sup_{\SCal \sim \ZCal^n} \RF_{\SCal}( \FCal ).
\end{align}
\end{definition}

\begin{definition}[Covering Number]
Let $\epsilon > 0$ and $\|\cdot\|$ be a norm defined over $\R^n$. Given a function class $\FCal: \ZCal \to \R$ and a collection of points $\SCal = \{z_i\}_{i \in [n]} \subset \ZCal$, we call a set of points $\{u_j\}_{j \in [m]} \subset \R^n$ an $(\epsilon, \|\cdot\|)$-cover of $\FCal$ with respect to $\SCal$, if we have
\begin{align}
\sup_{f \in \FCal} \min_{j \in [m]} \|f(\SCal) - u_j\| \leq \epsilon,
\end{align}
where $f(\SCal) = \big(f(z_1),\ldots, f(z_n)\big) \in \R^n$. The $\|\cdot\|$-covering number $\NC_{\|\cdot\|}(\epsilon, \FCal, \SCal)$ denotes the cardinally of the minimal $(\epsilon, \|\cdot\|)$-cover of $\FCal$ with respect to $\SCal$. In particular, if $\|\cdot\|$ is an normalized-$\ell_p$ norm ($\|v\| = (\tfrac{1}{dim(v)}\sum_{i=1}^{dim(v)} |v_i|^p)^{1/p}$), then we simply use $\NC_{p}(\epsilon, \FCal, \SCal)$ to denote the corresponding $\ell_p$-covering number. 
\end{definition}

\section{Proofs for Section~\ref{sec:loc-erm-analysis}}
\label{appen:local}

\subsection{Proof of Lemma~\ref{lem:local}}
Note that
\begin{align}
&\mathbb{E}_{(X, Y) \sim \DSf} \left[ \ell(\hat{f}^X(X), Y) - \ell({f}^*(X), Y) \right] \nonumber \\
&\quad \text{// We add and subtract loss of the local optimizer  $f^{X,\ast}(\cdot)$ expected over $\DSf^{X, r}$}\nonumber \\
&\quad =  \mathbb{E}_{(X, Y) \sim \DSf} \Big[ \ell(\hat{f}^X(X), Y)  - \mathbb{E}_{(X', Y') \sim \DSf^{X, r}}\big[\ell\big( f^{X,\ast}(X'), Y'\big)\big] \nonumber \\
& \qquad \qquad \qquad \qquad + \mathbb{E}_{(X', Y') \sim \DSf^{X, r}}\big[\ell\big( f^{X,\ast}(X'), Y'\big)\big] - \ell({f}^*(X), Y) \Big] \nonumber \\ 
&\quad \text{// We add and subtract loss of the global optimizer $f^{*}(\cdot)$ expected over $\DSf^{X, r}$}\nonumber \\
&\quad =  \mathbb{E}_{(X, Y) \sim \DSf} \Big[ \ell(\hat{f}^X(X), Y)  - \mathbb{E}_{(X', Y') \sim \DSf^{X, r}}\big[\ell\big( f^{X,\ast}(X'), Y'\big)\big]  \nonumber \\
&  \qquad \qquad \qquad \qquad + \mathbb{E}_{(X', Y') \sim \DSf^{X, r}}\big[\ell\big( f^{*}(X'), Y'\big)\big] - \ell({f}^*(X), Y) \nonumber \\
& \qquad \qquad \qquad \qquad + \mathbb{E}_{(X', Y') \sim \DSf^{X, r}}\big[\ell\big( f^{X,\ast}(X'), Y'\big)\big] - \mathbb{E}_{(X', Y') \sim \DSf^{X, r}}\big[\ell\big( f^{*}(X'), Y'\big)\big]\Big] \nonumber \\ 
&\quad \text{// We group (1) local vs global optimizer, (2) global optimizer at $X$ vs expected over $\DSf^{X, r}$,} \nonumber \\
&\quad \text{// and (3) ERM loss at $X$ vs local optimizer loss expected over $\DSf^{X, r}$}\nonumber \\
&\quad = \mathbb{E}_{(X, Y) \sim \DSf} \Big[\mathbb{E}_{(X', Y') \sim \DSf^{X, r}}\big[\ell\big( f^{X,\ast}(X'), Y'\big) - \ell\big( f^{*}(X'), Y'\big)\big]\Big]  \nonumber \\
& \qquad \quad   + \mathbb{E}_{(X, Y) \sim \DSf} \Big[\mathbb{E}_{(X', Y') \sim \DSf^{X, r}}\big[\ell\big( f^{*}(X'), Y'\big)\big] - \ell({f}^*(X), Y) \Big] \nonumber \\
& \qquad \quad  + \mathbb{E}_{(X, Y) \sim \DSf} \Big[ \ell(\hat{f}^X(X), Y)  - \mathbb{E}_{(X', Y') \sim \DSf^{X, r}}\big[\ell\big( f^{X,\ast}(X'), Y'\big)\big] \Big] \nonumber \\ 
&\quad \text{// We add and subtract loss of the empirical optimizer $\hat{f}^{X}(\cdot)$ expected over $\DSf^{X, r}$}\nonumber \\
&\quad = \mathbb{E}_{(X, Y) \sim \DSf} \Big[\mathbb{E}_{(X', Y') \sim \DSf^{X, r}}\big[\ell\big( f^{X,\ast}(X'), Y'\big) - \ell\big( f^{*}(X'), Y'\big)\big]\Big]  \nonumber \\
& \qquad \quad   + \mathbb{E}_{(X, Y) \sim \DSf} \Big[\mathbb{E}_{(X', Y') \sim \DSf^{X, r}}\big[\ell\big( f^{*}(X'), Y'\big)\big] - \ell({f}^*(X), Y) \Big] \nonumber \\
& \qquad \quad  + \mathbb{E}_{(X, Y) \sim \DSf} \Big[ \ell(\hat{f}^X(X), Y)  - \mathbb{E}_{(X', Y') \sim \DSf^{X,r}}[\ell\big(\hat{f}^X(X'), Y'\big)] \nonumber \\
& \qquad \qquad \qquad \qquad + \mathbb{E}_{(X', Y') \sim \DSf^{X,r}}[\ell\big(\hat{f}^X(X'), Y'\big)] - \mathbb{E}_{(X', Y') \sim \DSf^{X, r}}\big[\ell\big( f^{X,\ast}(X'), Y'\big)\big] \Big] \nonumber \\
&\quad \text{// We (1) bound difference of loss at $X$ and loss expected over $\DSf^{X, r}$}\nonumber \\
&\qquad\qquad\qquad \text{by maximizing over function class,}\nonumber \\
&\quad \text{//  and (2) subtract empirical loss of empirical optimizer and add (larger) empirical }\nonumber \\
&\qquad\qquad\qquad \text{loss of local optimizer}\nonumber \\
&\quad \leq \mathbb{E}_{(X, Y) \sim \DSf} \Big[\mathbb{E}_{(X', Y') \sim \DSf^{X, r}}\big[\ell\big( f^{X,\ast}(X'), Y'\big) - \ell\big( f^{*}(X'), Y'\big)\big]\Big]  \nonumber \\
& \qquad  + \mathbb{E}_{(X, Y) \sim \DSf} \Big[\sup_{f\in \FCal^{\rm global}}\big\vert\mathbb{E}_{(X', Y') \sim \DSf^{X, r}}\big[\ell\big( f(X'), Y'\big)\big] - \ell(f(X), Y)\big\vert \Big] \nonumber \\
& \qquad + \mathbb{E}_{(X, Y) \sim \DSf} \Big[ \sup_{f\in \FCal^{\rm loc}}\big\vert\ell(f(X), Y)  - \mathbb{E}_{(X', Y') \sim \DSf^{X,r}}[\ell\big(f(X'), Y'\big)]\vert \Big] \nonumber \\
& \qquad + \mathbb{E}_{(X, Y) \sim \DSf} \Big[\mathbb{E}_{(X', Y') \sim \DSf^{X,r}}[\ell\big(\hat{f}^X(X'), Y'\big)] - \frac{1}{|{\RCal}^X|}\sum_{(x', y') \in {\RCal}^X } \ell\big( \hat{f}^X(x'), y'\big)\Big] \nonumber \\ 
&\qquad + \mathbb{E}_{(X, Y) \sim \DSf} \Big[\frac{1}{|{\RCal}^X|}\sum_{(x', y') \in {\RCal}^X } \ell\big( f^{X, \ast}(x'), y'\big) - \mathbb{E}_{(X', Y') \sim \DSf^{X, r}}\big[\ell\big( f^{X,\ast}(X'), Y'\big)\big] \Big] \label{eq:decomp1}\\
&\quad \text{// We (1) bound difference of empirical vs expected loss of empirical optimizer}\nonumber \\
&\qquad\qquad\qquad \text{by maximizing over function class,}\nonumber \\
&\quad \leq \mathbb{E}_{(X, Y) \sim \DSf} \Big[\mathbb{E}_{(X', Y') \sim \DSf^{X, r}}\big[\ell\big( f^{X,\ast}(X'), Y'\big) - \ell\big( f^{*}(X'), Y'\big)\big]\Big]  \nonumber \\
& \quad \quad  + \mathbb{E}_{(X, Y) \sim \DSf} \Big[\sup_{f\in \FCal^{\rm global}}\big\vert\mathbb{E}_{(X', Y') \sim \DSf^{X, r}}\big[\ell\big( f(X'), Y'\big)\big] - \ell(f(X), Y)\big\vert \Big] \nonumber \\
& \quad \quad + \mathbb{E}_{(X, Y) \sim \DSf} \Big[ \sup_{f\in \FCal^{\rm loc}}\big\vert\ell(f(X), Y)  - \mathbb{E}_{(X', Y') \sim \DSf^{X,r}}[\ell\big(f(X'), Y'\big)]\vert \Big] \nonumber \\
& \quad \quad + \mathbb{E}_{(X, Y) \sim \DSf} \Big[ \sup_{f\in \FCal^{\rm loc}}\Big\vert \mathbb{E}_{(X', Y') \sim \DSf^{X,r}}[\ell\big(f(X'), Y'\big)] - \frac{1}{|{\RCal}^X|}\sum_{(x', y') \in {\RCal}^X } \ell\big( f(x'), y'\big)\Big\vert\Big] \nonumber\\
&\qquad + \mathbb{E}_{(X, Y) \sim \DSf} \Big[\Big|\mathbb{E}_{(X', Y') \sim \DSf^{X, r}}\big[\ell\big( f^{X,\ast}(X'), Y'\big)\big] - \frac{1}{|{\RCal}^X|}\sum_{(x', y') \in {\RCal}^X } \ell\big( f^{X, \ast}(x'), y'\big)  \Big|\Big] \label{eq:decomp2}
\end{align}
\qed

\subsection{Proof of Theorem~\ref{thm:local}}

As discussed in Sec.~\ref{sec:local}, the proof of Theorem~\ref{thm:local} requires bounding three terms in Lemma~\ref{lem:local}. We now proceed to establishing the desired bounds.

{\bf Local vs global loss.}~The local vs global loss can bounded easily using the local regularity condition, and due to the fact that $\FCal^{\rm loc} \approx \cup_x \FCal^{x}$. Let 
$$f^{X, {\rm loc}} = \argmin_{f\in \FCal^X} \mathbb{E}_{(X', Y') \sim \DSf^{X, r}}\big[\ell\big( f(X'), Y'\big)\big].$$
\begin{align*}
    &\mathbb{E}_{(X, Y) \sim \DSf} \Big[\mathbb{E}_{(X', Y') \sim \DSf^{X, r}}\big[\ell\big( f^{X,\ast}(X'), Y'\big) - \ell\big( f^{*}(X'), Y'\big)\big]\Big]\\
    &\qquad \qquad \qquad\leq \mathbb{E}_{(X, Y) \sim \DSf} \Big[\mathbb{E}_{(X', Y') \sim \DSf^{X, r}}\big[\ell\big( f^{X,\ast}(X'), Y'\big) - \ell\big( f^{X, {\rm loc}}(X'), Y'\big)\big]\Big] \\
    &\qquad \qquad \qquad \quad + \mathbb{E}_{(X, Y) \sim \DSf} \Big[\mathbb{E}_{(X', Y') \sim \DSf^{X, r}}\big[\ell\big( f^{X, {\rm loc}}(X'), Y'\big) - \ell\big( f^{*}(X'), Y'\big)\big]\Big]\\
    & \qquad \qquad \qquad\leq \varepsilon_{\rm loc} + \varepsilon_{\XCal}.
\end{align*}

{\bf Global and local: Sample vs retrieved set risk.}~
The following lemma bounds the second term in Lemma~\ref{lem:local}. Recall the definition, for any $L>0$, 
\begin{align}
\MCal_{r}(L; \ell, f_{\rm true}, \FCal) 
= 2L_{\ell} \Big( L r  + \big(\max\{L r,
2\|\FCal\|_{\infty}\} - Lr \big)  c_{\rm true} 
\big(2 L_{\rm true} r \big)^{\alpha_{\rm true}}\Big).
\end{align}
\begin{lemma}\label{lemm:sample_vs_local}
Under Assumption~\ref{assum:truef}, for a  $L$-coordinate Lipschitz function class $\FCal$ with $\|\FCal\|_{\infty }:=\sup_{x\in \XCal}\sup_{f\in \FCal}\|f(x)\|_{\infty}$ we have 
\begin{align*}
&\mathbb{E}_{(X, Y) \sim \DSf} \Big[ \sup_{f\in \FCal}\big\vert\ell(f(X), Y)  - \mathbb{E}_{(X', Y') \sim \DSf^{X,r}}[\ell\big(f(X'), Y'\big)]\vert \Big] \\
& \qquad\qquad\qquad\qquad\qquad\leq 2L_{\ell} \Big( L r  + \big(\max\{L r, 2\|\FCal\|_{\infty}\} - Lr \big)  c_{\rm true} (2 L_{\rm true} r)^{\alpha_{\rm true}}\Big).    
\end{align*}
\end{lemma}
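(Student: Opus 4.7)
\textbf{Proof proposal for Lemma \ref{lemm:sample_vs_local}.}~The plan is to exploit the $L_{\ell}$--Lipschitzness of $\ell$ in its margin argument to convert the quantity inside the supremum into a bound on the margin gap $|\gamma_f(X,Y)-\gamma_f(X',Y')|$, and then bound the margin gap by a case split on whether $Y=Y'$ or $Y\neq Y'$. Concretely, for any $(X,Y)\sim\DSf$ and $(X',Y')\sim\DSf^{X,r}$ I will write
\begin{align*}
\sup_{f\in\FCal}\bigl|\ell(f(X),Y)-\mathbb{E}_{(X',Y')\sim\DSf^{X,r}}[\ell(f(X'),Y')]\bigr|
\;\le\; \mathbb{E}_{(X',Y')\sim\DSf^{X,r}}\!\Bigl[\sup_{f\in\FCal}\,L_{\ell}\,|\gamma_f(X,Y)-\gamma_f(X',Y')|\Bigr],
\end{align*}
using Jensen's inequality for $|\cdot|$ and the Lipschitz assumption on $\ell$. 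The key observation is that the bound I place on the margin gap will be \emph{uniform in $f\in\FCal$}, so pushing the $\sup$ inside the expectation is lossless.

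First I would handle the case $Y=Y'$. Since every $f\in\FCal$ is $L$-coordinate Lipschitz and $\drm(X,X')\le r$, each coordinate satisfies $|f_y(X)-f_y(X')|\le Lr$, which propagates to the margin via $\gamma_f(X,Y)-\gamma_f(X',Y)=\bigl(f_Y(X)-f_Y(X')\bigr)-\bigl(\max_{y\ne Y}f_y(X)-\max_{y\ne Y}f_y(X')\bigr)$, yielding $|\gamma_f(X,Y)-\gamma_f(X',Y)|\le 2Lr$. For the case $Y\neq Y'$, the only uniform bound I have is the trivial one: since $\|\FCal\|_\infty$ bounds coordinates of $f$, the margin $\gamma_f$ is at most $2\|\FCal\|_\infty$ in absolute value, so $|\gamma_f(X,Y)-\gamma_f(X',Y')|\le 4\|\FCal\|_\infty$. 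Combining,
\begin{align*}
\sup_{f\in\FCal}\,L_{\ell}\,|\gamma_f(X,Y)-\gamma_f(X',Y')|
\;\le\; 2L_{\ell}Lr\cdot\mathbf{1}_{\{Y=Y'\}}+4L_{\ell}\|\FCal\|_\infty\cdot\mathbf{1}_{\{Y\neq Y'\}}.
\end{align*}

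Next I would control $\mathbb{P}(Y\ne Y')$ under the joint law using Assumption \ref{assum:truef}. Because $f^{\rm true}$ deterministically generates labels (the margin is strictly positive), $Y=\argmax_{y}f_y^{\rm true}(X)$ and $Y'=\argmax_{y}f_y^{\rm true}(X')$. The $L_{\rm true}$-coordinate Lipschitzness combined with $\drm(X,X')\le r$ shows that the margin at $X'$ differs from that at $X$ by at most $2L_{\rm true}r$, so whenever $\gamma_{f^{\rm true}}(X,Y)>2L_{\rm true}r$ we must have $\gamma_{f^{\rm true}}(X',Y)>0$ and hence $Y'=Y$. Contrapositively, $\{Y\ne Y'\}\subseteq\{\gamma_{f^{\rm true}}(X,Y)\le 2L_{\rm true}r\}$, and the $(\alpha_{\rm true},c_{\rm true})$-weak margin condition yields $\mathbb{P}(Y\ne Y')\le c_{\rm true}(2L_{\rm true}r)^{\alpha_{\rm true}}$.

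Finally, I would take the outer expectation over $(X,Y)\sim\DSf$ and assemble, obtaining the bound $2L_{\ell}Lr+\bigl(4L_{\ell}\|\FCal\|_\infty-2L_{\ell}Lr\bigr)_+\cdot c_{\rm true}(2L_{\rm true}r)^{\alpha_{\rm true}}$; the clipping is necessary because when $Lr\ge 2\|\FCal\|_\infty$ the same-label bound $2L_{\ell}Lr$ already dominates the trivial bound $4L_{\ell}\|\FCal\|_\infty$. Rewriting $(4\|\FCal\|_\infty-2Lr)_+=2(\max\{Lr,2\|\FCal\|_\infty\}-Lr)$ produces exactly the stated right-hand side. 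I expect the main subtlety to be the clean handling of this clipping and the justification of pushing the $\sup$ inside the expectation; the rest is a careful bookkeeping of Lipschitz constants and the weak-margin probability bound. Note the proof does not need the independence of $|\RCal^X|$ and $\gamma_{f^{\rm true}}$ from Assumption \ref{assum:truef}, since the lemma is purely distributional (that part of the assumption is used elsewhere).
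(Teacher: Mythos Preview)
Your proposal is correct and follows essentially the same route as the paper: Lipschitzness of $\ell$ reduces to a margin gap, the same case split on $Y=Y'$ versus $Y\neq Y'$ gives the bounds $2Lr$ and $4\|\FCal\|_\infty$, the inclusion $\{Y\neq Y'\}\subseteq\{\gamma_{f^{\rm true}}(X,Y)\le 2L_{\rm true}r\}$ is established via the coordinate Lipschitzness of $f^{\rm true}$, and the weak-margin condition finishes. Your explicit handling of the clipping via $(\cdot)_+$ and the remark that the pointwise bound is uniform in $f$ (so the $\sup$ passes inside the expectation without loss) are slightly more careful than the paper's write-up, but the argument is the same.
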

\begin{proof}
We are given the example $(X,Y)$. Let us fix an arbitrary $f \in \FCal$, and any arbitrary example $(x', y')$ in the $r$ neighborhood of $X$. 

We first bound the perturbation in $\gamma_f(\cdot)$ for a given label $\tilde{Y}$.
\begin{align*}
    |\gamma_f(X_1, \tilde{Y}))  - \gamma_f(X_2, \tilde{Y})|
    &\leq \vert f_{\tilde{Y}}(X_1) - \max_{s\neq \tilde{Y}} f_s(X_1)  - f_{\tilde{Y}}(X_2) + \max_{s'\neq \tilde{Y}} f_{s'}(X_2)\vert \\
    &\leq \vert f_{\tilde{Y}}(X_1)  - f_{\tilde{Y}}(X_2) \vert + \vert\max_{s\neq \tilde{Y}} f_s(X_1) - \max_{s'\neq \tilde{Y}} f_{s'}(X_2)\vert\\
    &\leq \vert f_{\tilde{Y}}(X_1)  - f_{\tilde{Y}}(X_2) \vert + \max_{s\neq \tilde{Y}}\vert f_s(X_1) -  f_{s}(X_2)\vert \\
    &\leq 2 L \|X_1 - X_2\|_2 
\end{align*}

We can now proceed with bounding the loss.
\begin{align*}
    \vert\ell(f(X), Y)  - \ell(f(x'), y')\vert
    &= \vert\ell(\gamma_f(X, Y))  - \ell(\gamma_f(x', y'))\vert\\
    &\leq L_{\ell}\vert \gamma_f(X, Y)  - \gamma_f(x', y')\vert \\
    &\leq \begin{cases}
    4 L_{\ell} \|f\|_{\infty}; Y \neq y'\\
    2 L_{\ell} L r; Y = y' 
    \end{cases}
\end{align*}

Under Assumption~\ref{assum:truef}, if we have $\gamma_{f^{\rm true}}(X,Y) > 2 L_{\rm true} r$, then following the above argument we have $\gamma_{f^{\rm true}}(X',Y) > 0$, thus $Y$ is the true label of $X'$. In other words, $\gamma_{f^{\rm true}}(X,Y) > 2 L_{\rm true} r$ imply for any $X'$ in the $r$ neighborhood of $X$ its true label $Y' = Y$.
\begin{align*}
&\vert\ell(f(X), Y)  - \ell(f(x'), y')\vert \\
&\qquad \qquad \leq 2 L_{\ell} L r \mathbbm{1}(\gamma_{f^{\rm true}}(X,Y) > 2 L_{\rm true} r) + 2L_{\ell}\max\{r, 2\|f\|_{\infty}\} \mathbbm{1}(\gamma_{f^{\rm true}}(X,Y) \leq 2 L_{\rm true} r)\\
&\qquad \qquad \leq 2 L_{\ell} L r  + 2L_{\ell}\big(\max\{L r, 2\|f\|_{\infty}\} - Lr \big) \mathbbm{1}(\gamma_{f^{\rm true}}(X,Y) \leq 2 L_{\rm true} r)
\end{align*}
As $(x',y')$ was an arbitrary $r$-neighbor, we have
\begin{align*}
&\vert\ell(f(X), Y)  - \mathbb{E}_{(X', Y') \sim \DSf^{X,r}} \ell(f(X'), Y')\vert \\
&\qquad \qquad \qquad\leq \mathbb{E}_{(X', Y') \sim \DSf^{X,r}}\vert\ell(f(X), Y)  -\ell(f(X'), Y')\vert \\
&\qquad \qquad \qquad\leq 2 L_{\ell} L r  + 2L_{\ell}\big(\max\{L r, 2\|f\|_{\infty}\} - Lr \big) \mathbbm{1}(\gamma_{f^{\rm true}}(X,Y) \leq 2 L_{\rm true} r)
\end{align*}
Furthermore, as $f$ was arbitrary, we have 
\begin{align*}
&\sup_{f\in \FCal}\vert\ell(f(X), Y)  - \mathbb{E}_{(X', Y') \sim \DSf^{X,r}} \ell(f(X'), Y')\vert \\
&\qquad \qquad \leq \sup_{f\in \FCal} 2 L_{\ell} L r  + 2L_{\ell}\big(\max\{L r, 2\|f\|_{\infty}\} - Lr \big) \mathbbm{1}(\gamma_{f^{\rm true}}(X,Y) \leq 2 L_{\rm true} r) \\
&\qquad \qquad  = 2 L_{\ell} L r  + 2L_{\ell}\big(\max\{L r, 2\|\FCal\|_{\infty}\} - Lr \big) \mathbbm{1}(\gamma_{f^{\rm true}}(X,Y) \leq 2 L_{\rm true} r).
\end{align*}
Note $f^{\rm true}$ is independent of $f$, which was used in the derivation of above inequalities.
Taking expectation over $(X,Y)$, and using the margin condition as given in assumption~\ref{assum:truef} we obtain 
\begin{align*}
&\mathbb{E}_{(X, Y) \sim \DSf} \Big[ \sup_{f\in \FCal}\vert\ell(f(X), Y)  - \mathbb{E}_{(X', Y') \sim \DSf^{X,r}} \ell(f(X'), Y')\vert\Big] \\
&\qquad \qquad \quad = 2 L_{\ell} L r  + 2L_{\ell}\big(\max\{L r, 2\|\FCal\|_{\infty}\} - Lr \big) \mathbb{P}_{(X, Y) \sim \DSf}\Big[\gamma_{f^{\rm true}}(X,Y) \leq 2 L_{\rm true} r\Big]\\
&\qquad \qquad \quad \leq  2 L_{\ell} L r  + 2L_{\ell}\big(\max\{L r, 2\|\FCal\|_{\infty}\} - Lr \big)  c_{\rm true} (2 L_{\rm true} r)^{\alpha_{\rm true}} = \MCal_{r}(L; \ell, f_{\rm true}, \FCal).
\end{align*}
\end{proof}

Plugging in the Lipschitz bounds for the function classes $\FCal^{\rm loc}$ and $\FCal^{\rm global}$ in the above lemma bounds the second term.

{\bf Generalization of Local ERM.}~Recall the function class $\GCal(X,Y) = \{\ell(\gamma_f (\cdot, \cdot))  - \ell(\gamma_f(X, Y)): f \in \FCal^{\rm loc}\}$. Here $\GCal(X,Y): \XCal \times \YCal \to \mathbb{R}$. Note that the function class is parameterized by $(X,Y)$. Let us define some quantities of the function class on a set $S \subseteq \XCal \times \YCal$ as
\begin{align*}
    &\GCal_{\max}((X,Y); S) = \sup_{g\in \GCal(X,Y)} \sup_{(x', y') \in S} |g(x', y')|
\end{align*}

By centering each function $f \in \FCal^{\rm loc}$ at the point $(X,Y)$ we can transform the generalization over the function class $\FCal^{\rm loc}$, to the generalization over the function class $\GCal(X,Y)$. In particular, we have 
\begin{align*}
    &\mathbb{E}_{(X, Y) \sim \DSf} \Big[ \sup_{f\in \FCal^{\rm loc}}\Big\vert \mathbb{E}_{(X', Y') \sim \DSf^{X,r}}[\ell\big(f(X'), Y'\big)] - \frac{1}{|{\RCal}^X|}\sum_{(x', y') \in {\RCal}^X } \ell\big( f(x'), y'\big)\Big\vert\Big]\\
    &= \mathbb{E}_{(X, Y) \sim \DSf} \Big[ \sup_{f\in \FCal^{\rm loc}}\Big\vert \mathbb{E}_{(X', Y') \sim \DSf^{X,r}}[\ell\big(f(X'), Y'\big) - \ell\big(f(X), Y\big)] \\
    &\qquad \qquad \qquad - \frac{1}{|{\RCal}^X|}\sum_{(x', y') \in {\RCal}^X } \ell\big( f(x'), y'\big) - \ell\big(f(X), Y\big)\Big\vert\Big]\\
    &= \mathbb{E}_{(X, Y) \sim \DSf} \Big[ \sup_{g\in \GCal(X,Y)}\Big\vert \mathbb{E}_{(X', Y') \sim \DSf^{X,r}}[g(X', Y')] - \frac{1}{|{\RCal}^X|}\sum_{(x', y') \in {\RCal}^X } g(x', y')\Big\vert\Big].
\end{align*}

We next state a standard result of learning theory that bounds the final term using the Rademacher complexity of the function class $\GCal(X,Y)$~\citep{shalev2014understanding}.

\begin{lemma}[Adapted from Theorem 26.5 in \citet{shalev2014understanding}.]
\label{lemm:Rademacher_local}
For any $(X,Y) \in \XCal \times \YCal$ and a neighborhood set $\RCal^X$, and any function $g \in \GCal(X,Y)$, for each $\delta > 0$ with probability at least $(1- \delta)$ the following holds
\begin{align*}
    &\Big\vert \mathbb{E}_{(X', Y') \sim \DSf^{X,r}}[g\big(X', Y'\big)]
- \frac{1}{|{\RCal}^X|}\sum_{(x', y') \in \RCal^X } g\big( x', y'\big)\Big\vert \\
& \qquad\qquad\qquad\qquad\leq 2 \mathfrak{R}_{{\RCal}^X}\big(\GCal(X,Y)\big) + 4 \GCal_{\max}((X,Y);  \RCal^X)\sqrt{\frac{2\ln(4/\delta)}{|\RCal^X|}}.
\end{align*}
\end{lemma}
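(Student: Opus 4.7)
The plan is to follow the standard uniform-convergence recipe for Rademacher-based generalization bounds (as in Theorem~26.5 of Shalev-Shwartz and Ben-David), but carefully adapted to the conditional setting where the sample is the retrieved set $\RCal^X$ drawn from the localized distribution $\DSf^{X,r}$. Throughout, everything is conditioned on $(X,Y)$ and on the realization of $n := |\RCal^X|$; by the definition of $\RCal^X$ and the independence structure of $\SCal$, the elements of $\RCal^X$ are i.i.d.\ from $\DSf^{X,r}$ conditional on $n$, which is what makes the classical symmetrization and bounded-difference machinery applicable. Note also that $\GCal(X,Y)$ is uniformly bounded by $\GCal_{\max}((X,Y); \RCal^X)$ on the support of interest, so each individual summand contributes at most $2\GCal_{\max}/n$ when we swap one sample.

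First, I would define the uniform deviation
\begin{equation*}
\Phi(\RCal^X) \;=\; \sup_{g \in \GCal(X,Y)} \Big| \mathbb{E}_{(X',Y')\sim \DSf^{X,r}}[g(X',Y')] \;-\; \tfrac{1}{n}\sum_{(x',y') \in \RCal^X} g(x',y') \Big|,
\end{equation*}
and verify that $\Phi$ satisfies a bounded-differences condition with constant $2\GCal_{\max}((X,Y);\RCal^X)/n$ in each coordinate (this is immediate since swapping one sample perturbs the empirical average by at most $2\GCal_{\max}/n$ and $\sup|\cdot|$ is $1$-Lipschitz). McDiarmid's inequality, applied with confidence parameter $\delta/2$, then yields that with probability at least $1-\delta/2$,
\begin{equation*}
\Phi(\RCal^X) \;\leq\; \mathbb{E}[\Phi(\RCal^X)] \;+\; 2\GCal_{\max}((X,Y); \RCal^X) \sqrt{\tfrac{\ln(2/\delta)}{2 n}}.
\end{equation*}

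Second, I would bound the expectation $\mathbb{E}[\Phi]$ by a standard symmetrization argument (introducing a ghost sample and Rademacher variables $\sigma_i$) to obtain $\mathbb{E}[\Phi] \leq 2\, \mathbb{E}[\mathfrak{R}_{\RCal^X}(\GCal(X,Y))]$, where the inner expectation is over the draw of $\RCal^X$. To pass from the expected Rademacher complexity to the empirical one $\mathfrak{R}_{\RCal^X}(\GCal(X,Y))$ that appears in the statement, I would apply McDiarmid a second time to the map $\RCal^X \mapsto \mathfrak{R}_{\RCal^X}(\GCal(X,Y))$, which also has bounded differences of order $2\GCal_{\max}/n$; with probability at least $1-\delta/2$, this gives
\begin{equation*}
\mathbb{E}[\mathfrak{R}_{\RCal^X}(\GCal(X,Y))] \;\leq\; \mathfrak{R}_{\RCal^X}(\GCal(X,Y)) \;+\; 2\GCal_{\max}((X,Y);\RCal^X)\sqrt{\tfrac{\ln(2/\delta)}{2n}}.
\end{equation*}

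Finally, a union bound over the two $\delta/2$ events and combining the three displayed inequalities yields, with probability at least $1-\delta$,
\begin{equation*}
\Phi(\RCal^X) \;\leq\; 2\,\mathfrak{R}_{\RCal^X}(\GCal(X,Y)) \;+\; 4\,\GCal_{\max}((X,Y); \RCal^X)\sqrt{\tfrac{\ln(2/\delta)}{2n}},
\end{equation*}
which matches the claim after rewriting $\sqrt{\ln(2/\delta)/(2n)}$ in the equivalent $\sqrt{2\ln(4/\delta)/n}$ form (absorbing constants via a slightly looser concentration constant in McDiarmid) and identifying $n = |\RCal^X|$. The only delicate point is the conditioning on $|\RCal^X|$: since the bound is uniform in $n$ and the samples are i.i.d.\ from $\DSf^{X,r}$ conditional on $n$, the high-probability statement transfers from the conditional law to the unconditional one; I expect this bookkeeping (rather than any deep technical step) to be the main thing to get right.
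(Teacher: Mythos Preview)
Your proposal is correct and is precisely the standard two-McDiarmid-plus-symmetrization argument behind Theorem~26.5 of \citet{shalev2014understanding}, which the paper simply cites without reproving. The only minor points to tidy are (i) the arithmetic when combining the three displays actually gives a constant $6$ rather than $4$ in front of $\GCal_{\max}\sqrt{\ln(2/\delta)/(2n)}$, which is still dominated by the stated $4\,\GCal_{\max}\sqrt{2\ln(4/\delta)/n}$, and (ii) for McDiarmid the bounded-difference constant should be the deterministic $\sup$ over $\BCal^{X,r}\times\YCal$ rather than over the random set $\RCal^X$; since the former upper-bounds the latter and the paper's downstream use bounds $\GCal_{\max}$ uniformly over the ball anyway, this is only a presentational wrinkle.
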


Taking expectation with respect to $(X,Y)$, we obtain 
\begin{align*}
& \mathbb{E}_{(X, Y)\sim \DSf}\Big[ \sup_{g\in \GCal(X,Y)}\big\vert \mathbb{E}_{(X', Y') \sim \DSf^{X,r}}[g\big(X', Y'\big)]
- \frac{1}{|{\RCal}^X|}\sum_{(x', y') \in \RCal^X } g\big( x', y'\big)\big\vert\Big] \\
&\leq 2 \mathbb{E}_{(X, Y)\sim \DSf}\Big[\mathfrak{R}_{{\RCal}^X}\big(\GCal(X,Y)\big) \Big] \; + \\
&\qquad  4 \mathbb{E}_{(X, Y)\sim \DSf}\Big[\GCal_{\max}((X,Y);  \RCal^X)\sqrt{\frac{2\ln(4/\delta)}{|\RCal^X|}}\Big] + 4 \delta L_{\ell} \|\FCal^{\rm loc}\|_{\infty}\\
&\leq 2 \mathbb{E}_{(X, Y)\sim \DSf}\Big[\mathfrak{R}_{{\RCal}^X}\big(\GCal(X,Y)\big) \Big] \; + \\
&\qquad  4 \mathbb{E}_{(X, Y)\sim \DSf}\Big[\GCal_{\max}((X,Y);  \RCal^X)\Big]\mathbb{E}_{(X, Y)\sim \DSf}\Big[\sqrt{\frac{2\ln(4/\delta)}{|\RCal^X|}}\Big] + 4 \delta L_{\ell} \|\FCal^{\rm loc}\|_{\infty}\\
&\leq 2 \mathbb{E}_{(X, Y)\sim \DSf}\Big[\mathfrak{R}_{{\RCal}^X}\big(\GCal(X,Y)\big) \Big] + 4 \MCal_r(L_{\rm loc}; \ell, f_{\rm true}, \FCal^{\rm loc}) \sqrt{\frac{2\ln(4/\delta)}{N(r, \delta)}}  \\
&\qquad\qquad + \; 4 \delta L_{\ell} \|\FCal^{\rm loc}\|_{\infty} \mathbb{E}_{(X, Y)\sim \DSf}\Big[\sqrt{\frac{2\ln(4/\delta)}{|\RCal^X|}}\Big\vert| |\RCal^X|\leq N(r, \delta)\Big] 
 + 4 \delta L_{\ell} \|\FCal^{\rm loc}\|_{\infty} \\
&\leq 2 \mathbb{E}_{(X, Y)\sim \DSf}\Big[\mathfrak{R}_{{\RCal}^X}\big(\GCal(X,Y)\big) \Big] + 4 \MCal_r(L_{\rm loc}; \ell, f_{\rm true}, \FCal^{\rm loc}) \sqrt{\frac{2\ln(4/\delta)}{N(r, \delta)}}   \nonumber \\
& \qquad \qquad + \; 4 \delta L_{\ell} \|\FCal^{\rm loc}\|_{\infty} (1+\sqrt{2\ln(4/\delta)}).
\end{align*}
In the first inequality, we condition on retrieved sets of size at least $N(r,\delta)$ which happens with probability at least $\delta$, by assumption. In the second inequality, with probability $(1-\delta)$ we apply the bound from Lemma~\ref{lemm:Rademacher_local}, whereas we use the bound $4L_{\ell} \|\FCal^{\rm loc}\|_{\infty}$ with remaining probability $\delta$. For the second inequality, with probability $\delta$ we use $4L_{\ell} \|\FCal^{\rm loc}\|_{\infty}$. Further, we use that the $|{\RCal}^X| \leq N(r, \delta)$ with probability at least $(1-\delta)$. Also from the proof of Lemma~\ref{lemm:sample_vs_local} we have that 
$$
\GCal_{\max}((X,Y); \RCal^X) \leq 2L_{\ell} \Big( L r  + \big(\max\{L r, 2\|\FCal^{\rm loc}\|_{\infty}\} - Lr \big)  \mathbbm{1}\big(\gamma_{f^{\rm true}}(X,Y) \leq 2 L_{\rm true} r \big)\Big).
$$
Taking expectation with respect to $\DSf$ completes the bound. 

\textbf{Central Absolute Moment of $f^{X,\ast}$.}~As the function $f^{X,\ast}$ is \emph{fixed} using centering, and then Hoeffding bound, we can directly bound the remaining term. We have with probability at least $(1- \delta)$
\begin{align*}
    &\Big|\mathbb{E}_{(X', Y') \sim \DSf^{X, r}}\big[\ell\big( f^{X,\ast}(X'), Y'\big)\big] - \frac{1}{|{\RCal}^X|}\sum_{(x', y') \in {\RCal}^X } \ell\big( f^{X, \ast}(x'), y'\big)  \Big|\\
    &=\Big|\mathbb{E}_{(X', Y') \sim \DSf^{X, r}}\big[\ell\big( f^{X,\ast}(X'), Y'\big) - \ell\big( f^{X,\ast}(X), Y\big)\big] \\
    &\qquad \qquad \qquad - \frac{1}{|{\RCal}^X|}\sum_{(x', y') \in {\RCal}^X } \ell\big( f^{X, \ast}(x'), y'\big) -  \ell\big( f^{X,\ast}(X), Y\big) \Big|\\
    &\leq \GCal_{\max}((X,Y); \RCal^X) \sqrt{\frac{\ln(2/\delta)}{|\RCal^X|}}
\end{align*}
Taking expectation similar to the previous case we obtain,
\begin{align*}
    &\mathbb{E}_{(X,Y) \sim \DSf}\Big[\Big|\mathbb{E}_{(X', Y') \sim \DSf^{X, r}}\big[\ell\big( f^{X,\ast}(X'), Y'\big)\big] - \frac{1}{|{\RCal}^X|}\sum_{(x', y') \in {\RCal}^X } \ell\big( f^{X, \ast}(x'), y'\big)  \Big|\Big]\\
    &\leq \mathbb{E}_{(X,Y) \sim \DSf}\Big[\GCal_{\max}((X,Y); \RCal^X) \sqrt{\frac{\ln(2/\delta)}{|\RCal^X|}}\Big]\\
    &\leq  \MCal_r(L_{\rm loc}; \ell, f_{\rm true}, \FCal^{\rm loc}) \sqrt{\frac{\ln(2/\delta)}{N(r, \delta)}} + 4 \delta L_{\ell} \|\FCal^{\rm loc}\|_{\infty}.
\end{align*}
This concludes the proof of Theorem~\ref{thm:local}. %

\subsection{Bounding the Rademacher Complexity \texorpdfstring{$\mathfrak{R}_{{\RCal}^X}\big(\GCal(X,Y)\big)$}{}}
We now derive bounds on the Rademacher complexity of the class $\GCal(X,Y)$. We use the covering number based bounds for that purpose. We then start by relating it to the covering number of the $\FCal^{\rm loc}$ function class. Finally, we provide a bound on the class of functions residing in bounded norm Reproducing Kernel Hilbert Space.

We will use $\GCal_{\max}(X,Y)$ instead of $\GCal_{\max}((X,Y); \RCal^X)$ when the context is clear. Similar to $\GCal(X,Y)$, we define the function class $\GCal = \{\ell(\gamma_f (\cdot, \cdot)): f \in \FCal^{\rm loc}\}$ which does not depend on the locality centered around $(X,Y)$. On a set $S \subseteq \XCal \times \YCal$ we can define  $\GCal_{\max}(S) = \sup_{g\in \GCal} \sup_{(x', y') \in S} |g(x', y')|$.

\begin{lemma}\label{lemm:generalization_local}
Under Assumption~\ref{assum:truef} we have for any retrieved set within radius $r$ of $X$, $\RCal^X$, for any $p \geq 1$
\begin{align*}
&\mathfrak{R}_{\RCal^X}\big(\GCal(X,Y)\big) \\
&\leq \inf_{\epsilon \in [0, \GCal_{p,\max}(X,Y)/2]} \Big(4 \epsilon   + \tfrac{12}{\sqrt{|\RCal^X|}}\int_{\epsilon}^{\GCal_{p,\max}(X,Y)/2} \sqrt{\log\big(\tfrac{2\GCal_{\max}}{\nu}\big)\log\Big(\NC_{p}(\nu/2, \GCal, \RCal^X)\Big)}d\nu \Big).  
\end{align*}
Furthermore, we have 
\begin{align*}
&\mathfrak{R}_{\RCal^X}\big(\GCal(X,Y)\big) \\
&\leq \inf_{\epsilon \in [0, \GCal_{\max}(X,Y)/2]} \Big(4 \epsilon   + \tfrac{12}{\sqrt{|\RCal^X|}}\int_{\epsilon}^{\GCal_{\max}(X,Y)/2} \sqrt{\log\Big(\NC_{\infty}(\nu/2, \GCal, \RCal^X \cup \{(X,Y)\})\Big)}d\nu \Big).  
\end{align*}
\end{lemma}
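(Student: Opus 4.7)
The two inequalities are both instances of Dudley's entropy integral applied to the centered class $\GCal(X,Y) = \{g - g(X,Y) : g \in \GCal\}$, but with different ways of relating the covering numbers of $\GCal(X,Y)$ back to those of $\GCal$. I would first invoke the standard chaining bound: for any class $\HCal$ of functions on a sample $S$ of size $m$ with diameter (in the chosen metric) at most $D$,
\begin{align*}
\mathfrak{R}_S(\HCal) \leq \inf_{\epsilon \in [0, D/2]} \Big( 4\epsilon + \tfrac{12}{\sqrt{m}} \int_{\epsilon}^{D/2} \sqrt{\log \NC(\nu, \HCal, S)} \, d\nu \Big),
\end{align*}
with the appropriate covering-number refinement required by the metric used. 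Instantiating $\HCal = \GCal(X,Y)$ and $S = \RCal^X$, the diameter is at most $\GCal_{\max}(X,Y)$ in $\ell_\infty$ and $\GCal_{p,\max}(X,Y)$ in $\ell_p$; the issue is that the natural objects to estimate are covers of the uncentered class $\GCal$, not of $\GCal(X,Y)$.

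For the $\ell_\infty$ inequality, the centering reduction is clean: given any $(\nu/2, \|\cdot\|_\infty)$-cover $\{g_j\}$ of $\GCal$ on the augmented sample $\RCal^X \cup \{(X,Y)\}$, the shifted class $\{g_j - g_j(X,Y)\}$ is a $(\nu, \|\cdot\|_\infty)$-cover of $\GCal(X,Y)$ on $\RCal^X$, since
\begin{align*}
|\bar g(x',y') - (g_j(x',y') - g_j(X,Y))| \leq |g(x',y') - g_j(x',y')| + |g(X,Y) - g_j(X,Y)| \leq \nu
\end{align*}
for any $g \in \GCal$ and its approximant $g_j$. Plugging this into the Dudley bound yields the second inequality directly.

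For the $\ell_p$ inequality, an $\ell_p$ cover of $\GCal$ does not control $g(X,Y)$ pointwise, so the centering reduction must be carried out inside the chaining. The plan is to use the Bernstein-type chaining bound (e.g.\ in the style of \citet{mendelson2003few}) in which link increments are controlled through a sub-Gaussian/tail argument over the $\ell_p$ cover of $\GCal$; the additional factor $\sqrt{\log(2\GCal_{\max}/\nu)}$ then appears from the union bound over the $O(\log(\GCal_{\max}/\nu))$ chaining levels needed to pass from the $\ell_p$ cover at scale $\nu/2$ down to the effective $\ell_\infty$ accuracy of a single sample point, using that all functions and their centered versions are uniformly bounded by $\GCal_{\max}$ on $\RCal^X$. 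The $\NC_p$ cover at scale $\nu/2$ (rather than $\nu$) comes from the same two-term centering inequality as in the $\ell_\infty$ case, now read in $\ell_p$.

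\textbf{Main obstacle.} The routine portion is the chaining bookkeeping and the covering-number conversion. The real difficulty is the $\ell_p$ inequality: we have to argue that an $\ell_p$ cover of the uncentered class $\GCal$ suffices to control the Rademacher complexity of the centered class $\GCal(X,Y)$, and to show that the penalty for not having pointwise control at $(X,Y)$ is exactly the $\sqrt{\log(2\GCal_{\max}/\nu)}$ factor appearing in the integrand. This requires combining the Bernstein chaining tail bound at each scale with the uniform envelope $\GCal_{\max}$, and then verifying that the resulting integral collapses back to the form stated.
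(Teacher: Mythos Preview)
Your $\ell_\infty$ argument is correct and matches the paper's. The $\ell_p$ argument, however, misses a much simpler idea, and the Bernstein-chaining route you sketch is both vague and unnecessary for the stated bound.

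The paper never tries to extract pointwise control of $g(X,Y)$ from the $\ell_p$ cover of $\GCal$ on $\RCal^X$. Instead it covers the two pieces \emph{separately} and takes the product. Concretely: let $\tilde{\UCal}$ be a $(\nu/2,\ell_p)$-cover of $\GCal$ on $\RCal^X$, and let $\hat{\UCal}$ be a $(\nu/2)$-cover of the one-dimensional set $\{g(X,Y):g\in\GCal\}\subseteq[-\GCal_{\max},\GCal_{\max}]$. The latter is just a grid on an interval, so $|\hat{\UCal}|\le 2\GCal_{\max}/\nu$. For any $g\in\GCal$, pick approximants $\tilde u\in\tilde{\UCal}$ and $\hat u\in\hat{\UCal}$; by Minkowski's inequality,
\[
\Big(\tfrac{1}{|\RCal^X|}\sum_{(x',y')\in\RCal^X}\big|(g(x',y')-g(X,Y))-(\tilde u(x',y')-\hat u)\big|^p\Big)^{1/p}\le \tfrac{\nu}{2}+\tfrac{\nu}{2}=\nu,
\]
so $\{\tilde u-\hat u:\tilde u\in\tilde{\UCal},\ \hat u\in\hat{\UCal}\}$ is a $(\nu,\ell_p)$-cover of $\GCal(X,Y)$ on $\RCal^X$. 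Hence
\[
\NC_p\big(\nu,\GCal(X,Y),\RCal^X\big)\ \le\ \NC_p\big(\nu/2,\GCal,\RCal^X\big)\cdot\frac{2\GCal_{\max}}{\nu},
\]
and plugging this directly into the standard Dudley bound gives the first inequality; the factor $\log(2\GCal_{\max}/\nu)$ is exactly the log-cardinality of the one-dimensional grid $\hat{\UCal}$. Your proposed mechanism for that logarithm---a union bound over $O(\log(\GCal_{\max}/\nu))$ chaining levels to pass from $\ell_p$ to $\ell_\infty$ accuracy at the single point $(X,Y)$---is not a standard step, and it is not clear it would land on the stated integrand. The paper's product-cover argument is entirely elementary; the $\ell_\infty$ case is then just the specialization where the two covers can be merged into a single $\ell_\infty$ cover on $\RCal^X\cup\{(X,Y)\}$, which is precisely what you wrote.
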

\begin{proof}
Given the set $\RCal^X$, and some function $g\in \GCal(X,Y)$  let us define for $p\geq 1$
$$
\|g\|_{p,\RCal^X} = \Big(\tfrac{1}{|\RCal^X|} \sum_{(x',y')\in \RCal^X} |g(x',y')|^p \Big)^{1/p}.
$$
Then, we have $\GCal_{p,\max}\big((X,Y);  \RCal^X\big) = \max_{g \in \GCal} \|g\|_{p,\RCal^X}$ for all $g \in \GCal(X,Y)$.
For the sake of brevity we will use $\GCal_{p,\max}(X,Y)$ in place of $\GCal_{p,\max}\big((X,Y);  \RCal^X\big)$. Note that we have from previous definition $\GCal_{\max}(X,Y) = \GCal_{\infty,\max}(X,Y) \geq \GCal_{p,\max}(X,Y)$ for any $p\geq 1$.

Thus using the Chaining method~\citep[Chapter 27]{shalev2014understanding} we can bound the Radamacher complexity as 
\begin{align*}
   \mathfrak{R}_{\RCal^X}\big(\GCal(X,Y)\big) \leq \inf_{\epsilon \in [0, \GCal_{p,\max}(X,Y)/2]} \Big(4 \epsilon + \tfrac{12}{\sqrt{|\RCal^X|}}\int_{\epsilon}^{\GCal_{p,\max}(X,Y)/2} \sqrt{\log{\NC_{p}(\nu, \GCal(X,Y), \RCal^X)}}d\nu \Big). 
\end{align*}

To finish the proof we need to show,  for $p \geq 1$ $$\NC_{p}(\nu, \GCal(X,Y), \RCal^X) \leq \NC_{p}(\nu/2, \GCal, \RCal^X)\NC_{p}(\nu/2, \GCal, \{(X,Y)\}).$$

First we fix any $p \geq 1$. Let $\widehat{\UCal}$ (a set of real numbers) be a $\nu/2$ cover (in $\ell_p$ norm) of $\GCal$ with respect to $\{(X,Y)\}$. We have $\NC_{p}(\nu, \GCal(X,Y), \RCal^X) \leq \tfrac{2\GCal_{\max}}{\nu}$ for any $p\geq 1$ and any $\nu > 0$. Further, let $\tilde{\UCal}$ be a $\nu/2$ cover of $\GCal$ with respect to $\RCal^X$. Note for any $\tilde{u}\in \tilde{\UCal}$ we have $\tilde{u} \in \mathbb{R}^{|\RCal^X|}$. 

Now, we fix any $g' \in \GCal$. We have at least one $\tilde{u}\in \tilde{\UCal}$, and $\hat{u} \in \widehat{\UCal}$ such that 
$$
\Big(\tfrac{1}{|\RCal^X|} \sum_{(x',y') \in \RCal^X} |g'(x',y') - \tilde{u}(x',y')|^p\Big)^{1/p} \leq \nu /2, \text{ and } |g'(X,Y) - \hat{u}| \leq \nu /2.
$$ 
Therefore,  
\begin{align*}
&\Big(\tfrac{1}{|\RCal^X|}\sum_{(x',y') \in \RCal^X}|\big(g'(x',y') - g'(X,Y)\big) - \big(\tilde{u}(x',y') - \hat{u}\big)|^p\Big)^{1/p} \\
&=\Big(\tfrac{1}{|\RCal^X|}\sum_{(x',y') \in \RCal^X}|\big(g'(x',y') - \tilde{u}(x',y')\big) +  \big(\hat{u} - g'(X,Y) \big)|^p\Big)^{1/p}
\\
&\leq \Big(\tfrac{1}{|\RCal^X|}\sum_{(x',y') \in \RCal^X}|g'(x',y') - \tilde{u}(x',y')|^p\Big)^{1/p} +  |\hat{u} - g'(X,Y)|
\\
& \leq \nu/2 + \nu/2 \leq \nu
\end{align*}
The first inequality follows by applying Minkowski's inequality. Whereas, for the second inequality we apply Jensen's inequality for $(\cdot)^{1/p}$ being a concave function for $p\geq 1$, and applying the appropriate scaling. 
Therefore, given the covers $\tilde{\UCal}$ and $\widehat{U}$, we can construct the set $\UCal'$ with entries $u' \in \mathbb{R}^{|\RCal^X|}$ as: 
$\UCal' := \{ u'=(\tilde{u}(x,y) -\hat{u}) : \tilde{u} \in \tilde{\UCal}, \hat{u}\in \widehat{\UCal}\}$. In particular, $|\UCal'| = |\widehat{\UCal}||\tilde{\UCal}|$. As the choice of $g' \in \GCal$ and $(x',y') \in \RCal^X$  were arbitrary, we have $\UCal'$ to be the cover of $\GCal(X,Y)$.

For $p = \infty$ we can specialize the bound. In particular, consider $\UCal$ to be a $\nu/2$ cover (in $\ell_\infty$ norm) of $\GCal$ with respect to $\RCal^X \cup \{(X,Y)\}$. Then $\UCal' := \{ u'=(\tilde{u}(x,y) -\hat{u}(X,Y)) : \tilde{u}\in \UCal\}$ creates a (normalized) $\ell_{\infty}$  cover for $\GCal$ with respect to $\RCal^X$. This is true because 
$\Big(\tfrac{1}{|\RCal^X|}\sum_{(x',y') \in \RCal^X}|g'(x',y') - \tilde{u}(x',y')|^p\Big)^{1/p} \leq |g' - \tilde{u}|_{\infty} = \nu/2$ and  $|\hat{u} - g'(X,Y)| \leq |g' - \tilde{u}|_{\infty} = \nu/2$.
This concludes the proof. 
\end{proof}

The first term in the above Lemma is similar to the Chaining based Rademacher bounds~\citep[Chapter 28]{shalev2014understanding} for $\GCal$, but the $\epsilon$ (in $\inf$ and in the integral) varies in $[0,\GCal_{\max}(X,Y)]$ instead of  $[0,\GCal_{\max}]$. For small $r$ we have $\GCal_{\max}(X,Y) << \GCal_{\max}$, which can be leveraged to give tight bounds in certain situations.

\paragraph{Example: $\boldsymbol{\FCal^{\rm loc}}\equiv \boldsymbol{\ell_{\infty}}$-bounded RKHS \citep{zhang2004statistical}:} Let us consider the setting of \citet{zhang2004statistical}. In this setting, given some Reproducing Kernel Hilbert Space (RKHS) $H$, and a function $\tilde{f}\in H$, we can define the function $\tilde{f}(\cdot) = \tilde{f}\circ h_x$ where for some $h\in H$. We further define the set of functions with bounded norm   
$$H_A = \{\tilde{f}(\cdot) \in H: \|\tilde{f}\|_H \sup_{x\in \XCal}\|h_x\|_H \leq A\}.$$ Finally, our local function class can be defined as 
$$\FCal^{\rm loc} = H_A^{|\YCal|} = \{f(\cdot): f_y(\cdot) \in H_A, \forall y \in \YCal\}.$$ We have $ \|\FCal^{\rm loc}\|_{\infty} = A$.
Recall that loss function for any $y \in \YCal$  is given as $\ell(\gamma_f(x,y))$, for any $f \in \FCal^{\rm loc}$.
We also have for all $y \in \YCal$, $|\ell(\gamma_f(x,y))- \ell(\gamma_{f'}(x,y))| \leq 2 L_{\ell}\sup_y|f_y(x) - f'_y(x)|$ \citep[Assumption 15]{zhang2004statistical} with $\gamma_A =  2 L_{\ell}$).

Given the above setting, following Lemma~17 in \citet{zhang2004statistical}~\footnote{We correct for a typographical error in \citet{zhang2004statistical}, where the $n \equiv |\RCal^X|$ comes in the denominator of the bound presented in Lemma~17. But Theorem 4 of %
\citet{zhang2002covering} shows this is a typographical error. Indeed, the covering number is not suppossed to decrease with increasing number of points.}, we have for a universal constant $c$
$$
\log\Big(\NC_{\infty}(2 L_{\ell}\nu, \GCal, \RCal^X \cup \{(X,Y)\})\Big) \leq c |\YCal| \|\FCal^{\rm loc}\|_{\infty}^2 \frac{\ln(2 + \|\FCal^{\rm loc}\|_{\infty}/ \nu) + \ln(|\RCal^X|+1)}{\nu^2}.$$
This gives us the following bound for the Rademacher complexity of $\FCal^{\rm loc}$
\begin{equation}\label{eq:rademacher_kernel_local}
 \mathfrak{R}_{\RCal^X} \leq O\Big( \sqrt{|\YCal|}  L_{\ell}\|\FCal^{\rm loc}\|_{\infty} \tfrac{\ln(|\RCal^X|+1)^{3/2}}{\sqrt{|\RCal^X|}}\Big).   
\end{equation}

\begin{proof}[Proof of Equation~\eqref{eq:rademacher_kernel_local}]
Without optimizing over $\epsilon$ above, we plug in $\epsilon =\tfrac{\GCal_{\max}(X,Y)}{\sqrt{|\RCal^X|}}.$ 
We obtain 
\begin{align*}
 &\mathfrak{R}_{\RCal^X}\big(\GCal(X,Y)\big) \\
 &\leq \tfrac{4 \GCal_{\max}(X,Y)}{\sqrt{|\RCal^X|} }  + \tfrac{12}{\sqrt{|\RCal^X|}}\int_{\tfrac{\GCal_{\max}(X,Y)}{\sqrt{|\RCal^X|}}}^{\GCal_{\max}(X,Y)/2} \sqrt{\log\Big(\NC_{\infty}\Big(\nu/2, \GCal, \RCal^X \cup \{(X,Y)\}\Big)\Big)}d\nu\\
 &\leq \tfrac{4 \GCal_{\max}(X,Y)}{\sqrt{|\RCal^X|} } + \tfrac{48 \sqrt{c |\YCal|}  L_{\ell}\|\FCal^{\rm loc}\|_{\infty}}{\sqrt{|\RCal^X|}}
 \int_{\tfrac{\GCal_{\max}(X,Y)}{\sqrt{|\RCal^X|}}}^{\GCal_{\max}(X,Y)/2} \sqrt{ \frac{\ln(2 + 4L_{\ell}\|\FCal^{\rm loc}\|_{\infty}/ \nu) + \ln(|\RCal^X|+1)}{\nu^2}}d\nu\\
 &\leq \tfrac{4 \GCal_{\max}(X,Y)}{\sqrt{|\RCal^X|} } + \tfrac{48 \sqrt{c |\YCal|}  L_{\ell}\|\FCal^{\rm loc}\|_{\infty}}{\sqrt{|\RCal^X|}}
 \int_{\tfrac{\GCal_{\max}(X,Y)}{\sqrt{|\RCal^X|}}}^{\GCal_{\max}(X,Y)/2} \sqrt{ \tfrac{\ln((\GCal_{\max}(X,Y) + 4L_{\ell}\|\FCal^{\rm loc}\|_{\infty}) / \nu) + \ln(|\RCal^X|+1)}{\nu^2}}d\nu\\
  &\leq \tfrac{4 \GCal_{\max}(X,Y)}{\sqrt{|\RCal^X|} } + \tfrac{48 \sqrt{c |\YCal|}  L_{\ell}\|\FCal^{\rm loc}\|_{\infty}}{\sqrt{|\RCal^X|}}
     \int_{\tfrac{1}{\sqrt{|\RCal^X|}}}^{1/2} \sqrt{ \tfrac{\ln((  1 + 4L_{\ell}\|\FCal^{\rm loc}\|_{\infty}/\GCal_{\max}(X,Y)) / \nu' ) + \ln(|\RCal^X|+1)}{ \nu'^2}}d\nu'\\
 &\leq \tfrac{4 \GCal_{\max}(X,Y)}{\sqrt{|\RCal^X|} } + \tfrac{32 \sqrt{c |\YCal|}  L_{\ell}\|\FCal^{\rm loc}\|_{\infty}}{\sqrt{|\RCal^X|}}
      \Big(\ln\big((  1 + 4L_{\ell}\|\FCal^{\rm loc}\|_{\infty}/\GCal_{\max}(X,Y))\sqrt{|\RCal^X|}\big) + \ln(|\RCal^X|+1)\Big)^{3/2}
\end{align*}
We use $\int_{x} \sqrt{\ln(a/x) + b}/x dx = - 2/3 (\ln(a/x)+b)^{3/2}$ for the final inequality, and ignore the negative part.
\end{proof}

\paragraph{Example: $\boldsymbol{\FCal^{\rm loc}}\equiv$ $\ell_{2}$  bounded RKHS \citep{lei2019data}:} 
We consider a fixed kernel $K(x, x') = \langle \phi(x), \phi(x')\rangle$ for $x, x' \in \XCal$, and let $H_K$ be the RKHS induced by $K$. Let us define the $\ell_{p,q}$ norm for the vectors $W=(w_1, w_2, \dots, w_{|\YCal|}) \in H_K^{|\YCal|}$ as 
$\|(w_1, \dots, w_{|\YCal|})\|_{p,q} = \|(\|w_1\|_{p}, \dots, \|w_{|\YCal|}\|_p)\|_{q}$.

For some norm bound $\Lambda > 0$, the local hypothesis space is defined as 
$$\FCal^{\rm loc} = \{f(\cdot): f_y(\cdot) = \langle w_y,  \phi(\cdot)\rangle,  w_y \in H_K, \forall y \in \YCal, \|(w_1, \dots, w_{|\YCal|})\|_{2,2} \leq \Lambda\}.$$ Recall that we have the loss function class 
$\GCal = \{\ell(\gamma_f (\cdot, \cdot)): f \in \FCal^{\rm loc}\}$, where  the loss function $\ell(\cdot)$ is assumed to be $L$-Lipschitz continuous w.r.t. $\ell_{\infty}$ norm.

Given the retrieved set $\RCal^X$ for some positive integer $n\geq 1$, $\tilde{\FCal}^{X}$  after Equation (8) in \citet{lei2019data} induced by $\RCal^X$.
\footnote{We need $\tilde{\FCal}^{X}$ only to state some theorems in \citet{lei2019data}. We refer interested readers to \citet{lei2019data} for the details.}  Let the worst case Rademacher complexity of a function class $\FCal$ over $n$ points be defined as $\mathfrak{R}_{n}(\FCal)$. Also, for a set $S$ let $\hat{B}(S) = \max_{(x,y)\in S}\sup_{W: \|W\|_{2,2} \leq \Lambda} \langle w_y, \phi(x) \rangle$. 
We have from Theorem 23 in \citet{lei2019data} that the covering number is bounded as follows: for any set $S = \{(x_i, y_i): i= 1, \dots, n\}$ of size $n \geq 1$, for any $\varepsilon > 4L \mathfrak{R}_{n |\YCal|}\big(\tilde{\FCal}^{X}\big)$
$$
\log\Big(\NC_{\infty}\big(\varepsilon, \GCal, S \big)\Big)  \leq \tfrac{16 n |\YCal| L^2 (\mathfrak{R}_{n |\YCal|}\big(\tilde{\FCal}^{X}\big))^2}{\varepsilon^2} \log\big(\tfrac{2en|\YCal|\hat{B}(S) L}{\varepsilon}\big).
$$

Furthermore, from equation (18) in \citet{lei2019data} we have for any set  
$$
\tfrac{\Lambda \max_{(x,y) \in S}\|\phi(x)\|_2 }{ \sqrt{2 n |\YCal|}} \leq \mathfrak{R}_{n|\YCal|}\big(\tilde{\FCal}^{X}\big) \leq \tfrac{\Lambda \max_{(x,y) \in S}\|\phi(x)\|_2 }{ \sqrt{n |\YCal|}}.
$$

Therefore, we have for all $\varepsilon \geq 4L\tfrac{\Lambda \max_{(x,y) \in S}\|\phi(x)\|_2 }{ \sqrt{2 n |\YCal|}}$
$$
\log\Big(\NC_{\infty}\big(\varepsilon, \GCal, S \big)\Big) \leq \tfrac{16 \max_{(x,y)\in S}\|\phi(x)\|^2_2\Lambda^2 L^2 }{\varepsilon^2} \log\big(\tfrac{2en|\YCal|\hat{B}(S) L}{\varepsilon}\big).
$$

Plugging this covering number in in our Rademacher bound with $\epsilon \geq 4L\tfrac{\Lambda \max_{(x,y) \in S}\|\phi(x)\|_2 }{ \sqrt{2 (|\RCal^X|+1) |\YCal|}}$  and taking $S= \RCal^X \cup\{(X,Y)\}$ we get 
\begin{align*}
   \mathfrak{R}_{\RCal^X}\big(\GCal(X,Y)\big) &\leq \inf_{\epsilon \in [0, \GCal_{\max}(X,Y)/2]} \Big(4 \epsilon + \tfrac{12}{\sqrt{|\RCal^X|}}\int_{\epsilon}^{\GCal_{\max}(X,Y)/2} \sqrt{\log{\NC_{\infty}(\nu/2, \GCal, \RCal^X \cup\{(X,Y)\})}}d\nu \Big)\\
   & \leq  \frac{16 \max_{(x,y) \in \RCal^X \cup\{(X,Y)\}}\|\phi(x)\|_2 \Lambda L}{ \sqrt{2 (|\RCal^X|+1) |\YCal|}} 
    + \frac{12 \times 16 \max_{(x,y)\in \RCal^X \cup\{(X,Y)\}}\|\phi(x)\|\Lambda L }{\sqrt{|\RCal^X|}} \times \\
   &\times \int_{\tfrac{4L\Lambda \max_{(x,y) \in \RCal^X \cup\{(X,Y)\}}\|\phi(x)\|_2 }{ \sqrt{2 (|\RCal^X|+1) |\YCal|}}}^{\GCal_{\max}(X,Y)/2} \tfrac{1}{\nu} \sqrt{\log\big(\tfrac{4e(|\RCal^X|+1)|\YCal|\hat{B}(\RCal^X \cup\{(X,Y)\}) L}{\nu}\big)} d\nu\\
   & \leq  \frac{16 \max_{(x,y) \in \RCal^X \cup\{(X,Y)\}}\|\phi(x)\|_2 \Lambda L}{ \sqrt{2 (|\RCal^X|+1) |\YCal|}} 
    + \frac{8 \times 16 \max_{(x,y)\in \RCal^X \cup\{(X,Y)\}}\|\phi(x)\|\Lambda L }{\sqrt{|\RCal^X|}} \times \\
   &\times \Big(\log\big(\tfrac{4\sqrt{2}e L \hat{B}(\RCal^X \cup\{(X,Y)\}) (|\RCal^X|+1)|\YCal| \sqrt{(|\RCal^X|+1) |\YCal|}}{4L\Lambda \max_{(x,y) \in \RCal^X \cup\{(X,Y)\}}\|\phi(x)\|_2 }\big)\Big)^{3/2}\\
   & \leq  \frac{16 \max_{(x,y) \in \RCal^X \cup\{(X,Y)\}}\|\phi(x)\|_2 \Lambda L}{ \sqrt{2 (|\RCal^X|+1) |\YCal|}} 
    + \frac{8 \times 16 \max_{(x,y)\in \RCal^X \cup\{(X,Y)\}}\|\phi(x)\|\Lambda L }{\sqrt{|\RCal^X|}} \times \\
   &\times \Big(\log\big(\sqrt{2}e \big((|\RCal^X|+1) |\YCal|\big)^{3/2}\big)\Big)^{3/2}
\end{align*}
In the final inequality we use the fact that 
\begin{align*}
\hat{B}(\RCal^X \cup\{(X,Y)\}) &\leq \max_{(x,y) \in \RCal^X \cup\{(X,Y)\}}\|\phi(x)\|_2 \sup_{W: \|W\|_{2,2} \leq \Lambda} \|W\|_{2,\infty}\\
& \leq \max_{(x,y) \in \RCal^X \cup\{(X,Y)\}}\|\phi(x)\|_2 \Lambda
\end{align*}

Therefore, the final bound on the Rademacher complexity can be given as 
\begin{equation}\label{eq:rademacher_kernel_local_L2}
 \mathfrak{R}_{\RCal^X} \leq O\Big(L_{\ell}\|\FCal^{\rm loc}\|_{\infty} \tfrac{\ln(|\YCal||\RCal^X|)^{3/2}}{\sqrt{|\RCal^X|}}\Big).   
\end{equation}

\paragraph{Example: $\boldsymbol{\FCal^{\rm loc}}\equiv$ $L$-layer Fully Connected Deep Neural Network (DNN)\citep{bartlett2017spectrally}:} 
Following \citet{bartlett2017spectrally}, we consider a $L$-layer deep neural network (DNN) $f_{\mathcal{A}} = \sigma_L(A^L \sigma_{L-1}(A^{L-1} \sigma_{L-2}(\dots A^1 x))$ for $x\in \XCal$ where $\mathcal{A} = (A_1, A_2, \dots, A_L)$ is the sequence of weight matrices. The matrix $A^l \in \mathbb{R}^{d_{l-1}\times d_l}$ for $l=1$ to $L$, with $d_L = |\YCal|$, and $d_0 = d$ given $\XCal \subseteq \mathbb{R}^d$.  Furthermore, $\sigma_l(\cdot):\mathbb{R}^{d_{l}} \to \mathbb{R}^{d_{l}}$ denotes the  non-linearity (including pooling and activation), $\sigma_l$-s are taken to be $1$-Lipschitz, and $\sigma_l(0)=0$. We assume that the $A^l$ matrix is initialized at $M^l$, for each $l=1$ to $L$. We consider the local function class 
$$
\FCal^{\rm loc} = \{f_{\mathcal{A}}: \|A^l - M^l\|_{2,1} \leq b_l, \|A^l\|_{\sigma} \leq s_l,~\forall l\leq l \leq L-1\}.
$$

Furthermore, we have for any $f\in \FCal^{\rm loc}$ and any $x\in \XCal$ the function $(f(x),y) \to \ell(\gamma_f (\cdot, \cdot))$ is $2 L_{\ell}$ -Lipschitz. Therefore, for a fixed set $S$, we have from Theorem 3.3 in \citet{bartlett2017spectrally} that the  
covering number of the $\GCal = \{\ell(\gamma_f (\cdot, \cdot)): f_{\mathcal{A}} \in \FCal^{\rm loc}\}$ is given as 
$$
\log\Big(\NC_{2}\big(\varepsilon, \GCal, S \big)\Big)
\leq \frac{4 L_{\ell}^2 B^2 ln(2d^2_{\max})}{\varepsilon^2} \big(\prod_{l=1}^{L} s_l \big)^2 \big(\sum_{l=1}^L (b_l/s_l)^{2/3} \big)^{3/2} = \frac{R}{\varepsilon^2},
$$
where $d_{\max} = \max_{l=1}^{L} d_l$, $\sqrt{\tfrac{1}{|S|}\sum_{x\in S}\|x\|_2^2} \leq B$, and 
$$
R = 4 L_{\ell}^2 B^2 ln(2d^2_{\max}) \big(\prod_{l=1}^{L} s_l \big)^2 \big(\sum_{l=1}^L (b_l/s_l)^{2/3} \big)^{3/2}.
$$

Using a the covering number based bound on Rademacher complexity we obtain 
\begin{align*}
 &\mathfrak{R}_{\RCal^X}\big(\GCal(X,Y)\big) \\
 &\leq \inf_{\epsilon \in [0, \GCal_{2,\max}(X,Y)/2]} \Big(4 \epsilon + \tfrac{12}{\sqrt{|\RCal^X|}}\int_{\epsilon}^{\GCal_{2,\max}(X,Y)/2} \sqrt{\log(\tfrac{4L_{\ell} B \prod_{l=1}^{L} s_l}{\nu})\log\Big(\NC_{2}\Big(\nu/2, \GCal, \RCal^X\Big)\Big)}d\nu\Big)\\
 &\leq \inf_{\epsilon \in [0, \GCal_{2,\max}(X,Y)/2]} \Big(4 \epsilon + \tfrac{12}{\sqrt{|\RCal^X|}}\int_{\epsilon}^{\GCal_{\max}(X,Y)/2} \sqrt{\log(\tfrac{4L_{\ell} B \prod_{l=1}^{L} s_l}{\nu})\tfrac{R }{\nu^2}  }d\nu\Big)\\
 & \leq \inf_{\epsilon \in [0, \GCal_{2,\max}(X,Y)/2]} \Big(4 \epsilon + \tfrac{8 \sqrt{R}}{\sqrt{|\RCal^X|}} \log^{3/2}(\tfrac{4L_{\ell} B \prod_{l=1}^{L} s_l}{\epsilon})\Big) - \tfrac{8 \sqrt{R}}{\sqrt{|\RCal^X|}} \log^{3/2}(\tfrac{8L_{\ell} B \prod_{l=1}^{L} s_l}{\GCal_{2,\max}(X,Y)})\\
 & \leq \Big(\tfrac{4\GCal_{2,\max}(X,Y)}{\sqrt{|\RCal^X|}} + \tfrac{8 \sqrt{R}}{\sqrt{|\RCal^X|}} \log^{3/2}(\tfrac{4L_{\ell} B \prod_{l=1}^{L} s_l \sqrt{|\RCal^X|}}{\GCal_{2,\max}(X,Y)})\Big) - \tfrac{8 \sqrt{R}}{\sqrt{|\RCal^X|}} \log^{3/2}(\tfrac{8L_{\ell} B \prod_{l=1}^{L} s_l}{\GCal_{2,\max}(X,Y)})
\end{align*}

\section{Proofs for Section~\ref{sec:glocal}}
\label{appen:global}

This section focuses on providing a proof of Proposition~\ref{prop:global}. It follows the proof technique of \citep[Eq. (9)]{foster_neurips2019}. Before presenting the proof of Proposition~\ref{prop:global}, we need to introduce a slight variation of the Rademacher complexity for data-dependent hypothesis set.

Let $\ZCal = \XCal \times \YCal$. Let $\RCal = \{z^{\RCal}_j\}, \TCal = \{z^{\TCal}_j\} \in \ZCal^m$ be two $m$-sized samples and $\bm{\sigma} \in \{+1, -1\}^m$ be a vector of independent Rademacher variables. Now define $\RCal_{\TCal, \bm{\sigma}} = \{z^{\RCal_{\TCal, \bm{\sigma}}}_j\} \in \ZCal^m$ such that
\begin{align}
z^{\RCal_{\TCal, \bm{\sigma}}}_j = \begin{cases}
z^{\RCal}_j, & \text{if}~\sigma_j = 1, \\
z^{\TCal}_j, & \text{if}~\sigma_j = -1,
\end{cases}
\end{align}
i.e., $\RCal_{\TCal, \bm{\sigma}}$ is obtained by replacing $i$-th element of $\RCal$ by $i$-th element of $\TCal$ iff $\sigma_i = -1$. Let $\UCal \in \ZCal^{n-m}$ be an $m-n$-sized sample; for $\RCal \in \ZCal^m$, $\SCal_{\RCal} = \UCal\cup \RCal \in \ZCal^{n}$. Note that, following this notation, we have $\SCal_{\RCal_{\TCal, \bm{\sigma}}} = \UCal \cup \RCal_{\TCal, \bm{\sigma}}$. 
For $\SCal \in \ZCal^n$, let $\HCal(\SCal)$ be a data dependent function class (hypothesis set), which does not depend on the ordering of the elements in $\SCal$.

\begin{definition}[Rademacher complexity for data-dependent function class] Let $\HCal = \{\HCal(\SCal)\}_{\SCal \in \ZCal^n}$ be a family of data dependent function classes. Given $\RCal = \{z^{\RCal}_{j\in [m]}\}, \TCal = \{z^{\TCal}_{j\in [m]}\} \sim \DSf^m$ and  $\UCal = \{z^{\UCal}_{m+i}\}_{i \in [n-m]}$, the empirical Rademacher complexity $\RF^{\diamond}_{\UCal, \RCal, \TCal}(\HCal)$ and Rademacher complexity $\RF^{\diamond}_{\UCal, m}(\HCal)$ are defined as follows.
\begin{align}
\label{eq:rademacher-dependent}
  \RF^{\diamond}_{\UCal, \RCal, \TCal}(\HCal) = \frac{1}{m}\mathbb{E}_{\bm{\sigma}}\left[\sup_{h \in \HCal(\SCal_{\RCal_{\TCal, \bm{\sigma}}})} \sum_{i = 1}^{m} \sigma_i h(z_i^{\TCal})\right] \nonumber \\
 \RF^{\diamond}_{\UCal}(\HCal) = \frac{1}{m}\mathbb{E}_{\substack{\RCal, \TCal \sim \DSf^m \\ \bm{\sigma}}}\left[\sup_{h \in \HCal(\SCal_{\RCal_{\TCal, \bm{\sigma}}})} \sum_{i = 1}^{m} \sigma_i h(z_i^{\TCal})\right] 
\end{align}
\end{definition}

\subsection{Proof of Proposition~\ref{prop:global}}

We are now ready to establish the proof of Proposition~\ref{prop:global}. As discussed above, we extend the proof technique of \citep[Eq. (9)]{foster_neurips2019} to obtain this result. Our setting differs from that of \cite{foster_neurips2019} as the local ERM objective only depends on the retrieve samples $\RCal^x$ while the function class of interest $\FCal_{\SCal} = \FCal_{\Phi_{\SCal}}$ in \eqref{eq:FC_phi} depends on the entire training set $\SCal$ via representation $\Phi_{\SCal}$. We suitably modify the proof techniques of \cite{foster_neurips2019} to handle this difference. 

Let $|\RCal^x| := m$ and $\UCal = \SCal \backslash \RCal^x$. For $\RCal, \TCal \in \ZCal^m$, we define
\begin{align*}
\Xi(\RCal, \TCal) &= \sup_{f \in \FCal_{\Phi_{\UCal \cup \RCal}}} \bigg\vert \underbrace{\mathbb{E}_{(X', Y') \sim \DSf^{x,r}}[\ell({f}(X'), Y')]}_{:=R_{\ell}(f; \DSf^{x,r})} - \underbrace{\frac{1}{m}\sum_{(x', y') \in \TCal}\ell(f(x'), y')}_{:=\widehat{R}_{\ell}(f; \TCal)} \bigg\vert \nonumber \\
&= \sup_{f \in \FCal_{\Phi_{\UCal \cup \RCal}}} \big\vert R_{\ell}(f; \DSf^{x,r}) - \widehat{R}_{\ell}(f; \TCal) \big\vert.
\end{align*}
Note that we are interested in bounding 
\begin{align*}
\Xi(\RCal^x, \RCal^x) = \sup_{f \in \FCal_{\Phi_{\SCal}}} \bigg\vert\underbrace{\mathbb{E}_{(X', Y') \sim \DSf^{x,r}}[\ell({f}(X'), Y')]}_{R_{\ell}(f; \DSf^{x,r})} - \underbrace{\frac{1}{m}\sum_{(x', y') \in \TCal}\ell(f(x'), y')}_{\widehat{R}_{\ell}(f; \RCal^x) = \widehat{R}^x_{\ell}(f)}\bigg\vert,
\end{align*}
where we have used the fact that $\UCal \cup \RCal^x = \SCal$. Towards this, we first establish that $\Xi(\RCal, \RCal)$ satisfies the $\big(\frac{M_{\ell}}{m} + 2\Delta L L_{\ell, 1} \big)$-bounded difference property, i.e., for $\RCal, \RCal' \in \ZCal^m$ that only differ in one element, we have
\begin{align}
\label{eq:glocal-s3}
\Xi(\RCal, \RCal) - \Xi(\RCal', \RCal') \leq \frac{M_{\ell}}{m} + 2\Delta L L_{\ell, 1}.
\end{align}
Note that 
\begin{align}
\label{eq:glocal-s4}
\Xi(\RCal, \RCal) - \Xi(\RCal', \RCal') \leq    \underbrace{\Xi(\RCal, \RCal) -  \Xi(\RCal, \RCal')}_{{\rm I}} +  \underbrace{\Xi(\RCal, \RCal') - \Xi(\RCal', \RCal')}_{{\rm II}}.
\end{align}
Now, we will separately bound the two terms in the RHS. Let $\breve{z} = (\breve{x}, \breve{y}) \in \RCal \backslash \RCal'$ and $\breve{z}' = (\breve{x}', \breve{y}') \in \RCal' \backslash \RCal$. Thus, we have the following bound on the first term.
\begin{align}
\label{eq:glocal-s5}
{\rm I} &= \Xi(\RCal, \RCal) -  \Xi(\RCal, \RCal') \nonumber \\
& = \sup_{f \in \FCal_{\Phi_{\UCal \cup \RCal}}} \big\vert R_{\ell}(f; \DSf^{x,r}) - \widehat{R}_{\ell}(f; \RCal)\big\vert  - \sup_{f \in \FCal_{\Phi_{\UCal \cup \RCal}}} \big\vert R_{\ell}(f; \DSf^{x,r}) - \widehat{R}_{\ell}(f; \RCal')\big\vert \nonumber \\
& \leq \sup_{f \in \FCal_{\Phi_{\UCal \cup \RCal}}} \Big\vert\big\vert R_{\ell}(f; \DSf^{x,r}) - \widehat{R}_{\ell}(f; \RCal)\big\vert  - \big\vert R_{\ell}(f; \DSf^{x,r}) - \widehat{R}_{\ell}(f; \RCal')\big\vert \Big\vert \nonumber \\
& \leq \sup_{f \in \FCal_{\Phi_{\UCal \cup \RCal}}} \big[ R_{\ell}(f; \DSf^{x,r}) - \widehat{R}_{\ell}(f; \RCal) -   R_{\ell}(f; \DSf^{x,r}) + \widehat{R}_{\ell}(f; \RCal')\big] \nonumber \\
& = \sup_{f \in \FCal_{\Phi_{\UCal \cup \RCal}}} \big\vert \widehat{R}_{\ell}(f; \RCal') - \widehat{R}_{\ell}(f; \RCal) \big\vert \nonumber \\
& = \sup_{f \in \FCal_{\Phi_{\UCal \cup \RCal}}} \frac{1}{m}\big\vert \ell(f(\breve{x}'), \breve{y}') - \ell(f(\breve{x}), \breve{y})  \big\vert \leq \frac{M_{\ell}}{m},
\end{align}
where the last inequality follows from our boundedness assumption for the loss function $\ell$. 

Now we move to term {\rm II}. Towards this, note that, it follows from the definition of supremum that, for any $\epsilon > 0$, there exists $\tilde{f} \in \FCal_{\Phi_{\UCal \cup \RCal}}$ such that 
\begin{align}
\label{eq:glocal-s6}
\sup_{f \in \FCal_{\Phi_{\UCal \cup \RCal}}} \big\vert R_{\ell}(f; \DSf^{x,r}) - \widehat{R}_{\ell}(f; \RCal')\big\vert - \epsilon \leq \big\vert R_{\ell}(\tilde{f}; \DSf^{x,r}) - \widehat{R}_{\ell}(\tilde{f}; \RCal')\big\vert 
\end{align}
Let $\tilde{f} = \tilde{g}\circ \Phi_{\UCal \cup \RCal} \in \FCal_{\Phi_{\UCal \cup \RCal}}$ and $\tilde{f}' = \tilde{g} \circ \Phi_{\UCal \cup \RCal'} \in \FCal_{\Phi_{\UCal \cup \RCal'}}$. Note that, for any $(x, y) \in \ZCal$, 
\begin{align}
\label{eq:glocal-s7}
\big\vert\ell\big(\tilde{f}(x), y\big) - \ell\big(\tilde{f}'(x), y\big) \big\vert &= \big\vert\ell\big(\tilde{g}\circ \Phi_{\UCal \cup \RCal}(x), y\big) - \ell\big(\tilde{g} \circ \Phi_{\UCal \cup \RCal'}(x), y\big) \big\vert \nonumber \\
& \overset{(i)}{\leq} L_{\ell, 1} \| \tilde{g}\circ \Phi_{\UCal \cup \RCal}(x) - \tilde{g} \circ \Phi_{\UCal \cup \RCal'}(x)  \|_{\infty} \nonumber \\
& \leq  L_{\ell, 1} \| \tilde{g}\circ \Phi_{\UCal \cup \RCal}(x) - \tilde{g} \circ \Phi_{\UCal \cup \RCal'}(x)  \|_{2} \nonumber \\
& \overset{(ii)}{\leq}  L_{\ell, 1} L \| \Phi_{\UCal \cup \RCal}(x) -  \Phi_{\UCal \cup \RCal'}(x)  \|_{2} \nonumber \\
& \overset{(iii)}{\leq}   L_{\ell, 1} L  \Delta,
\end{align}
where we use $L_{\ell, 1}$-Lipschitzness of $\ell$ w.r.t. $\|\cdot\|_{\infty}$ norm, $L$-Lipschitzness of $g$, and $\Delta$-sensitivity of the representation $\Phi$ in $(i)$, $(ii)$, and $(iii)$, respectively. 

Now, we have
\begin{align}
\label{eq:glocal-s8}
{\rm II} & = \Xi(\RCal, \RCal') - \Xi(\RCal', \RCal') \nonumber \\
& =  \sup_{f \in \FCal_{\Phi_{\UCal \cup \RCal}}} \big\vert R_{\ell}(f; \DSf^{x,r}) - \widehat{R}_{\ell}(f; \RCal')\big\vert  - \sup_{f \in \FCal_{\Phi_{\UCal \cup \RCal'}}} \big\vert R_{\ell}(f; \DSf^{x,r}) - \widehat{R}_{\ell}(f; \RCal')\big\vert \nonumber \\
& \overset{(i)}{\leq}  \big\vert R_{\ell}(\tilde{f}; \DSf^{x,r}) - \widehat{R}_{\ell}(\tilde{f}; \RCal')\big\vert + \epsilon - \sup_{f \in \FCal_{\Phi_{\UCal \cup \RCal'}}} \big\vert R_{\ell}(f; \DSf^{x,r}) - \widehat{R}_{\ell}(f; \RCal')\big\vert \nonumber \\
& \leq  \big\vert R_{\ell}(\tilde{f}; \DSf^{x,r}) - \widehat{R}_{\ell}(\tilde{f}; \RCal')\big\vert + \epsilon -  \big\vert R_{\ell}(\tilde{f}'; \DSf^{x,r}) - \widehat{R}_{\ell}(\tilde{f}'; \RCal')\big\vert \nonumber \\
& =  \Big\vert\big[ R_{\ell}(\tilde{f}; \DSf^{x,r}) - R_{\ell}(\tilde{f}'; \DSf^{x,r}) \big]  -  \big[\widehat{R}_{\ell}(\tilde{f}; \RCal')  - \widehat{R}_{\ell}(\tilde{f}'; \RCal')\big] \Big\vert  + \epsilon \nonumber \\
& \leq  \big\vert R_{\ell}(\tilde{f}; \DSf^{x,r}) - R_{\ell}(\tilde{f}'; \DSf^{x,r}) \big\vert  +  \big\vert\widehat{R}_{\ell}(\tilde{f}; \RCal')  - \widehat{R}_{\ell}(\tilde{f}'; \RCal')\big\vert  + \epsilon \nonumber \\
& \overset{(ii)}{\leq} 2 L_{\ell,1} L \Delta  + \epsilon,
\end{align}
where $(i)$ and $(ii)$ follow from \eqref{eq:glocal-s6} and \eqref{eq:glocal-s7}, respectively. Now, since $\epsilon$ in \eqref{eq:glocal-s6} can be chosen arbitrarily small, it follows from \eqref{eq:glocal-s4}, \eqref{eq:glocal-s5}, and \eqref{eq:glocal-s8} that
\begin{align*}
\Xi(\RCal, \RCal) - \Xi(\RCal', \RCal') \leq \frac{M_{\ell}}{m} + 2\Delta L L_{\ell, 1},
\end{align*}
i.e., $\Xi(\RCal, \RCal)$ indeed satisfies the $\big(\frac{M_{\ell}}{m} + 2\Delta L L_{\ell, 1} \big)$-bounded difference property. Now, it follows from the McDiarmid's inequality that, for $\delta > 0$, we have with probability at least $1 - \delta$:
\begin{align*}
\Xi(\RCal^x, \RCal^x) \leq \mathbb{E}\big[ \Xi(\RCal^x, \RCal^x) \big] + \big(M_{\ell} + 2\Delta L L_{\ell, 1} m\big) \sqrt{\frac{\log(1/\delta)}{2m}}
\end{align*}
or
\begin{align}
\label{eq:glocal-s10}
\sup_{f \in \FCal_{\Phi_{\SCal}}} \big\vert R_{\ell}(f; \DSf^{x,r}) - \widehat{R}^x_{\ell}(f) \big\vert& \leq \mathbb{E}_{\RCal^x} \Big\vert \sup_{f \in \FCal_{\Phi_{\SCal}}} \big[ R_{\ell}(f; \DSf^{x,r}) - \widehat{R}^x_{\ell}(f) \big] \Big\vert \; + \nonumber \\
& \qquad \big(M_{\ell} + 2\Delta L L_{\ell, 1} m\big) \sqrt{\frac{\log(1/\delta)}{2m}}.
\end{align}
Now, first statement of Proposition~\ref{prop:global} follows from \eqref{eq:glocal-s10} and the fact that $m = \vert \RCal^x\vert$.

It follows from the proof steps in \cite[Section E.1]{foster_neurips2019} that
\begin{align}
\mathbb{E}_{\RCal^x} \Big[ \sup_{f \in \FCal_{\Phi_{\SCal = \UCal \cup \RCal^{x}}}} \big\vert R_{\ell}(f; \DSf^{x,r}) - \widehat{R}^x_{\ell}(f) \big\vert \Big]  \leq 2\RF^{\diamond}_{\UCal}(\ell \circ \FCal),
\end{align}
where $\FCal = \{\FCal_{\Phi_{\UCal \cup \RCal}} \}_{\RCal \in \ZCal^m}$ and $\RF^{\diamond}_{\UCal}$ is defined in \eqref{eq:rademacher-dependent}. This completes the proof of Proposition~\ref{prop:global}. \qed

\section{Classification in extended feature space: A kernel-based approach}
\label{app:kernel}

As introduced in Sec.~\ref{sec:r-b-c}, our objective is to learn a function $f : \XCal \times (\XCal \times \YCal)^{\star} \to \R^{|\YCal|}$. For a given instance $x$, such a function can leverage its neighboring set $\RCal^x \in (\XCal \times \YCal)^{\star}$ to improve the prediction on $x$. In this work, we restrict ourselves to a sub-family of such retrieval-based methods that first map $\RCal^x \sim \DSf^{x,r}$ to $\hat{\DSf}^{x,r}$ --- an empirical estimate of the local distribution $\DSf^{x,r}$, which is subsequently utilized to make a prediction for $x$. In particular, the scorers of interest are of the form:
\begin{align}
(x, \RCal^{x}) \mapsto f(x, \hat{\DSf}^{x,r}) = \big( f_1(x, \hat{\DSf}^{x,r}),\ldots,f_{|\YCal|}(x, \hat{\DSf}^{x,r})\big) \in \R^{|\YCal|},
\end{align}
where $f_y(x, \hat{\DSf}^{x,r})$ denotes the score assigned to the $y$-th class. %
Thus, assuming that $\Delta_{\XCal \times \YCal}$ denotes the set of distribution over $\XCal \times \YCal$, we restrict to a suitable function class in $\{f: \XCal \times \Delta_{\XCal \times \YCal} \to \R^{|\YCal|} \}$. Note that, given a surrogate loss $\ell: \R^{|\YCal|} \times \YCal \to \R $ and scorer $f$, the empirical risk $\widehat{R}^{\rm ex}_{\ell}(f)$ and population risk ${R}^{\rm ex}_{\ell}(f)$ take the following form:
\begin{align}
\hat{R}^{\rm ex}_{\ell}(f) = \frac{1}{n}\sum\nolimits_{i \in [n]} \ell\big( x_i, \hat{\DSf}^{x_i, r}\big)\quad \text{and} \quad R^{\rm ex}_{\ell}(f) = \mathbb{E}_{(X, Y) \sim \DSf}\left[\ell\big(f(X, \DSf^{X, r}), Y\big)\right]. %
\end{align}
Note that that the general framework for learning in the extended feature space $\widetilde{\XCal}:= \XCal \times \Delta_{\XCal\times \YCal}$ provides a very rich class of functions. In this paper, we focus on a specific form of learning methods in the extended feature space by using the kernel methods. The method as well as its analysis is obtained by adapting the work on utilizing kernel methods for domain generalization~\citep{Blanchard_neurips2011, deshmukh19}.

\subsection{Kernel-based classification}
\label{sec:kernel-classification}
Before introducing a kernel method for the classification, %
we need to define a suitable kernel $k: \widetilde{\XCal} \times \widetilde{\XCal} \to \R$ on the extended feature space $\widetilde{\XCal}:= \XCal \times \Delta_{\XCal\times \YCal}$. Towards this, let $k_{\ZCal}$ be a kernel over $\ZCal:=\XCal \times \YCal$. Assuming that $H_{k_{\ZCal}}$ is the reproducing kernel Hilbert space (RKHS) associated with $k_{\ZCal}$, we can define a kernel mean embedding~\citep{smola_kme} $\Psi: \Delta_{\ZCal} \to H_{k_{\ZCal}}$ as follows:
\begin{align}
\label{eq:mean_embedding}
    \Psi(P) = \int_{\ZCal} k_{\ZCal}\big(z, \cdot\big) \,dP.
\end{align}
For an empirical distribution $\hat{\DSf}^{x, r}$ defined by $\RCal^x$, kernel embedding in \eqref{eq:mean_embedding} takes the following form.
\begin{align}
\Psi(\hat{\DSf}^{x,r}) = \frac{1}{|\RCal^x|}\sum\nolimits_{(x', y') \in \RCal^x} k_{\ZCal}\big((x', y'), \cdot \big).   
\end{align}
Now, using a kernel $k_{\XCal}$ over $\XCal$ and a kernel-like function $\kappa$ over $\Psi(\Delta_{\ZCal})$, we define a desired kernel $k: \widetilde{\XCal} \times \widetilde{\XCal} \to \R$ as follows:
\begin{align}
\label{eq:extended-kernel}
k\big(\widetilde{X}_1,  \widetilde{X}_2\big) = k\big((X_1, {\DSf}^{X_1, r}),  (X_2, {\DSf}^{X_2, r})\big) =  k_{\XCal}(X_1, X_2) \cdot \kappa \big(\Psi({\DSf}^{X_1, r}), \Psi({\DSf}^{X_2, r}) \big).
\end{align}
Let $H_{k}$ be the RKHS corresponding to the kernel $k$ in \eqref{eq:extended-kernel}, and $\|\cdot\|_{H_{k}}$ be the norm associated with $H_{k}$. Equipped with the kernel in \eqref{eq:extended-kernel} and associated $H_{k}$, for $\lambda > 0$, we propose to learn a scorer ${f} = ({f}_1,\ldots, {f}_{|\YCal|}) \in H^{|\YCal|}_{k} := H_{k} \times \cdots \times H_{k}$ via the following regularized ERM problem.
\begin{align}
\label{eq:kernel-erm}
\hat{f}^{\rm ex} = \arg\min_{f \in H^{|\YCal|}_{k}} \frac{1}{n}\sum_{i=1}^{n} \ell\big(f(\tilde{x}_i), y_i\big) + \lambda \cdot \Omega(f),
\end{align}
where $\tilde{x}_i = (x_i, \hat{\DSf}^{x_i,r})$ and $\Omega(f) := \|f\|^2_{H^{|\YCal|}_{k}} := \sum_{y \in \YCal} \| f_y \|^2_{H_{k}}$. It follows from the representer theorem that the solution of \eqref{eq:kernel-erm} takes the %
form
$\hat{f}^{\rm ex}(\cdot) = \sum_{i \in [n]} \alpha_i k\big((x_i, \hat{\DSf}^{x_i, r}), \cdot \big)$.
One can apply multiclass extensions of SVMs to learn the weights $\{\alpha_i\}$~\citep{deshmukh19}. Next, we focus on studying the generalization behavior of the scorer $\hat{f}^{\rm ex}$ recovered in \eqref{eq:kernel-erm}.

\subsection{Generalization bounds for kernel-based classification}

Before presenting a generalization bound for kernel-based classification over the extended feature space $\widetilde{\XCal}$, we state the three key assumptions that are utilized in our analysis.

\begin{assumption}
\label{assum:kernel1}
The loss function $\ell: \R^{|\YCal|}\times \YCal$ is $L_{\ell, 1}$-Lipschitz w.r.t. the first argument, i.e., 
\begin{align}
    |\ell(s_1, y) - \ell(s_2, y)| \leq L_{\ell, 1} \cdot \| s_1 - s_2 \|_{\infty} \qquad \forall s_1,s_2 \in \R^{|\YCal|}~\text{and}~y \in \YCal. 
\end{align}
Furthermore, assume that $\sup_{(x,y)} \ell(x, y) := M_{\ell} \leq \infty$.
\end{assumption}

\begin{assumption}
\label{assum:kernel2}
Kernels $k_{\XCal}, k_{\ZCal}$, and $\kappa$ are bounded by $M_{k_{\XCal}}, M_{k_{\ZCal}}$, and $M_{\kappa}$, respectively.
\end{assumption}

\begin{assumption}
\label{assum:kernel3}
Let $H_{k_{\ZCal}}$ and $H_{\kappa}$ be the RKHS associated with $k_{\ZCal}$ and $\kappa$, respectively. Then, the canonical feature map $\varphi_{\kappa}: H_{k_{\ZCal}} \to H_{\kappa}$ is $\alpha$-H\"{o}lder continuous with $\alpha \in (0, 1]$, i.e., 
\begin{align}
    \|\varphi_{\kappa}(h_1) - \varphi_{\kappa}(h_2)\|_{H_{\kappa}} \leq L'\cdot \|h_1 - h_2\|^{\alpha}_{H_{k_{\ZCal}}} ~ \forall h_1,h_2 \in \{ h \in H_{k_{\ZCal}}~:~\|h\|_{H_{k_{\ZCal}}} \leq M_{k_{\ZCal}} \}
\end{align}
\end{assumption}
The following result states our generalization bound for the kernel-based classification method described in Sec.~\ref{sec:kernel-classification}. %
\begin{theorem}
\label{thm:kernel-app}
Let $0 \leq \delta \leq 1$ and Assumptions~\ref{assum:kernel1}--\ref{assum:kernel3} hold. Furthermore, let $N(r,\delta)$ be as defined in \eqref{eq:bound_rx}.
Then, for any $B > 0$, the following holds with probability at least $1- 3\delta$
{\small
\begin{align*}
    &\sup_{f \in \FC^k_{B}} \big\vert\widehat{R}^{\rm ex}_{\ell}(f) - R^{\rm ex}_{\ell}(f) \big\vert \leq 
    32 \sqrt{\log 2}  L_{\ell, 1} B M_{\kappa}M_{k_{\XCal}}n^{-\frac{1}{2}}
    \left(1 + \log ^{\frac{3}{2}} \sqrt{2} n |\YCal| \right)
    \; \nonumber \\
    &~ + L_{\ell, 1} L' M_{k_{\XCal}} B  \left(M_{k_{\ZCal}} \sqrt{\frac{2\log(\frac{n}{\delta})}{N(r,\frac{\delta}{n})}} + M_{k_{\ZCal}}\sqrt{\frac{1}{N(r,\frac{\delta}{n})}} + \frac{4M_{k_{\ZCal}}\log(\frac{n}{\delta})}{3N(r,\frac{\delta}{n})}\right)^{\alpha} + M \sqrt{\frac{\log(\frac{1}{\delta})}{2n}}
    ,
\end{align*}}
where $\FCal^{k}_{B} = \big\{ f = (f_1,\ldots, f_{|\YCal|}) \in H^{|\YCal|}_{k} : \Omega(f) \leq B^2 \big\}$ 
and $M := M_{\ell}  + L_{\ell, 1} B M_{k_{\XCal}}M_{{\kappa}}$.
\end{theorem}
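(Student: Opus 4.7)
The plan is to bound the uniform deviation $\sup_{f \in \FCal_B^k}|\widehat{R}^{\rm ex}_\ell(f) - R^{\rm ex}_\ell(f)|$ by introducing an intermediate quantity that uses the \emph{true} local distributions $\DSf^{x_i,r}$ in place of their empirical counterparts $\hat{\DSf}^{x_i,r}$. Let
\begin{align*}
\widetilde{R}^{\rm ex}_\ell(f) := \frac{1}{n}\sum_{i=1}^n \ell\bigl(f(x_i, \DSf^{x_i,r}), y_i\bigr).
\end{align*}
Then by the triangle inequality, $\sup_f|\widehat{R}^{\rm ex}_\ell(f) - R^{\rm ex}_\ell(f)| \le \sup_f|\widehat{R}^{\rm ex}_\ell(f) - \widetilde{R}^{\rm ex}_\ell(f)| + \sup_f|\widetilde{R}^{\rm ex}_\ell(f) - R^{\rm ex}_\ell(f)|$. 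The first term isolates the perturbation from using an empirical kernel mean embedding, while the second is a standard i.i.d. uniform concentration problem (since $\widetilde{R}^{\rm ex}_\ell$ is an average of i.i.d. functions of $(X_i,Y_i)$ with $\DSf^{X_i,r}$ being a deterministic function of $X_i$).

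For the \emph{distribution-perturbation term}, the key step is to bound $|f(x_i,\hat{\DSf}^{x_i,r}) - f(x_i,\DSf^{x_i,r})|$ coordinate-wise for any $f\in\FCal_B^k$. Using the reproducing property in $H_k$ together with the factored form \eqref{eq:extended-kernel}, each coordinate $f_y$ can be written as an inner product with the feature map $k_\XCal(x_i,\cdot)\cdot\varphi_\kappa(\Psi(\cdot))$, so Cauchy--Schwarz plus $\|f_y\|_{H_k}\le B$ and $\|k_\XCal(x_i,\cdot)\|\le\sqrt{M_{k_\XCal}}$ yields a bound proportional to $B\,M_{k_\XCal}^{1/2}\,\|\varphi_\kappa(\Psi(\hat{\DSf}^{x_i,r})) - \varphi_\kappa(\Psi(\DSf^{x_i,r}))\|_{H_\kappa}$. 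Applying Assumption~\ref{assum:kernel3} brings this down to $L'\|\Psi(\hat{\DSf}^{x_i,r}) - \Psi(\DSf^{x_i,r})\|_{H_{k_\ZCal}}^\alpha$. I would then invoke a Bernstein-type inequality for Hilbert-space-valued bounded random variables (e.g., Pinelis' inequality applied to the centered features $k_\ZCal(z_j,\cdot) - \Psi(\DSf^{x_i,r})$, each of norm $\le 2\sqrt{M_{k_\ZCal}}$) to obtain, for each $i$ and with probability $\ge 1-\delta/n$ over $\RCal^{x_i}$,
\begin{align*}
\|\Psi(\hat{\DSf}^{x_i,r}) - \Psi(\DSf^{x_i,r})\|_{H_{k_\ZCal}} \;\lesssim\; M_{k_\ZCal}\sqrt{\tfrac{\log(n/\delta)}{N(r,\delta/n)}} + \tfrac{M_{k_\ZCal}\log(n/\delta)}{N(r,\delta/n)}.
\end{align*}
Union-bounding over $i\in[n]$ (and conditioning on the event $|\RCal^{x_i}|\ge N(r,\delta/n)$ which holds with probability $\ge 1-\delta$ by \eqref{eq:bound_rx}) and using the $L_{\ell,1}$-Lipschitzness of $\ell$ in $\|\cdot\|_\infty$ produces the second (Hölder-power) term of the claimed bound.

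For the \emph{standard uniform concentration term} $\sup_f|\widetilde{R}^{\rm ex}_\ell(f) - R^{\rm ex}_\ell(f)|$, I would apply McDiarmid's inequality to transfer to its expectation (each $(X_i,Y_i)$ changes this supremum by at most $2M/n$ with $M = M_\ell + L_{\ell,1}BM_{k_\XCal}M_\kappa$), yielding the $M\sqrt{\log(1/\delta)/(2n)}$ tail. The expectation is then symmetrized to the Rademacher complexity of $\ell\circ\FCal_B^k$; the vector-contraction inequality for Lipschitz losses peels off $\ell$ at cost $L_{\ell,1}$, and standard $L^2$-covering-number bounds for kernel classes with $\|f_y\|_{H_k}\le B$ and boundedness $M_{k_\XCal}M_\kappa$ (via Dudley's entropy integral, cf.\ the arguments of \citet{lei2019data} used in Sec.~\ref{sec:loc-erm-analysis}) deliver the $n^{-1/2}(1+\log^{3/2}(\sqrt{2}n|\YCal|))$ rate. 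Combining the two pieces with the initial triangle inequality and a final union bound (splitting the failure probability $3\delta$ among McDiarmid, the $|\RCal^{x_i}|$ lower bound, and the $n$ mean-embedding concentrations) yields the stated inequality.

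The principal obstacle is the first term: we need a concentration statement for the \emph{empirical kernel mean embedding} that is uniform over the $n$ training points simultaneously, while the sizes $|\RCal^{x_i}|$ are themselves random and only lower-bounded in probability via \eqref{eq:bound_rx}. Carefully coupling the conditioning on $\{|\RCal^{x_i}|\ge N(r,\delta/n)\}$ with the Bernstein bound at confidence $1-\delta/n$ so that the exponent $\alpha$ from Assumption~\ref{assum:kernel3} acts on the \emph{entire} Bernstein expression (producing the $(\cdot)^\alpha$ structure of the bound) is the delicate book-keeping step; the rest of the argument follows the template of \citet{deshmukh19} for domain-generalization kernels.
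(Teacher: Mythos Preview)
Your proposal is correct and follows essentially the same approach as the paper: the same triangle-inequality decomposition into a distribution-perturbation term (handled via the reproducing property, the product kernel structure, the H\"older continuity of $\varphi_\kappa$, a Hilbert-space concentration inequality for $\Psi(\hat{\DSf}^{x_i,r})-\Psi(\DSf^{x_i,r})$, and a union bound over $i\in[n]$ combined with the $|\RCal^{x_i}|\ge N(r,\delta/n)$ event) and an i.i.d.\ term (McDiarmid plus symmetrization plus the Rademacher bound of \citet{lei2019data,deshmukh19}). The only cosmetic discrepancy is your $M_{k_\XCal}^{1/2}$ versus the paper's $M_{k_\XCal}$, which stems from the paper's convention that $M_{k_\XCal}$ bounds $\sup_x\sqrt{k_\XCal(x,x)}$ rather than $\sup_x k_\XCal(x,x)$.
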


Before presenting the proof of Theorem~\ref{thm:kernel-app}, 
we state two key results from the literature that are used in our analysis. 

\begin{proposition}[\cite{steinwart_book}]
\label{prop:hoeff}
Let $(\Omega, \AC, P)$ be a probability space, $H$ be a separable Hilbert space, and $M > 0$. Let $\eta_1,\ldots, \eta_m: \Omega \to H$ be $m$ independent $H$-valued random variables satisfying $\|\eta_j\|_{\infty} \leq M$, for all $j \in [m]$. The, for $\delta> 0$, the following holds with probability at least $1 - \delta$.
\begin{align}
\Big \| \frac{1}{m} \sum_{j=1}^m (\eta_j - \mathbb{E}_{P}[\eta_j] \Big \|_{H} \leq M \sqrt{\frac{2\log(1/\delta)}{m}} + M \sqrt{\frac{1}{m}} + \frac{4M\log(1/\delta)}{3m}.
\end{align}
\end{proposition}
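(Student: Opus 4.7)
The plan is to decompose the bound into two pieces: a bound on the expected norm and a Bernstein-type concentration of the norm around its expectation. Set $\xi_j := \eta_j - \mathbb{E}_P[\eta_j]$ and $\bar{S}_m := \frac{1}{m}\sum_{j=1}^{m} \xi_j$. Then $\xi_1,\ldots,\xi_m$ are independent, centered, $H$-valued, and bounded by $B := 2M$ almost surely (since $\|\eta_j\|_H\le M$), with variance proxy $\mathbb{E}\|\xi_j\|_H^2 \leq \mathbb{E}\|\eta_j\|_H^2 \leq M^2 =: \sigma^2$. The target bound will be the sum of three terms, so the strategy is to produce one term from the expected norm and two from a deviation inequality.

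For the expected norm, I would use Jensen's inequality together with the $H$-valued analogue of Bienaym\'e's identity. Since the $\xi_j$ are independent and centered, $\mathbb{E}\langle \xi_i,\xi_j\rangle_H = 0$ for $i\neq j$, giving
\[
\mathbb{E}\|\bar{S}_m\|_H \;\leq\; \sqrt{\mathbb{E}\|\bar{S}_m\|_H^2} \;=\; \sqrt{\tfrac{1}{m^2}\textstyle\sum_{j=1}^{m}\mathbb{E}\|\xi_j\|_H^2} \;\leq\; \frac{\sigma}{\sqrt{m}} \;=\; \frac{M}{\sqrt{m}},
\]
which is exactly the middle term of the claim.

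For the deviation piece, I would invoke a Pinelis/Bousquet-style Bernstein inequality for the norm of sums of independent bounded Hilbert-space valued random variables (the proof of which can be obtained by writing $\|\bar{S}_m\|_H = \sup_{\|v\|\le 1}\langle \bar{S}_m,v\rangle_H$ and applying Talagrand's concentration to this supremum of an empirical process, or directly via the log-Laplace bound of Pinelis~1994). With envelope $B=2M$ and variance proxy $\sigma^2 = M^2$, it yields
\[
\mathbb{P}\bigl(\|\bar{S}_m\|_H - \mathbb{E}\|\bar{S}_m\|_H \geq t\bigr) \;\leq\; \exp\!\left(-\frac{m\,t^2}{2\bigl(\sigma^2 + B t/3\bigr)}\right).
\]
Setting the right-hand side equal to $\delta$ produces the quadratic $m t^2 = 2\log(1/\delta)\bigl(\sigma^2 + B t/3\bigr)$; the elementary observation that $x^2 \leq a + b x$ implies $x\leq b+\sqrt{a}$ then yields $t \leq \sigma\sqrt{2\log(1/\delta)/m} + 2B\log(1/\delta)/(3m) = M\sqrt{2\log(1/\delta)/m} + 4M\log(1/\delta)/(3m)$. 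Combining with the expectation bound via the triangle inequality $\|\bar{S}_m\|_H \leq \mathbb{E}\|\bar{S}_m\|_H + (\|\bar{S}_m\|_H - \mathbb{E}\|\bar{S}_m\|_H)$ delivers the three claimed terms with probability at least $1-\delta$.

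The main obstacle, and the step most easily glossed over, is justifying the Hilbert-space Bernstein inequality of the form above: what is really needed is concentration of the \emph{norm around its mean} (not merely of the norm around zero, which would spoil the variance-type scaling and miss the $M/\sqrt{m}$ term). The cleanest route is via Bousquet's version of Talagrand's inequality applied to the supremum representation $\|\bar{S}_m\|_H = \sup_{\|v\|\leq 1}\langle\bar{S}_m,v\rangle_H$, noting that both the per-functional variance $\sup_{\|v\|\le 1}\mathbb{E}\langle\xi_j,v\rangle^2$ and the envelope $\sup_{\|v\|\le 1}|\langle\xi_j,v\rangle|$ inherit the bounds $\sigma^2$ and $B$ respectively by Cauchy-Schwarz. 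Any slack in constants introduced by using $B=2M$ for centered variables (versus $M$ for the uncentered $\eta_j$) is exactly what produces the factor $4/3$ in the stated bound.
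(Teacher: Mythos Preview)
The paper does not supply its own proof of this proposition: it is quoted verbatim as a known result from \cite{steinwart_book} and used as a black box in the proof of Theorem~\ref{thm:kernel-app}. There is therefore no ``paper's proof'' to compare against.

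That said, your sketch is a correct and standard derivation of the stated bound. The decomposition $\|\bar{S}_m\|_H \leq \mathbb{E}\|\bar{S}_m\|_H + \bigl(\|\bar{S}_m\|_H - \mathbb{E}\|\bar{S}_m\|_H\bigr)$, with the first piece handled by Jensen plus orthogonality of increments and the second by a Bernstein-type concentration for the norm around its mean, is exactly how one recovers the three-term structure. Your algebra checks out: with envelope $B=2M$ and variance proxy $\sigma^2=M^2$, inverting the Bernstein tail via $x^2\leq a+bx\Rightarrow x\leq b+\sqrt{a}$ gives precisely $M\sqrt{2\log(1/\delta)/m}+4M\log(1/\delta)/(3m)$, and the expectation term contributes the remaining $M/\sqrt{m}$.

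The only point I would firm up is the provenance of the deviation inequality. You gesture at both Pinelis' martingale inequality and Bousquet's version of Talagrand; these lead to slightly different constants, and the exact match with $4/3$ in the last term depends on using the form $\exp\bigl(-mt^2/(2(\sigma^2+Bt/3))\bigr)$ with $B=2M$. Bousquet's inequality applied to $\sup_{\|v\|\leq 1}\langle \bar{S}_m,v\rangle$ does yield this form (with the weak variance $\sup_v\mathbb{E}\langle\xi_j,v\rangle^2\leq\sigma^2$), so the route is valid, but you should commit to one source and verify its constants rather than listing alternatives. In the cited reference the argument proceeds through a Hilbert-space Bernstein inequality of Pinelis/Yurinskii type, which is morally the same mechanism.
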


\begin{proposition}\citep{deshmukh19, lei2019data}
\label{prop:lei}
Let $\widetilde{\ZCal} = \widetilde{\XCal} \times \YCal$ be (extended) input and output space pair and $\tilde{\SCal} = \big\{ \tilde{z}_1,\ldots, \tilde{z}_n\big\}$. Let $H_{k}$ be a RKHS defined on $\widetilde{\XCal}$, with $k$ being the associated kernel. Let 
$$
\FCal^{k}_{B} = \big\{(f_1,\ldots, f_{|\YCal|})~:~f_y \in H_{k}~\forall y \in \YCal~\text{and}~\Big(\sum_{y \in \YCal}\|f_y\|^p_{H_{k}}\Big)^{1/p} \leq B\big\}
$$
and $\ell : \R^{|\YCal|} \times \YCal \to \R$ be a Lipschitz function in its first argument, i.e.,
$$
\vert \ell(s_1, y) - \ell(s_2, y)\vert \leq L_{\ell, 1} \| s_1 - s_2 \|_{\infty} \quad \forall s_1, s_2 \in \R^{|\YCal|}~\text{and}~y \in \YCal.
$$ 
Then the Rademacher complexity of the induced function class $\ell \circ \FCal^{k}_{B}:= \{ \ell\circ f: f \in \FCal^{k}_{B} \}$ satisfies
\begin{align}
\RF_{\tilde{\SCal}}\big( \ell\circ\FCal^{k}_{B} \big) &:= \mathbb{E}_{\sigma_i}\Big[\sup_{f \in \FCal^{k}_{B} }\frac{1}{n}\sum_{i \in [n]} \sigma_i \ell\big(f(\tilde{x}_i), y_i\big)\Big] \nonumber \\
&\leq 16 L_{\ell, 1} \sqrt{\log 2} B \sup_{\tilde{x} \in \tilde{\XCal}} \sqrt{k(\tilde{x}, \tilde{x})}n^{-\frac{1}{2}}|\YCal|^{\frac{1}{2} - \frac{1}{\max\{2, p\}}} \left(1 + \log ^{\frac{3}{2}} \sqrt{2} n |\YCal| \right).
\end{align}
Note that $\bm{\sigma}= (\sigma_1,\ldots, \sigma_n)$ denotes $n$ i.i.d. Rademacher random variable.
\end{proposition}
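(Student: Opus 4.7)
The plan is to split the uniform deviation by introducing an \emph{oracle} empirical risk that uses the \emph{true} local distributions in place of their kernel-mean-embedded estimates:
\begin{align*}
\widetilde{R}^{\rm ex}_{\ell}(f) \;:=\; \frac{1}{n}\sum_{i=1}^{n} \ell\big(f(x_i, \DSf^{x_i, r}), y_i\big).
\end{align*}
Writing $\widehat{R}^{\rm ex}_\ell - R^{\rm ex}_\ell = (\widehat{R}^{\rm ex}_\ell - \widetilde{R}^{\rm ex}_\ell) + (\widetilde{R}^{\rm ex}_\ell - R^{\rm ex}_\ell)$ isolates (i)~the \emph{embedding-estimation} error caused by using $\Psi(\hat{\DSf}^{x_i,r})$ rather than $\Psi(\DSf^{x_i,r})$ inside the extended feature map, and (ii)~a \emph{standard} uniform-convergence gap for the kernel class $\FCal^k_B$ over i.i.d.\ inputs $(x_i, \DSf^{x_i,r}, y_i)$. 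The three terms in the theorem match this split plus one concentration penalty: the Rademacher piece comes from (ii), the middle Hölder-$\alpha$ piece from (i), and the $\sqrt{\log(1/\delta)/n}$ tail from a McDiarmid step on (ii). Noting that $|f_y(\tilde x)|\le \|f_y\|_{H_k}\sqrt{k(\tilde x, \tilde x)}\le B\sqrt{M_{k_\XCal}M_\kappa}$, the loss is uniformly bounded by $M := M_\ell + L_{\ell,1} B M_{k_\XCal} M_\kappa$, which serves as the bounded-differences constant throughout.

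\textbf{Handling (ii): McDiarmid + symmetrization + Proposition~\ref{prop:lei}.} Because replacing a single example $(x_i,y_i)$ changes $\sup_f|\widetilde{R}^{\rm ex}_\ell - R^{\rm ex}_\ell|$ by at most $M/n$, McDiarmid gives, with probability at least $1-\delta$,
\begin{align*}
\sup_{f\in \FCal^k_B}\big|\widetilde{R}^{\rm ex}_\ell(f) - R^{\rm ex}_\ell(f)\big| \;\le\; \mathbb{E}\sup_{f\in \FCal^k_B}\big|\widetilde{R}^{\rm ex}_\ell(f) - R^{\rm ex}_\ell(f)\big| \;+\; M\sqrt{\frac{\log(1/\delta)}{2n}}.
\end{align*}
Standard symmetrization bounds the expectation by $2\,\bar{\RF}_n(\ell\circ \FCal^k_B)$, and Proposition~\ref{prop:lei} applied with $p=2$ (since $\Omega(f)\le B^2$ is exactly the $\ell_{2,2}$-ball) and with $\sup_{\tilde x}\sqrt{k(\tilde x,\tilde x)}\le \sqrt{M_{k_\XCal}M_\kappa}\le M_{k_\XCal}M_\kappa$ via Assumption~\ref{assum:kernel2}, yields the first (Rademacher) term in the statement. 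This step is mostly book-keeping.

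\textbf{Handling (i): Lipschitz + Hölder + Proposition~\ref{prop:hoeff}.} For each $i$, the Lipschitz property of $\ell$ and Cauchy--Schwarz in $H_k$ give
\begin{align*}
\big|\ell(f(x_i,\hat{\DSf}^{x_i,r}),y_i) - \ell(f(x_i,\DSf^{x_i,r}),y_i)\big| \;\le\; L_{\ell,1}\max_y\|f_y\|_{H_k}\,\big\|\phi_k(\tilde x_i)-\phi_k(\tilde x_i^{\mathrm{pop}})\big\|_{H_k}.
\end{align*}
The product kernel structure \eqref{eq:extended-kernel} factorizes $\phi_k(\tilde x)=\phi_{k_\XCal}(x)\otimes \phi_\kappa(\Psi(\DSf^{x,r}))$, so the last norm is at most $\sqrt{M_{k_\XCal}}\,\big\|\phi_\kappa(\Psi(\hat{\DSf}^{x_i,r}))-\phi_\kappa(\Psi(\DSf^{x_i,r}))\big\|_{H_\kappa}$, and by Hölder continuity (Assumption~\ref{assum:kernel3}) this is at most $L'\,\|\Psi(\hat{\DSf}^{x_i,r})-\Psi(\DSf^{x_i,r})\|_{H_{k_\ZCal}}^{\alpha}$. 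Conditionally on $x_i$, the empirical mean embedding is an i.i.d.\ average of $|\RCal^{x_i}|$ bounded $H_{k_\ZCal}$-valued variables, so Proposition~\ref{prop:hoeff} applied with failure probability $\delta/n$ controls this Hilbert-norm in terms of $M_{k_\ZCal}$ and $|\RCal^{x_i}|$. A union bound over $i\in[n]$, combined with a further event of probability $1-\delta$ guaranteeing $|\RCal^{x_i}|\ge N(r,\delta/n)$ for all $i$ (invoking \eqref{eq:bound_rx} with parameter $\delta/n$ and union bounding), upper bounds each embedding norm by the quantity inside the $(\cdot)^\alpha$ in the theorem. Using $\max_y\|f_y\|_{H_k}\le B$ and averaging over $i$ yields the second term uniformly over $\FCal^k_B$. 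A final union bound over the three bad events (McDiarmid, embedding concentration, $|\RCal^{x_i}|$ lower bound) explains the overall probability $1-3\delta$.

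\textbf{Main obstacle.} The main subtlety is the \emph{simultaneous} control of the $n$ empirical kernel-mean embeddings without either washing out the $N(r,\delta/n)$ dependence (e.g., to a worse $N(r,\delta/n^2)$) or creating a $\log n$ blow-up \emph{inside} the Hölder exponent $\alpha$. This is resolved by budgeting $\delta/n$ for each of the two union bounds (Hoeffding tail per sample, and retrieved-set lower bound per sample) and by bounding term (i) via the \emph{deterministic} Lipschitz–Hölder chain rather than through a separate symmetrization — so the Hölder exponent hits only the concentration bound from Proposition~\ref{prop:hoeff}, not a Rademacher complexity. Careful accounting of these three failure events, together with matching the kernel boundedness constants from Assumption~\ref{assum:kernel2} to the forms required by Propositions~\ref{prop:hoeff} and \ref{prop:lei}, produces the stated expression exactly.
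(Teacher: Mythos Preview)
Your proposal does not address the stated proposition at all. Proposition~\ref{prop:lei} is a Rademacher complexity bound for the kernel class $\ell\circ\FCal^k_B$, quoted from \citet{deshmukh19,lei2019data}; the paper provides no proof for it and simply invokes it as a black box. What you have written is instead a proof sketch for Theorem~\ref{thm:kernel-app} --- the main generalization bound for classification in the extended feature space --- which \emph{uses} Proposition~\ref{prop:lei} as one of its ingredients. You even invoke ``Proposition~\ref{prop:lei} applied with $p=2$'' inside your argument, which would be circular if you were actually trying to establish Proposition~\ref{prop:lei}.

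For what it is worth, your sketch of Theorem~\ref{thm:kernel-app} follows the paper's proof essentially line for line: the same oracle-risk decomposition into an embedding-estimation term and a standard uniform-convergence term, the same Lipschitz--H\"older--Hoeffding chain (via Proposition~\ref{prop:hoeff}) with a $\delta/n$ union bound for term~I, and the same McDiarmid--symmetrization--Proposition~\ref{prop:lei} pipeline for term~II, arriving at the $1-3\delta$ probability via three union-bounded events. But this is a proof of the wrong statement. If you intend to prove Proposition~\ref{prop:lei} itself, you would need the covering-number machinery of \citet{lei2019data} for multiclass RKHS balls, which is entirely absent here.
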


\begin{proof}[{Proof of Theorem~\ref{thm:kernel-app}}]
Note that 
\begin{align}
\label{eq:ktermIa}
    &\sup_{f \in \FCal^{k}_{B}} \big\vert\widehat{R}^{\rm ex}_{\ell}(f) - R^{\rm ex}_{\ell}(f) \big\vert = \sup_{f \in \FCal^{k}_{B}} \bigg\vert\frac{1}{n}\sum_{i=1}^{n} \ell\big(f(x_i, \widehat{\DSf}^{x_i, r}), y_i\big) - \mathbb{E}_{(X, Y) \sim \DSf}\left[\ell\big(f(X,\DSf^{X, r}), Y\big)\right] \bigg\vert \nonumber \\
    &  \qquad \qquad \qquad \leq \underbrace{\sup_{f \in \FCal^{k}_{B}} \bigg\vert \frac{1}{n}\sum_{i=1}^{n} \ell\big(f(x_i, \widehat{\DSf}^{x_i, r}), y_i\big) - \frac{1}{n}\sum_{i=1}^{n} \ell\big(f(x_i, \DSf^{x_i, r}), y_i\big) \bigg\vert}_{{\rm I}} + \nonumber \\
    &  \qquad \qquad \qquad \qquad \underbrace{\sup_{f \in \FCal^{k}_{B}} \bigg\vert \frac{1}{n}\sum_{i=1}^{n} \ell\big(f(x_i, \DSf^{x_i, r}), y_i\big)  - \mathbb{E}_{(X, Y) \sim \DSf}\left[\ell\big(f(X,\DSf^{X, r}), Y\big)\right] \bigg\vert}_{{\rm II}}
\end{align}
{\bf Bounding the term-{\rm I} in \eqref{eq:ktermIa}.}~Note that %
\begin{align}
\label{eq:ktermIaa}
{\rm I} &= \sup_{f \in \FCal^{k}_{B}} \bigg\vert \frac{1}{n}\sum_{i=1}^{n} \ell\big(f(x_i, \widehat{\DSf}^{x_i, r}), y_i\big) - \frac{1}{n}\sum_{i=1}^{n} \ell\big(f(x_i, \DSf^{x_i, r}), y_i\big) \bigg\vert \nonumber \\
& \leq \frac{L_{\ell, 1}}{n}\sum_{i \in [n]} \|f(x_i, \widehat{\DSf}^{x_i, r}) - f(x_i, \DSf^{x_i, r})\|_{\infty} \nonumber \\
& \leq \frac{L_{\ell, 1}}{n}\sum_{i \in [n]} \max_{y \in \YCal}\vert f_y(x_i, \widehat{\DSf}^{x_i, r}) - f_y(x_i, \DSf^{x_i, r})\vert \nonumber \\
& \leq L_{\ell, 1} \cdot \max_{y \in \YCal} \; \max_{i \in [n]}  \vert f_y(x_i, \widehat{\DSf}^{x_i, r}) - f_y(x_i, \DSf^{x_i, r})\vert 
\end{align}
It follows from the reproducing property of the kernel $k$ that, for any $y \in \YCal$,
\begin{align}
\label{eq:ktermIb}
    \vert f_y(x_i, \widehat{\DSf}^{x_i, r}) - f_y(x_i, \DSf^{x_i, r}) \vert & = \vert \langle f_y, k((x_i, \widehat{\DSf}^{x_i, r}), \cdot) - k((x_i, \DSf^{x_i, r}), \cdot) \rangle  \vert \nonumber \\
    & \leq \|f_y\|_{H_{k}} \cdot \| k((x_i, \widehat{\DSf}^{x_i, r}), \cdot) - k((x_i, \DSf^{x_i, r}), \cdot)\|_{H_{k}}.
\end{align}
Now, 
\begin{align}
\label{eq:ktermIc}
&\| k((x_i, \widehat{\DSf}^{x_i, r}), \cdot) - k((x_i, \DSf^{x_i, r}), \cdot)\|_{H_{k}} \nonumber  \\
&=   \left( k((x_i, \widehat{\DSf}^{x_i, r}), (x_i, \widehat{\DSf}^{x_i, r})) + k((x_i, \DSf^{x_i, r})), (x_i, \DSf^{x_i, r})) - 2 k((x_i, \widehat{\DSf}^{x_i, r}), (x_i, \DSf^{x_i, r})) \|_{H_{k}}  \right)^{1/2} \nonumber  \\
&=  \sqrt{k_{\XCal}(x_i,x_i)} \Big( \kappa(\Psi(\widehat{\DSf}^{x_i, r}), \Psi(\widehat{\DSf}^{x_i, r})) + \kappa( \Psi(\DSf^{x_i, r})), \Psi(\DSf^{x_i, r})) \; - 
2 \kappa(\Psi(\widehat{\DSf}^{x_i, r}), \Psi(\DSf^{x_i, r})) \|_{H_{k}}  \Big)^{1/2}\nonumber \\
&= \sqrt{k_{\XCal}(x_i,x_i)} \| \kappa( \Psi(\widehat{\DSf}^{x_i, r}), \cdot) - \kappa(\Psi(\DSf^{x_i, r}), \cdot)  \|_{H_{\kappa}} \nonumber \\
&\leq M_{k_{\XCal}}\| \kappa(\Psi(\widehat{\DSf}^{x_i, r}), \cdot) - \kappa(\Psi(\DSf^{x_i, r}), \cdot)  \|_{H_{\kappa}} \\
&=  M_{k_{\XCal}} \| \varphi_{\kappa}(\Psi(\widehat{\DSf}^{x_i, r})) - \varphi_{\kappa}(\Psi(\DSf^{x_i, r}))  \|_{H_{\kappa}} \nonumber  \\
& \leq L'  M_{k_{\XCal}}  \cdot \|\Psi(\widehat{\DSf}^{x_i, r}) - \Psi(\DSf^{x_i, r})\|^{\alpha}_{H_{k_{\ZCal}}}
\end{align}
By combining \eqref{eq:ktermIb} and \eqref{eq:ktermIc}, we obtain that
\begin{align}
\label{eq:ktermId}
 \vert f_y(x_i, \widehat{\DSf}^{x_i, r}) - f_y(x_i, \DSf^{x_i, r}) \vert  \leq L' M_{k_{\XCal}} \cdot \|f_y\|_{H_{k}} \cdot  \|\Psi(\widehat{\DSf}^{x_i, r}) - \Psi(\DSf^{x_i, r})\|^{\alpha}_{H_{k_{\ZCal}}}.
\end{align}
Now, Hoeffding's inequality in Hilbert spaces (cf.~Proposition~\ref{prop:hoeff}) implies that, for $i \in [n]$, the following holds with probability at least $1- \delta$.
\begin{align}
\label{eq:ktermIe}
\|\Psi(\widehat{\DSf}^{x_i, r}) - \Psi(\DSf^{x_i, r})\|^{\alpha}_{H_{k_{\ZCal}}} &= \Big\| \frac{1}{|\RCal^{x_i}|}\sum_{(x', y') \in \RCal^{x_i}}k_{\ZCal}((x', y'), \cdot) - \mathbb{E}_{\DSf^{x_i, r}}\big[ k_{\ZCal}((X', Y'), \cdot)\big]\Big\|_{H_{k_{\ZCal}}} \nonumber \\
& \leq M_{k_{\ZCal}} \sqrt{\frac{2\log(1/\delta)}{|\RCal^{x_i}|}} + M_{k_{\ZCal}}\sqrt{\frac{1}{|\RCal^{x_i}|}} + \frac{4M_{k_{\ZCal}}\log(1/\delta)}{3|\RCal^{x_i}|}.
\end{align}
It follows from \eqref{eq:ktermId} and \eqref{eq:ktermIe} that, for each $i \in [n]$,
\begin{align}
\label{eq:ktermIf}
 &\vert f_y(x_i, \widehat{\DSf}^{x_i, r}) - f_y(x_i, \DSf^{x_i, r}) \vert \nonumber \\
 & \quad \leq L' M_{k_{\XCal}} \cdot \|f_y\|_{H_{k}} \cdot \Big(M_{k_{\ZCal}} \sqrt{\frac{2\log(\frac{1}{\delta})}{|\RCal^{x_i}|}} \; +  M_{k_{\ZCal}}\sqrt{\frac{1}{|\RCal^{x_i}|}} + \frac{4M_{k_{\ZCal}}\log(\frac{1}{\delta})}{3|\RCal^{x_i}|}\Big)^{\alpha} \quad \forall~y \in \YCal
\end{align}
holds with probability at least $1 - \delta$. Next, taking union bound over $i \in [n]$ implies that the following holds for all $i \in [n]$ and $y \in \YCal$ with probability at least $1- \delta$.
\begin{align}
\label{eq:ktermIg-2}
 &\vert f_y(x_i, \widehat{\DSf}^{x_i, r}) - f_y(x_i, \DSf^{x_i, r}) \vert \nonumber \\
 &\qquad \leq L' M_{k_{\XCal}}  \|f_y\|_{H_{k}} \Bigg(M_{k_{\ZCal}} \sqrt{\frac{2\log(n/\delta)}{|\RCal^{x_i}|}} \; +  M_{k_{\ZCal}}\sqrt{\frac{1}{|\RCal^{x_i}|}} + \frac{4M_{k_{\ZCal}}\log(n/\delta)}{3|\RCal^{x_i}|}\Bigg)^{\alpha}.
\end{align}
Recall that, for each $i \in [n]$, we have $|\RCal^{x_i}| \geq N(r, \delta)$ with probability at least $1- \delta$ (cf.~\eqref{eq:bound_rx}). Using union bound, we have $|\RCal^{x_i}| \geq N(r, \delta/n),~\forall~i\in[n]$, with probability at least $1 - \delta$. Thus, the following holds for all $i \in [n]$ and $y \in \YCal$ with probability at least $1 - 2\delta$
\begin{align}
\label{eq:ktermIg}
 &\vert f_y(x_i, \widehat{\DSf}^{x_i, r}) - f_y(x_i, \DSf^{x_i, r}) \vert \nonumber \\
 &\qquad \leq L' M_{k_{\XCal}}  \|f_y\|_{H_{k}} \Bigg(M_{k_{\ZCal}} \sqrt{\frac{2\log(n/\delta)}{N(r, \delta/n)}} \; + M_{k_{\ZCal}}\sqrt{\frac{1}{N(r, \delta/n)}} + \frac{4M_{k_{\ZCal}}\log(n/\delta)}{3N(r, \delta/n)}\Bigg)^{\alpha}.
\end{align}
By using $\|f_y\|_{H_{k}} \leq B$ and combining \eqref{eq:ktermIaa} with \eqref{eq:ktermIg}, we obtain that
\begin{align}
\label{eq:ktermIh}
{\rm I} \leq L_{\ell, 1} L' M_{k_{\XCal}} B  \left(M_{k_{\ZCal}} \sqrt{\frac{2\log(n/\delta)}{N(r, \delta/n)}} + M_{k_{\ZCal}}\sqrt{\frac{1}{N(r, \delta/n)}} + \frac{4M_{k_{\ZCal}}\log(n/\delta)}{3N(r, \delta/n)}\right)^{\alpha}
\end{align}
holds with probability at least $1 - 2\delta$.

{\bf Bounding the term-{\rm II} in \eqref{eq:ktermIa}.} Note that
\begin{align}
\label{eq:kTermIIccc}
{\rm II} = \sup_{f \in \FCal^{k}_{B}} \bigg\vert \frac{1}{n}\sum_{i=1}^{n} \ell\big(f(x_i, \DSf^{x_i, r}), y_i\big)  - \mathbb{E}_{(X, Y) \sim \DSf}\left[\ell\big(f(X,\DSf^{X, r}), Y\big)\right] \bigg\vert
\end{align}

Using the Assumptions~\ref{assum:kernel1} and \ref{assum:kernel2} and the fact that $f \in \FCal^{k}_{B}$, we can argue that 
\begin{align*}
\ell\big(f(x, \DSf^{x, r}), y\big) &= \ell(0, y) + | \ell\big(f(x, \DSf^{x, r}), y\big) - \ell(0, y) | \nonumber \\
& \leq M_{\ell} + L_{\ell, 1}\| f(x, \DSf^{x, r})\|_{\infty} \nonumber \\
&\leq M_{\ell}  + L_{\ell, 1} \max_{y' \in \YCal} \big\vert \langle f_{y'}, k\big((x, \DSf^{x, r}), \cdot \big) \rangle\big\vert \nonumber \\
&\leq M_{\ell}  + L_{\ell, 1} \max_{y' \in \YCal} \|f_{y'}\|_{H_k} M_{k} \nonumber \\
& \leq M_{\ell}  + L_{\ell, 1} R M_{k} \leq M_{\ell}  + L_{\ell, 1} R M_{k_{\XCal}}M_{{\kappa}} := M
\end{align*}
Now, it follows from the Azuma-McDiarmid's inequality that 
the following holds with probability at least $1 - \delta$.
\begin{align}
\label{eq:kTermIIbbb}
&\sup_{f \in \FCal^{k}_{B}} \bigg\vert \frac{1}{n}\sum_{i=1}^{n} \ell\big(f(x_i, \DSf^{x_i, r}), y_i\big)  - \mathbb{E}_{(X, Y) \sim \DSf}\left[\ell\big(f(X,\DSf^{X, r}), Y\big)\right] \bigg\vert \nonumber \\
& \leq \mathbb{E}\left[ \sup_{f \in \FCal^{k}_{B}} \bigg\vert \frac{1}{n}\sum_{i=1}^{n} \ell\big(f(x_i, \DSf^{x_i, r}), y_i\big)  - \mathbb{E}_{(X, Y) \sim \DSf}\Big[\ell\big(f(X,\DSf^{X, r}) Y\big)\Big]\bigg\vert\right] \; 
+  M \sqrt{\frac{\log(1/\delta)}{2n}},
\end{align}
Using the standard symmetrization procedure, we get that
\begin{align*}
 & \mathbb{E}\left[ \sup_{f \in \FCal^{k}_{B}} \bigg\vert \frac{1}{n}\sum_{i=1}^{n} \ell\big(f(x_i, \DSf^{x_i, r}), y_i\big)  - \mathbb{E}_{(X, Y) \sim \DSf}\Big[\ell\big(f(X,\DSf^{X, r}) Y\big)\Big]\bigg\vert\right] \\
 & \qquad  \qquad  \leq \frac{2}{n} \cdot \mathbb{E}_{(X_i, Y_i) \sim \DSf}\mathbb{E}_{\sigma_i} \left[ \sum_{i \in [n]} \sigma_i \ell\Big( f(x_i, \DSf^{x_i, r}), y_i\Big)\right], \\
 & \qquad  \qquad  = 2\bar{\RF}_{\widetilde{\SCal}}\big( \ell\circ\FCal^{k}_{B} \big)
\end{align*}
where $\bm{\sigma} = (\sigma_1,\ldots, \sigma_n)$ denotes $n$ i.i.d. Rademacher random variables and $\bar{\RF}_{\widetilde{\SCal}}\big( \ell\circ\FCal^{k}_{B} \big)$ denote the Rademarcher complexity of the function class %
\begin{align*}
\ell\circ\FCal^{k}_{B} = \Big\{ (x, y, \DSf^{x, r}) \mapsto \ell\big(f(x, \DSf^{x, r}), y \big)~:~f \in \FCal^{k}_{B}\Big\}.
\end{align*}
Now, using Proposition~\ref{prop:lei} with $p=2$ and Assumption~\ref{assum:kernel2}, we have
\begin{align}
\label{eq:kTermIIaaa}
\bar{\RF}_{\tilde{\SCal}}\big( \ell\circ\FCal^{k}_{B} \big) &\leq 16 L_{\ell, 1} \sqrt{\log 2} B \sup_{\tilde{x} \in \tilde{\XCal}} \sqrt{k(\tilde{x}, \tilde{x})}n^{-\frac{1}{2}}
\left(1 + \log ^{\frac{3}{2}} \sqrt{2} n |\YCal| \right) \nonumber \\
&\leq 16 L_{\ell, 1} \sqrt{\log 2} B M_{\kappa}M_{k_{\XCal}}n^{-\frac{1}{2}}
\left(1 + \log ^{\frac{3}{2}} \sqrt{2} n |\YCal| \right)
\end{align}
Now, by combining \eqref{eq:kTermIIccc}, \eqref{eq:kTermIIbbb}, and \eqref{eq:kTermIIaaa}, we obtain that with probability at least $1 - \delta$
\begin{align}
\label{eq:kTermIIId}
{\rm II} \leq 32 \sqrt{\log 2}  L_{\ell, 1} B M_{\kappa}M_{k_{\XCal}}n^{-\frac{1}{2}}
\left(1 + \log ^{\frac{3}{2}} \sqrt{2} n |\YCal| \right) + M \sqrt{\frac{\log(1/\delta)}{2n}}.
\end{align}
Finally, combining \eqref{eq:ktermIa}, \eqref{eq:ktermIh} and \eqref{eq:kTermIIId} completes the proof.
\end{proof}

\section{Additional details for experiments}
\label{sec:app-expt}

\subsection{Synthetic}
\label{sec:app-synthetic}

\begin{wrapfigure}{r}{0.36\textwidth}
    \vspace{-6mm}
    \centering
    \includegraphics[width=\linewidth]{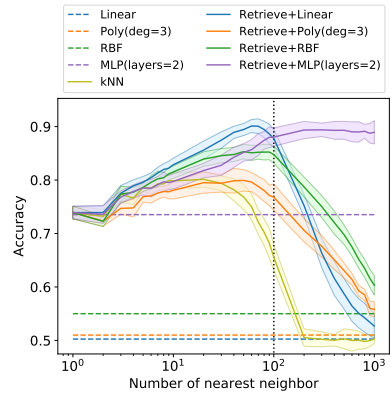}
    \vspace{-7mm}
    \caption{Performance of ERM and local ERM for various models on synthetic data.}
    \label{fig:expt}
    \vspace{-4mm}
\end{wrapfigure}
\textbf{Task and data.}~We consider the task of binary classification on mixtures using \emph{synthetic data}:
In particular, we assume $k = 100$ clusters in a $D=10$-dimensional space. %
Each cluster is specified by a mean parameter $\mu_i\in\R^D \sim \mathsf{Uniform}(-10, 10)$ and a classification weight vector $w_i\in\R^d \sim \mathcal{N}(0, \mathbb{I}) $ for $i=1,2,\cdots,k$.
We randomly generate a train set of $n=10000$ points as follows: To generate a labeled example $(x_j, y_j), j \in [n]$: 1) select a cluster $i$ uniformly at random, and 2) sample $x_j \sim \mathcal{N}(\mu_i, \mathbb{I})$ and its label $y_j = \mathrm{sign}(w_i^T (x_j - \mu_i))$. %
Additionally, we also generate another set of points as test set using the same procedure.

\textbf{Methods }
As baseline, we consider models of various complexity, starting from simple linear classifier, to support vector machines with polynomial kernel (of degree 3) and with radial basis function (RBF) kernel, to a multi-layer perceptron (MLP) of two layers. For retrieval-based models, we consider each of the above method as the local model to fit on retrieved data points via local ERM framework (Sec.~\ref{sec:local}). %
Additionally, we also report simple kNN baseline. 
We compare all these methods using classification accuracy on the held out test set. We repeat all the experiments 10 times. %

\textbf{Observations }
In Figure~\ref{fig:expt}, we observe the tradeoff of varying the size of the retrieved set (as dictated by the neighborhood radius) on the performance of the proposed algorithms. We see that when the number of retrieved samples is small the local methods have lower accuracy, this is due to large generalization error. When the size of the retrieved sample space is high, the local methods fail to minimize the loss effectively due to the lack of model capacity. We see that this effect being more pronounced for simpler function classes such as linear classifier as compared to RBF or polynomial classifiers. %

\subsection{CIFAR-10}
\begin{wrapfigure}{r}{0.36\textwidth}
    \vspace{-6mm}
    \centering
    \includegraphics[width=\linewidth]{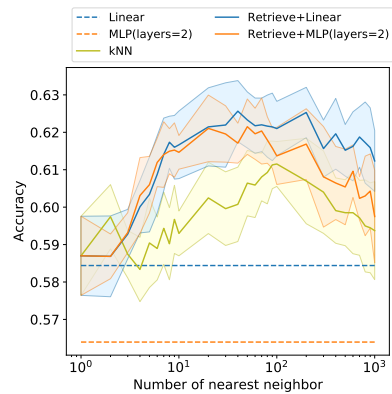}
    \vspace{-7mm}
    \caption{Performance of ERM and local ERM for various models on (binary) CIFAR-10.}
    \label{fig:real-expt}
    \vspace{-7mm}
\end{wrapfigure}
\textbf{Task and data.}~We consider the task of binary classification on a \emph{real image data} for object detection.
In particular, we consider a subset of CIFAR-10 dataset where we only restrict to images from "Cat" and "Dog" classes.
We randomly partition the data into a train set of $n=10000$ points and remaining $2000$ points for test. We do a 10-fold cross-validation.

\textbf{Methods } We consider a subset of method from Appendix.~\ref{sec:app-synthetic}. In particular, we only consider a simple linear classifier and a multi-layer perceptron (MLP) of two layers. For retrieval-based models, we consider each of the above methods as the local model to fit on retrieved data points via local ERM framework (Sec.~\ref{sec:local}). The retrieval is done using L2 distance in the input space directly (no features is extracted). Additionally, we also report simple kNN baseline. We compare all these methods using classification accuracy on the held out test set. We repeat all the experiments 10 times. %

\textbf{Observations }
Similar to Figure~\ref{fig:expt}, %
Figure~\ref{fig:real-expt} exhibits a tradeoff, where varying the size of the retrieved set (as dictated by the neighborhood radius) impacts the performance of the proposed algorithms. We see when the number of retrieved samples is small the local methods have lower accuracy, this is due to large generalization error; and when the number of retrieved samples is large, simple local function class incurs a large approximation error.

\subsection{ImageNet}
\textbf{Task and data.}~We consider the task of 1000-way image classification on ImageNet ILSVRC-12 dataset.
We use the standard train-test set split, where we have of $n=1281167$ points for training and  $50000$ points for test. Given large computational cost, we could only run each experiment once.

\textbf{Methods } 
We compare proposed Local ERM (Sec.~\ref{sec:local}) to state-of-the-art (SoTA) single model published for this task, which is from the most recent CVPR 2022~\citep{zhai2022scaling}.
For the local parametric model we use a small MobileNetV3 architecture~\citep{Howard_2019_ICCV} with 4.01M parameters and 156 MFLOPs compute cost. 
Contrast this to SoTA model ViT-G/14 with 1.84B parameters and 938 GFLOPs compute cost.
Following standard practice in literature, we use unsupervised learned features from %
ALIGN~\citep{jia2021scaling} to do image retrieval using L2 distance. 
For solving the local ERM, we fine-tune a MobileNetV3 model, which has been pretrained on ImageNet, on the retrieved set using Adam optimizer with a linear decay schedule.
Additionally, we also report simple kNN baseline. 
We compare all these methods using classification accuracy on the held out test set. 

\begin{wrapfigure}{r}{0.36\textwidth}
    \vspace{-5mm}
    \centering
    \includegraphics[width=\linewidth]{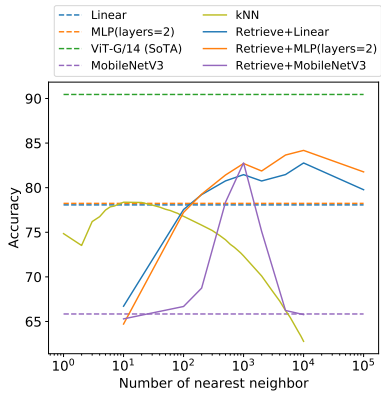}
    \vspace{-7mm}
    \caption{Performance of ERM and local ERM for various models on on ImageNet.}
    \label{fig:imagenet-expt}
    \vspace{-5mm}
\end{wrapfigure}
\textbf{Observations }
In Figure~\ref{fig:imagenet-expt}, we see that local ERM with a small MobileNet-V3 model is able to achieve the top-1 accuracy of 82.78 whereas a regularly trained MobileNet-V3 model achieves the top-1 accuracy of only 65.80. Also the result is very competitive with SoTA of 90.45 with a \emph{much larger model}. Thus, the result suggest that the simple local ERM framework (analyzed in our work) is able to demonstrate the utility of retrieval-based models.
In particular, it allows a realistic small sized model to attain very competitive numbers on the popular ImageNet benchmark.
Furthermore, as pointed at end of Sec.~\ref{sec:glocal}, using global representation from ALIGN embeddings help simplest linear model to outperform MobileNet-V3 working directly on image input, thereby showcasing the benefits of endowing local ERM with global representation.

\end{document}